\documentclass{article}
\usepackage[dvips,letterpaper,margin=1.0in]{geometry}
\usepackage{sty}
\usepackage{authblk}
\makeatletter
\renewcommand*{\@fnsymbol}[1]{\ensuremath{\ifcase#1\or *\or \ddagger\or
    \mathsection\or \mathparagraph\or \|\or **\or \dagger\dagger
    \or \ddagger\ddagger \else\@ctrerr\fi}}
\makeatother

\usepackage[
backend=biber,
backref=true,
style=alphabetic,
citestyle=alphabetic,
maxnames=99,
maxalphanames=4,
minalphanames=4,
date=year]{biblatex}
\DefineBibliographyStrings{english}{%
  backrefpage = {page},
  backrefpages = {pages},
}
\addbibresource{main.bib}

\title{Cryptographic Hardness of Score Estimation}
\author{Min Jae Song\thanks{Paul G.~Allen School of Computer Science and Engineering, University of Washington, \texttt{mjsong32@cs.washington.edu}.}}
\begin{document}

\maketitle

\begin{abstract}
We show that $L^2$-accurate score estimation, in the absence of strong assumptions on the data distribution, is computationally hard even when sample complexity is polynomial in the relevant problem parameters. Our reduction builds on the result of Chen et al.~(ICLR 2023), who showed that the problem of generating samples from an unknown data distribution reduces to $L^2$-accurate score estimation. Our hard-to-estimate distributions are the ``Gaussian pancakes'' distributions, originally due to Diakonikolas et al.~(FOCS 2017), which have been shown to be computationally indistinguishable from the standard Gaussian under widely believed hardness assumptions from lattice-based cryptography (Bruna et al., STOC 2021; Gupte et al., FOCS 2022).
\end{abstract}


\section{Introduction}
\label{sec:intro}

Diffusion models~\cite{sohl2015deep,song2019generative,ho2020denoising,song2021train} have firmly established themselves as a powerful approach to generative modeling, serving as the foundation for leading image generation models such as DALL-E 2~\cite{ramesh2022hierarchical}, Imagen~\cite{saharia2022photorealistic}, and Stable Diffusion~\cite{rombach2022high}. A diffusion model consists of a pair of forward and reverse processes. In the forward process, noise drawn from a standard distribution, such as the standard Gaussian, is sequentially applied to data samples, leading its distribution to a pure noise distribution in the limit. The reverse process, as the name suggests, reverses the noising process and takes the pure noise distribution ``backward in time'' to the original data distribution, thereby allowing us to generate new samples from the data distribution. A key element in implementing the reverse process is the \emph{score function} of the data distribution, which is the gradient of its log density. Since the data distribution is typically unknown, the score function must be learned from samples~\cite{hyvarinen2005estimation,vincent2011connection,song2019generative}.
 
Recent advances in the theory of diffusion models have revealed that the task of sampling, in fact, reduces to score estimation under minimal assumptions on the data distribution~\cite{block2020generative,de2021diffusion,liu2022let,pidstrigach2022score,de2022convergence,chen2022sampling,lee2023convergence,benton2024nearly}. In particular, the work of~\cite{chen2022sampling} has shown that $L^2$-accurate score estimates along the forward process are sufficient for efficient sampling. Thus, assuming access to an oracle for $L^2$-accurate score estimation, one can efficiently sample from essentially any data distribution. However, this leaves open the question of whether score estimation oracles themselves can be implemented efficiently, in terms of both required sample size and computation, for interesting classes of distributions. 

We show that $L^2$-accurate score estimation, in the absence of strong assumptions on the data distribution, is computationally hard, even when sample complexity is polynomial in the relevant problem parameters. This establishes a \emph{statistical-to-computational gap} for $L^2$-accurate score estimation, which refers to an intrinsic gap between between what is statistically achievable and computationally feasible. Our hard-to-estimate distributions are the ``Gaussian pancakes'' distributions,\footnote{Due to their close connections to the learning with errors (LWE) problem~\cite{regev2009lattices} from lattice-based cryptography~\cite{micciancio2009lattice,peikert2016decade}, they are also referred to as \emph{homogeneous continuous learning with errors} (hCLWE) distributions~\cite{bruna2020continuous,bogdanov2022public}} which previous works~\cite{diakonikolas2017statistical,bruna2020continuous,gupte2022continuous} have shown are computationally indistinguishable from the standard Gaussian under plausible and widely believed hardness assumptions. In fact, ``breaking'' the hardness of Gaussian pancakes, by means of an efficient detection or estimation algorithm, has profound implications for lattice-based cryptography, which is central to the post-quantum cryptography standardization led by the National Institute of Standards and Technology (NIST)~\cite{nist}. Building on the sampling-to-score estimation reduction by Chen et al.~\cite{chen2022sampling}, we show that computationally efficient $L^2$-accurate score estimation for Gaussian pancakes implies an efficient algorithm for distinguishing Gaussian pancakes from the standard Gaussian. Thus, while sampling may ultimately reduce to $L^2$-accurate score estimation under minimal assumptions on the data distribution, score estimation itself requires stronger assumptions on the data distribution for computational feasibility. It is worth noting that the presence of statistical-to-computational gaps in $L^2$-accurate score estimation was anticipated by Chen et al.~\cite[Section 1.1]{chen2022sampling}, who mentioned it without formal statement or proof.

\subsection{Main contributions}
Our main result is a simple reduction from the Gaussian pancakes problem (i.e., the problem of distinguishing Gaussian pancakes from the standard Gaussian) to $L^2$-accurate score estimation. We show that given oracle access to $L^2$-accurate score estimates along the forward process (Assumption~\ref{assume-l2score}), one can compute a test statistic that distinguishes, with non-trivial success probability, 
whether the given score estimates belong to a Gaussian pancakes distribution or the standard Gaussian.

A Gaussian pancakes distribution $P_{\bu}$ with secret direction $\bu \in \mathbb{S}^{d-1}$ can be viewed as a ``backdoored'' Gaussian. It is distributed as a (noisy) discrete Gaussian along the direction $\bu$ and as a standard Gaussian in the remaining $d-1$ directions (see~\refFig{fig:pancakes}).\footnote{An animated visualization of Gaussian pancakes can be found in~\cite{brubaker2023cryptographers}.} A \emph{class} of Gaussian pancakes $(P_{\bu})_{\bu \in \mathbb{S}^{d-1}}$ is parameterized by two parameters, $\gamma$ and $\sigma$, which govern the spacing and thickness of pancakes, respectively. For instance, a Gaussian pancakes distribution $P_{\bu}$ with spacing $\gamma$ and thickness $\sigma \approx 0$ is essentially supported on the one-dimensional lattice $(1/\gamma)\ZZ$ along the secret direction $\bu$. The Gaussian pancakes \emph{problem}, then, is a sequence of hypothesis testing problems indexed by the data dimension $d \in \NN$ in which the goal is to distinguish between samples from a Gaussian pancakes distribution (with unknown $\bu$) and the standard Gaussian distribution $\cN(0,I_d)$ with success probability slightly better than random guessing (see~\refSc{sec:prelim-gaussian-pancakes} for formal definitions). Thus, our result can be summarized informally as follows.

\begin{theorem}[Informal, see~\refThm{thm:reduction-to-score-estimation}]
\label{thm:main-informal}
Let $\gamma(d) > 1, \sigma(d) > 0$ be sequences such that $\sigma \ge 1/\poly(d)$ and the corresponding $(\gamma,\sigma)$-Gaussian pancakes distributions $(P_{\bu})_{\bu \in \mathbb{S}^{d-1}}$ all satisfy $\tv(P_{\bu}, \cN(0,I_d)) > 1/2$. Then, there exists a polynomial-time randomized algorithm with access to a score estimation oracle of $L^2$-error $O(1/\sqrt{\log d})$ that solves the Gaussian pancakes problem.
\end{theorem}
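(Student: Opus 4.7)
The plan is to build a simple hypothesis test that directly exploits the score oracle via the framework of~\cite{chen2022sampling}. The starting observation is that the score of $\cN(0, I_d)$ is identically $-\bx$, while the score of $P_{\bu}$ carries an \emph{excess} component $v(\bx) := \nabla \log P_{\bu}(\bx) + \bx$ encoding the lattice structure along the secret direction $\bu$. Since the oracle is $L^2$-accurate with error $\epsilon = O(1/\sqrt{\log d})$, an empirical estimate of $\|\hat{s}(\bx) + \bx\|^2$ over samples will be small under the Gaussian null but of order $V^2 := \|v\|_{L^2(P_{\bu})}^2$ under a Gaussian pancakes alternative, where $V^2$ is the relative Fisher information of $P_{\bu}$ with respect to $\cN(0, I_d)$.

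More concretely, I would first fix a time $t^*$ in the forward Ornstein--Uhlenbeck process at which the oracle satisfies $\|\hat{s}_{t^*} - \nabla \log p_{t^*}\|_{L^2(p_{t^*})} \le \epsilon$ (guaranteed by Assumption~\ref{assume-l2score}). Because $\cN(0, I_d)$ is OU-stationary, under the null $p_{t^*} = \cN(0, I_d)$ and the true score is still $-\bx$; under $P = P_{\bu}$, write $v_{t^*}(\bx) := \nabla \log p_{t^*}(\bx) + \bx$ and $V_{t^*} := \|v_{t^*}\|_{L^2(p_{t^*})}$. From input samples $\bx_1, \ldots, \bx_n \sim P$, the distinguisher forms $\by_i = e^{-t^*} \bx_i + \sqrt{1 - e^{-2t^*}}\, \xi_i$ with independent $\xi_i \sim \cN(0, I_d)$, so each $\by_i$ is an exact draw from $p_{t^*}$, and computes
\[
\hat{T} \;=\; \frac{1}{n} \sum_{i=1}^{n} \|\hat{s}_{t^*}(\by_i) + \by_i\|^2.
\]
The triangle inequality in $L^2(p_{t^*})$ gives $\EE[\hat{T}] \le \epsilon^2$ under the null and $\EE[\hat{T}] \ge (V_{t^*} - \epsilon)^2$ under the alternative, so a threshold test placed between these values distinguishes provided $V_{t^*}$ exceeds $\epsilon$ by a constant factor.

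The main technical obstacle is establishing $V_{t^*} \gg \epsilon$ uniformly over the Gaussian pancakes instances allowed by the hypothesis. Since $P_{\bu}$ agrees with $\cN(0, I_d)$ in the $d-1$ directions orthogonal to $\bu$, the excess score lies along $\bu$ and $V_{t^*}^2$ reduces to a one-dimensional relative Fisher information between a (possibly OU-smoothed) discrete-Gaussian-mixture marginal and $\cN(0,1)$. A direct calculation should yield $V_{t^*} = \Omega(1/\sigma)$ in the sharp-pancakes regime compatible with $\tv(P_{\bu}, \cN(0, I_d)) > 1/2$ and $\gamma > 1$, with $t^*$ taken close to $0$ to maximize this quantity. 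Granting this, standard concentration (Chebyshev plus amplification by independent trials) applied to $\hat{T}$ completes the reduction; the required fourth-moment control follows from the fact that the true pancakes score has magnitude $O(1/\sigma^2) = \poly(d)$, together with a polynomial-threshold clipping of $\hat{s}$ to tame any pointwise oracle blow-ups while preserving the $L^2$ error bound.
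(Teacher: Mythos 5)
Your plan is a genuinely different route from the paper's, and it is worth contrasting them. The paper's test statistic $\Delta = \max_{k}\,\EE_{Q_d}\|s_{kh}(\bx) + 2\pi\bx\|^2$ is evaluated over \emph{self-generated standard Gaussian samples}, not over the input data. The argument is then by contradiction: if $\Delta$ were small, the oracle scores would be $L^2$-accurate for \emph{both} $D = P_{\bu}$ (by hypothesis) and for $Q_d$ (since $Q_d$ is OU-stationary), so two applications of the Chen et al.\ reduction would make the reverse process output close in TV to both $D$ and $Q_d$, contradicting $\tv(P_{\bu}, Q_d) > 1/2$. This converts the distinguishing problem into a black-box invocation of the sampling theorem, and the only quantitative inputs are $\tv(P_{\bu}, Q_d) > 1/2$ plus the Lipschitz-score, second-moment and KL bounds (\refLem{lem:score-lipschitzness}, \refLem{lem:second-moment}, \refLem{lem:pancakes-kl}). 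Your construction instead evaluates $\hat T$ over forward-processed \emph{input} samples and needs a direct quantitative lower bound on $V_{t^*} = \|\nabla\log p_{t^*} + \mathrm{id}\|_{L^2(p_{t^*})}$, i.e.\ the relative Fisher information between the pancakes forward marginal and the Gaussian. You also describe this as being carried out ``via the framework of'' Chen et al., but note that your argument as written never actually invokes their sampling-to-score theorem (\refThm{thm:chen-reduction}); it only uses Assumption~\ref{assume-l2score}.

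The concrete gap is the Fisher-information step. You assert ``a direct calculation should yield $V_{t^*} = \Omega(1/\sigma)$'' but do not prove it, and this is not immediate. One must show that the relative Fisher information dominates the oracle error $O(1/\sqrt{\log d})$ uniformly over all $(\gamma, \sigma)$ with $\gamma > 1$, $\sigma \ge 1/\poly(d)$ and $\tv(P_{\bu}, Q_d) > 1/2$ --- a hypothesis that constrains $\gamma\sigma = O(1)$ but does not by itself hand you well-separated pancakes in any pointwise sense. The heuristic (score amplitude $\asymp 1/\sigma$ on a constant-mass neighborhood of the peaks) is plausible, but the contribution of the Gaussian-score term $+\bx$, the correction from the nonuniform mixing weights $\rho(k/\gamma)$, and the dependence on $t^*$ (OU-smoothing only increases the effective thickness and shrinks $V_{t^*}$) all need to be controlled; none of this is done. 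The paper's route is precisely designed to avoid this lemma: the TV lower bound \refLem{lem:smoothed-discrete-gaussian-tv-lb} is the only place any ``distance from Gaussian'' is quantified, and it is much weaker than a Fisher-information bound. If you supply the missing lemma, you would get a somewhat more elementary reduction that does not lean on the full DDPM convergence theorem; as written, though, the proposal has a real hole at its central step.
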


\begin{figure}[t]
  \begin{minipage}[t]{0.33\linewidth}
    \centering
    \includegraphics[width=\linewidth]{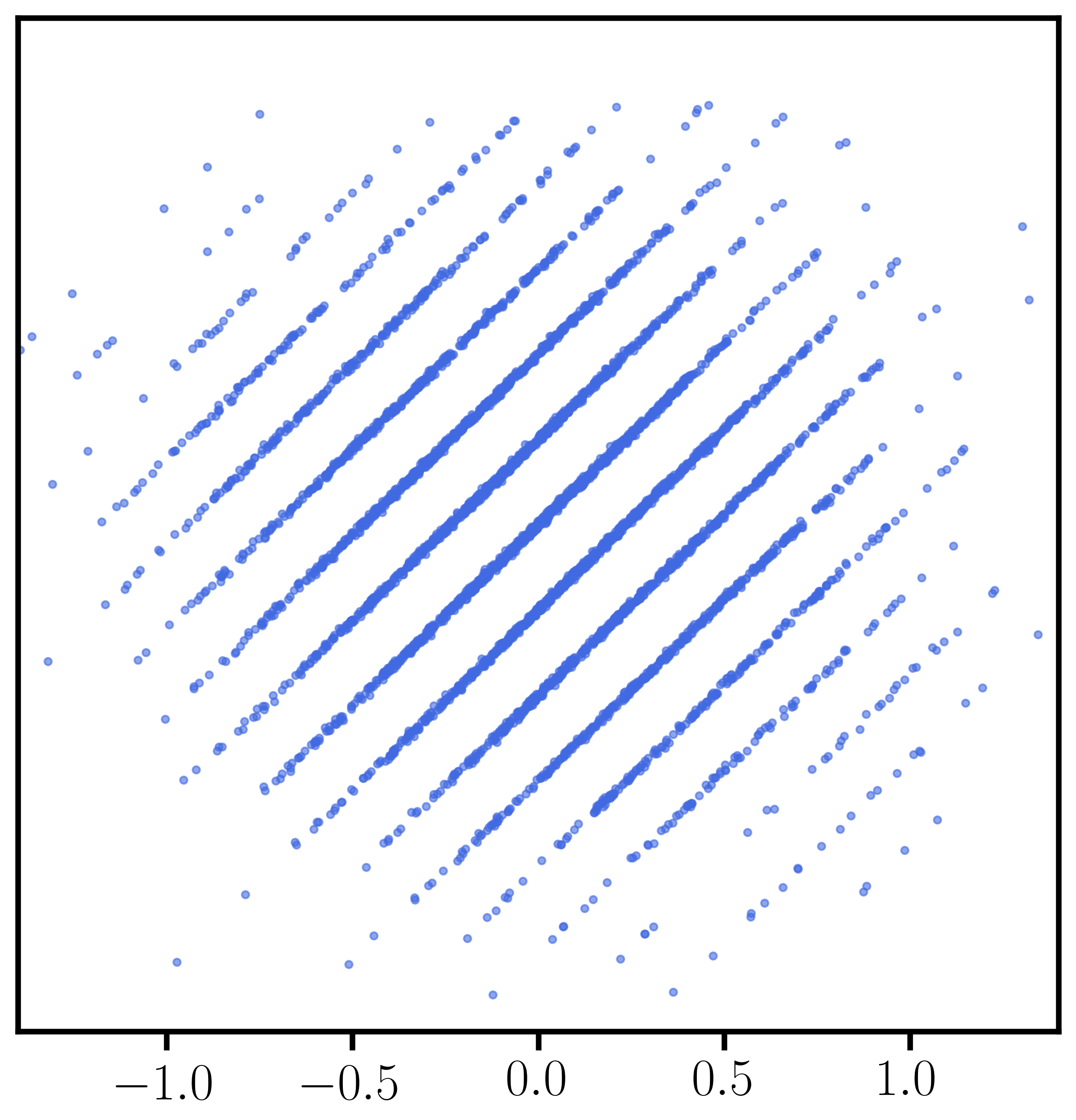}
  \end{minipage}\hfill
  \begin{minipage}[t]{0.33\linewidth}
    \centering
    \includegraphics[width=\linewidth]{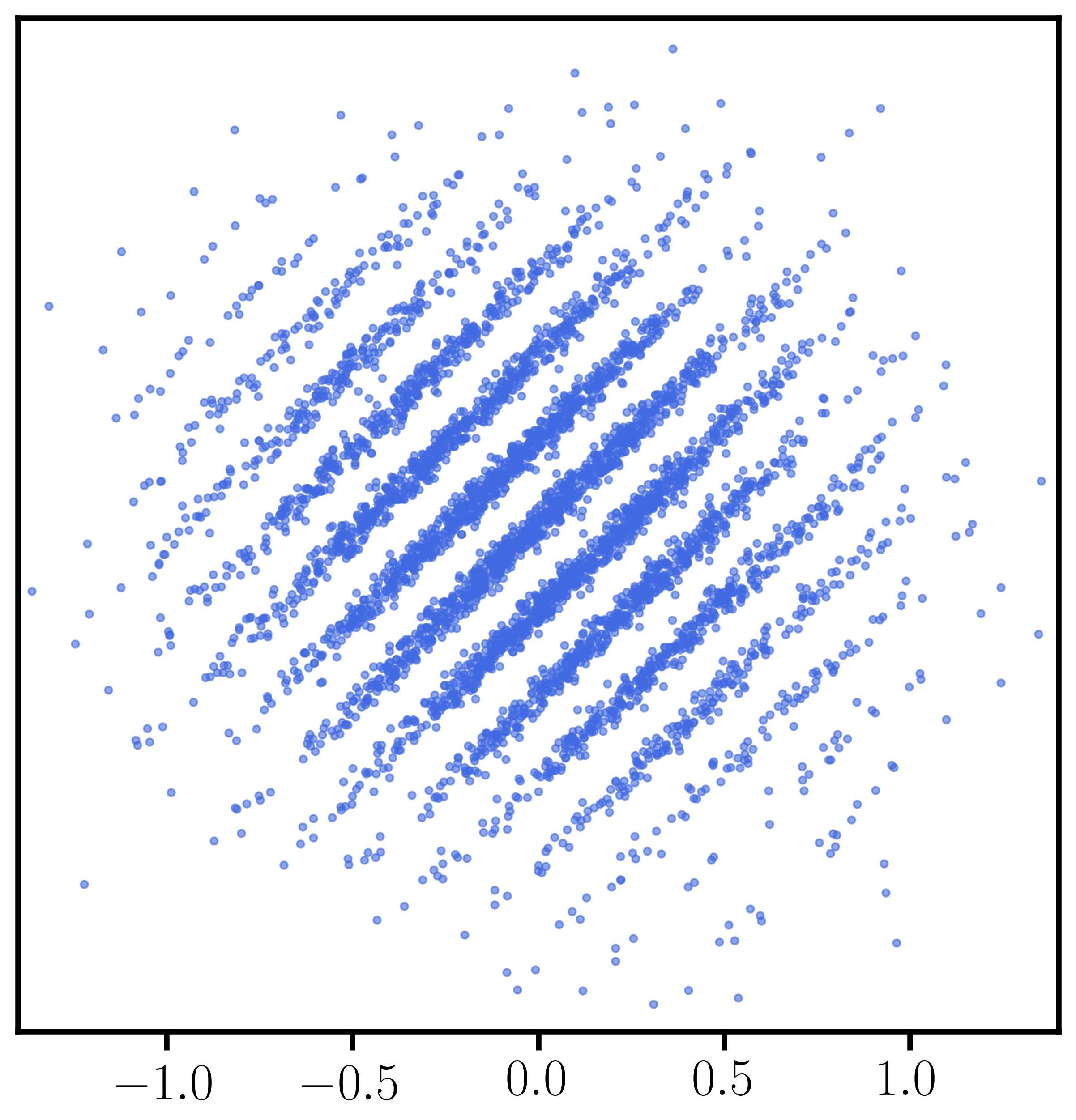}
  \end{minipage}\hfill
  \begin{minipage}[t]{0.33\linewidth}
    \centering
    \includegraphics[width=\linewidth]{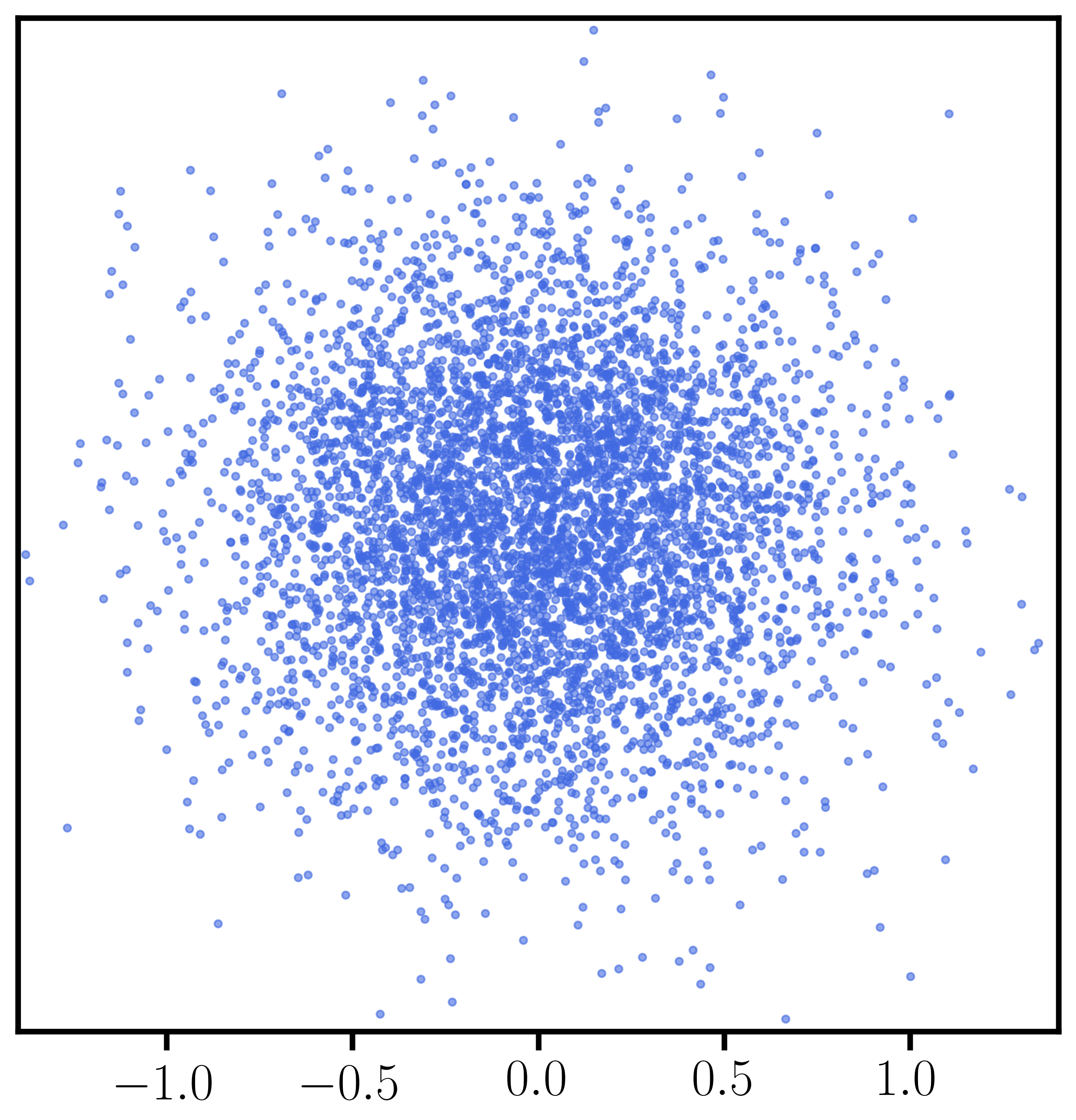}
  \end{minipage}
  \begin{minipage}[t]{0.33\linewidth}
    \centering
    \includegraphics[width=\linewidth]{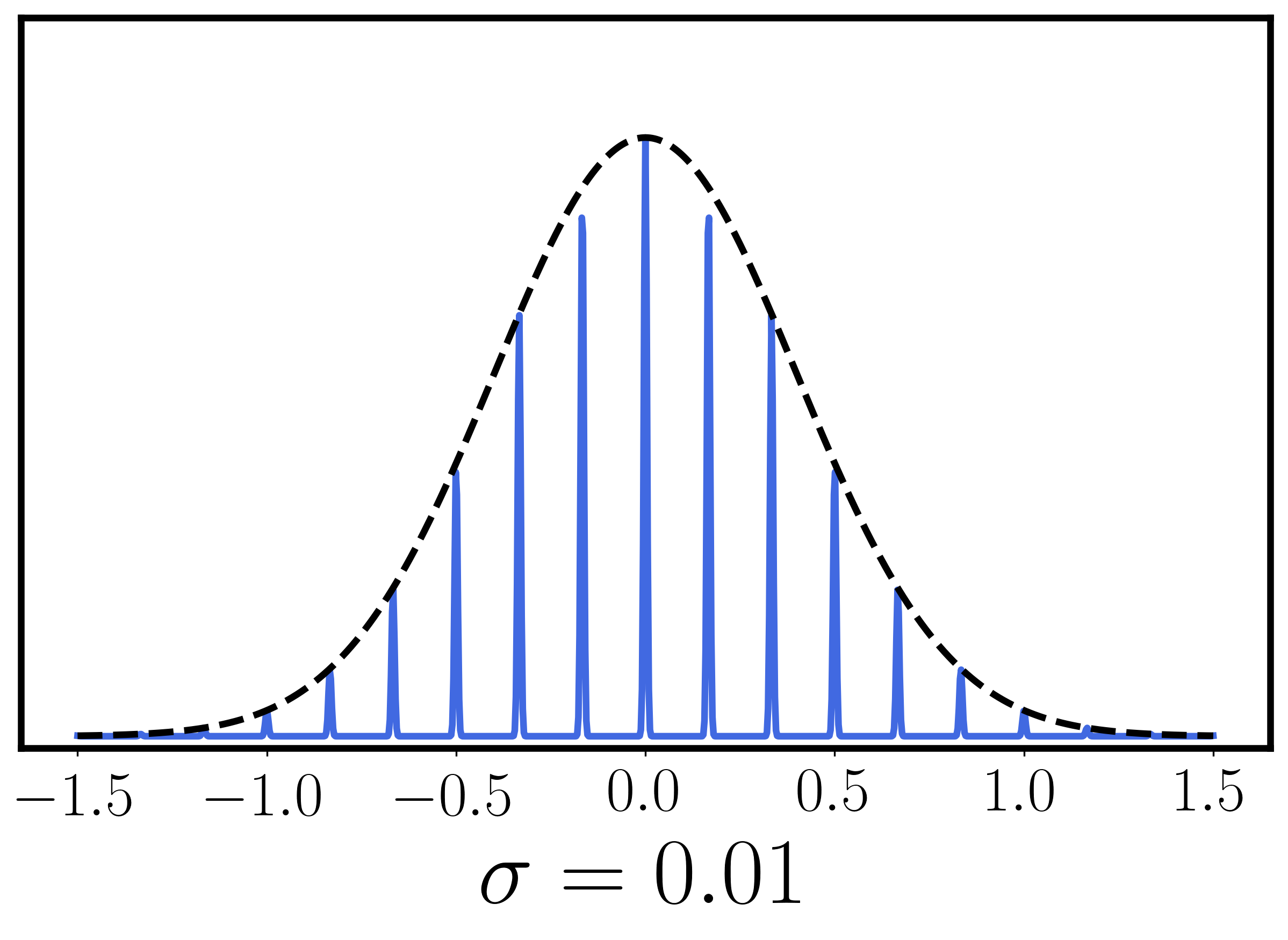}
  \end{minipage}\hfill
  \begin{minipage}[t]{0.33\linewidth}
    \centering
    \includegraphics[width=\linewidth]{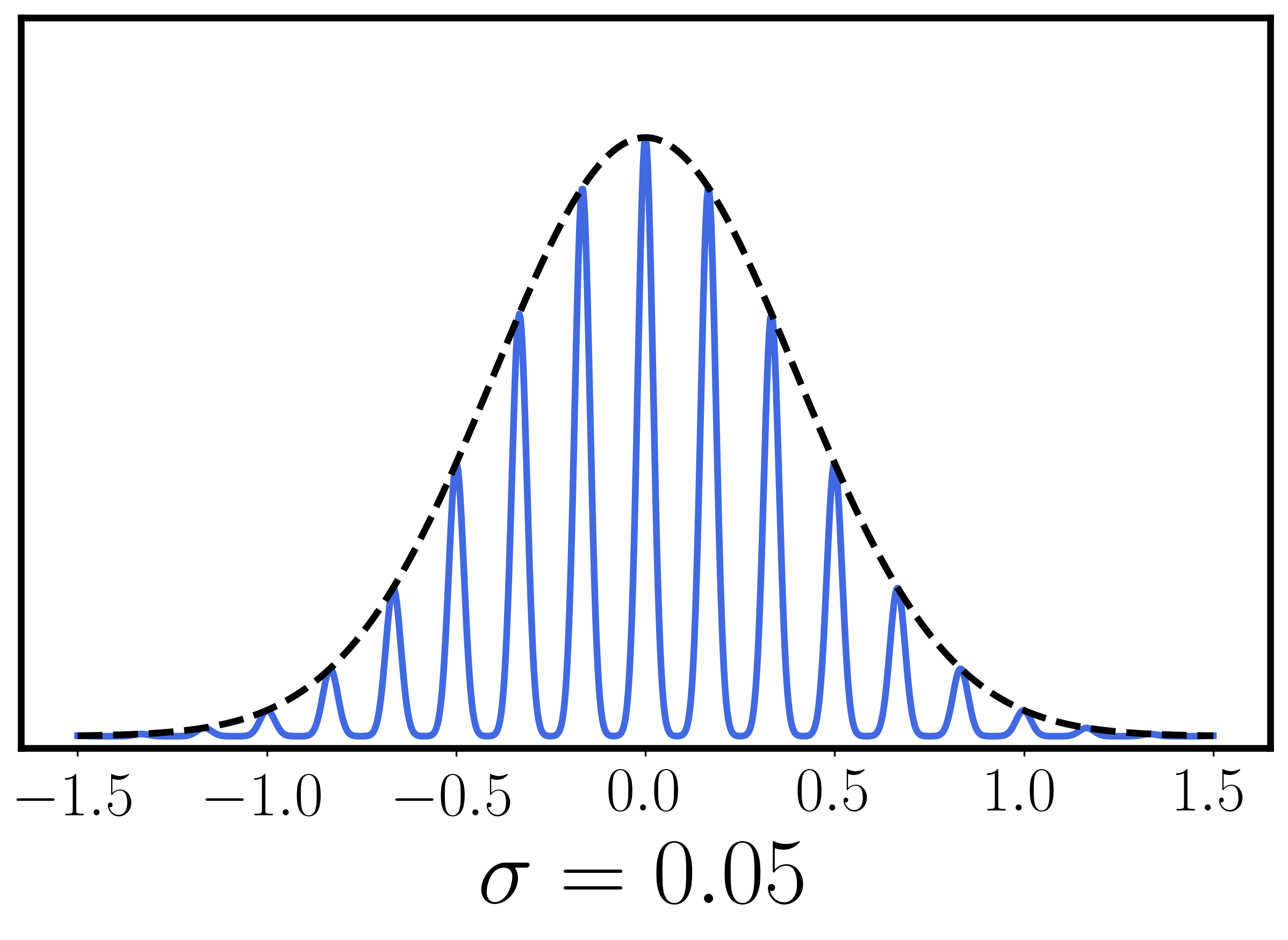}
  \end{minipage}\hfill
  \begin{minipage}[t]{0.33\linewidth}
    \centering
    \includegraphics[width=\linewidth]{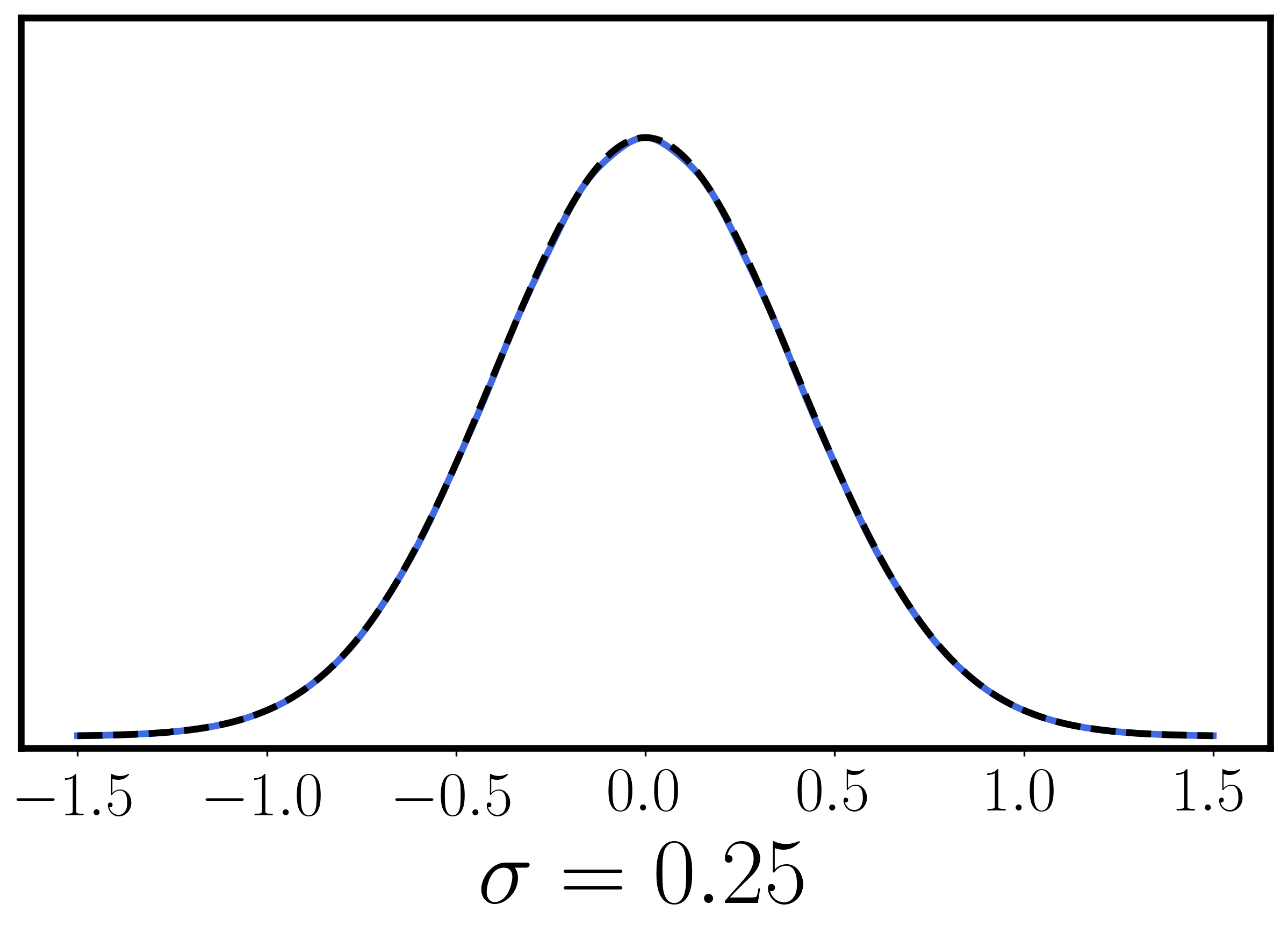}
  \end{minipage}
  \caption{Top: Scatter plot of 2D Gaussian pancakes $P_{\bu}$ with secret direction $\bu = (-1/\sqrt{2},1/\sqrt{2})$, spacing $\gamma=6$, and thickness $\sigma \in \{0.01, 0.05, 0.25\}$. Bottom: Re-scaled probability densities of Gaussian pancakes (blue) for each $\sigma \in \{0.01, 0.05, 0.25\}$ and the standard Gaussian (black) along $\bu$. For fixed $\gamma$, the pancakes ``blur into each other'' as $\sigma$ increases.}
  \label{fig:pancakes}
\end{figure}

We emphasize that the hardness of estimating score functions of Gaussian pancakes distributions arises solely from hardness of \emph{learning}. The score function of $P_{\bu}$ is efficiently approximable by function classes commonly used in practice for generative modeling, such as residual networks~\cite{he2016deep}. In addition, under the scaling $\sigma \gamma = O(1)$, which includes the cryptographically hard regime, the secret parameter $\bu$ can be estimated upto $L^2$-error $\eta$ via brute-force search over $\mathbb{S}^{d-1}$ with $\poly(d, \gamma, 1/\eta)$ samples (\refThm{thm:pancakes-parameter-estimation}). The estimated parameter $\hat{\bu}$ in turn enables $L^2$-accurate score estimation (see~\refSc{sec:sample-complexity-score}). Our estimator based on projection pursuit~\cite{friedman1974projection,huber1985projection}, as well as its analysis, may be of independent interest.

We also analyze properties of Gaussian pancakes using Banaszczyk's theorems on the Gaussian mass of lattices~\cite{banaszczyk1993new,stephens2017gaussian}. This serves two purposes. Firstly, it allows us to verify that Gaussian pancakes distributions readily satisfy the assumptions for the sampling-to-score estimation reduction of~\cite{chen2022sampling}, namely Lipschitzness of the score functions along the forward process (Assumption~\ref{assume-lipschitz}). This is necessary as the proof of our main theorem crucially relies on the reduction. Secondly, Banaszczyk's theorems provide simple means of analyzing properties of Gaussian pancakes, which are interesting mathematical objects in their own right. While these theorems are standard tools in lattice-based cryptography (see e.g.,~\cite{stephens2017gaussian,aggarwal2019improved}), they are likely less known outside the community.

\subsection{Related work}
\paragraph{Theory of diffusion models.} Recent advances in the theoretical study of diffusion models have focused on convergence rates of discretized reverse processes~\cite{block2020generative,de2021diffusion,liu2022let,pidstrigach2022score,de2022convergence,chen2022sampling,lee2023convergence,benton2024nearly}. Of particular relevance to our work is the result of Chen et al.~\cite{chen2022sampling} who showed that $L^2$-accurate score estimates are sufficient to guarantee convergence rates that are polynomial in all the relevant problem parameters under minimal assumptions on the data distribution, namely Lipschitz scores throughout the forward process and finite second moment. Prior studies fell short by requiring strong structural assumptions on the data distribution, such as a log-Sobolev inequality~\cite{lee2022convergence,yang2022convergence}, assuming $L^\infty$-accurate score estimates~\cite{de2021diffusion}, or providing convergence rates that are exponential in the problem parameters~\cite{block2020generative,de2021diffusion,de2022convergence}. We note that recent works have made the ``minimal'' assumptions of Chen et al.~\cite{chen2022sampling} even more minimal by considering early stopped reverse processes and dropping the Lipschitz score assumption~\cite{chen2023improved,benton2024nearly}. For more background on diffusion and sampling, we refer to the book draft of Chewi~\cite{chewi2024log}.

\paragraph{Score estimation.} Several works have addressed the statistical question of sample complexity for various data distributions and function classes used to estimate the score~\cite{chen2023score,mei2023deep,wibisono2024optimal}. Wibisono et al.~\cite{wibisono2024optimal} study score estimation for the class of subgaussian distributions with Lipschitz scores. By establishing a minimax lower bound exhibiting a curse of dimensionality, they show that the exponential dependence on dimension is fundamental to this nonparametric distribution class, underscoring the need for stronger assumptions on the data distribution for polynomial sample complexity. Chen et al.~\cite{chen2023score} study neural network-based score estimation and derive finite-sample bounds for data distributions that lie on a low-dimensional linear subspace, which circumvents the curse of dimensionality. Mei and Wu~\cite{mei2023deep} study neural network-based score estimation for graphical models which are intrinsically high dimensional. Assuming the efficiency of variational inference approximation to the data-generating graphical model, they show that the score function can be efficiently approximated by residual networks~\cite{he2016deep} and learned with polynomially many samples. Efficient score estimation both in sample and computational complexity has been achieved by Shah et al.~\cite{shah2024learning} for mixtures of two spherical Gaussians using a shallow neural network architecture that matches the closed-form expression of the score function.

\paragraph{Statistical-to-computational gaps.} Statistical-to-computational gaps have a rich history in statistics, machine learning, and computer science~\cite{berthet2013computational,zhang2014lower,ma2015computational,shalev2012using,decatur1997computational}. Indeed, many high-dimensional inference problems exhibit gaps between what is achievable statistically (with infinite computational resources) and what is achievable under limited computation. Notable examples include sparse PCA~\cite{berthet2013computational}, sparse linear regression~\cite{zhang2014lower,wainwright2014constrained}, and learning one-hidden-layer neural networks over Gaussian inputs~\cite{goel2020superpolynomial,diakonikolas2020algorithms}.

Since there are no known reductions from NP-hard problems, the gold standard for computational hardness, to any average-case problem arising in statistical inference, alternative techniques have been developed to provide rigorous evidence of hardness. These include proving lower bounds for restricted classes of algorithms like sum of squares (SoS)~\cite{lasserre2001global,parrilo2000structured,barak2014sum} and statistical query (SQ) algorithms~\cite{kearns1998efficient,feldman2017planted-clique}, which capture spectral, moment, and tensor methods, and reducing from ``canonical'' problems believed to be hard \emph{on average}, such as the planted clique~\cite{brennan2020reducibility} or random $k$-SAT~\cite{daniely2014average} problem. We refer the reader to surveys and monographs~\cite{zdeborova2016statistical,bandeira2018notes,wu2021statistical,gamarnik2021overlap,kunisky2022notes,kunisky2022spectral,diakonikolas2023algorithmic} for a thorough overview of recent literature and diverse perspectives ranging from computer science and information theory to statistical physics.

\paragraph{Gaussian pancakes.} The Gaussian pancakes problem stands out among problems exhibiting statistical-to-computational gaps due to its versatility and strong hardness guarantees. Initially introduced as hard-to-learn Gaussian mixtures in the work of Diakonikolas et al.~\cite{diakonikolas2017statistical}, which established their SQ hardness, Gaussian pancakes have been extensively utilized in establishing SQ lower bounds for various statistical inference problems. The long list of problems includes robust Gaussian mean estimation~\cite{diakonikolas2017statistical}, agnostically learning halfspaces and ReLUs over Gaussian inputs~\cite{diakonikolas2020near}, and estimating Wasserstein distances between distributions that only differ in a low-dimensional subspace~\cite{weed2022estimation}. For further details, we refer to the textbook by Diakonikolas and Kane~\cite[Chapter 8]{diakonikolas2023algorithmic}. Gaussian pancakes distributions themselves serve as instances of fundamental high-dimensional inference problems such as non-Gaussian component analysis~\cite{blanchard2006search} and Wasserstein distance estimation in the spiked transport model~\cite{weed2022estimation}.

Bruna et al.~\cite{bruna2020continuous} initiated the exploration of the \emph{cryptographic} hardness of Gaussian pancakes. They showed that assuming hardness of worst-case lattice problems, which are fundamental to lattice-based cryptography~\cite{micciancio2009lattice,peikert2016decade}, both the Gaussian pancakes and the closely related \emph{continuous} learning with errors (CLWE) problems are hard. Follow-up work by Gupte et al.~\cite{gupte2022continuous} showed that the learning with errors (LWE) problem~\cite{regev2009lattices}, a versatile problem which lies at the heart of numerous lattice-based cryptographic constructions, reduces to the Gaussian pancakes and CLWE problem as well. These cryptographic hardness results have sparked a wave of recent works showcasing various applications of this newly discovered property of Gaussian pancakes. Notable examples include planting undetectable backdoors in machine learning models~\cite{goldwasser2022planting}, novel public-key encryption schemes based on Gaussian pancakes~\cite{bogdanov2022public}, and cryptographic hardness of agnostically learning halfspaces~\cite{diakonikolas2023near,tiegel2023hardness}.

\subsection{Future directions}
\label{sec:future-directions}
Our work brings together the latest advances from the theory of diffusion models and the computational complexity of statistical inference. This intersection provides fertile ground for future research, some avenues of which we outline below.

\begin{enumerate}
    \item \textbf{Stronger data assumptions for efficient score estimation.} Our result shows that for any class of distributions that encompasses hard Gaussian pancakes, computationally efficient $L^2$-accurate score estimation is impossible. This implies that stronger assumptions on the data, which exclude Gaussian pancakes, are necessary for efficient score estimation. Finding assumptions that exclude hard instances while being able to capture realistic models of data is an interesting open problem.
    
    \item \textbf{Weaker criteria for evaluating sample quality.} In the context of learning Gaussian pancakes, if the goal of the sampling algorithm were to merely fool computationally bounded testers that lack knowledge of the secret $\bu \in \mathbb{S}^{d-1}$, then it could simply generate standard Gaussian samples and fool any polynomial-time test. Thus, sampling is strictly easier than $L^2$-accurate score estimation if the criteria for evaluating sample quality is less stringent. This suggests exploring sampling under different evaluation criteria, such as ``discriminators'' with bounded compute or memory. For example, Christ et al.~\cite{christ2023undetectable} have used weaker notions of sample quality in the context of watermarking large language models (LLMs). More precisely, they used the notion of computational indistinguishability to guarantee quality of the watermarked model relative to the original model. Exploring potential connections to the literature on leakage simulation~\cite{jetchev2014fake,chen2018complexity,trevisan2009regularity} and outcome indistinguishability~\cite{hebert2018multicalibration,dwork2021outcome} is also an interesting future direction, as these areas have addressed related questions for distributions on finite sets.
    
    \item \textbf{Extracting ``knowledge'' from sampling algorithms.} A key difficulty in directly reducing the Gaussian pancakes problem to sampling is that the Gaussian pancakes problem is hard for polynomial-time distinguishers, even with access to \emph{exact} sampling oracles\footnote{Note, however, that algorithms that run in time $T$ can query the sampling oracle at most $T$ times, so the running time imposes a limit on the number of samples the algorithm can see.} (see~\refSc{sec:prelim-gaussian-pancakes} for more details). Thus, any procedure utilizing the learned sampler in a black box manner cannot solve the Gaussian pancakes problem. This is puzzling since for the algorithm to have ``learned'' to generate samples from the given distribution, it \emph{ought to} possess some non-trivial information about it (e.g., leak information about the secret parameter $\bu$)! 
    
    This raises the question: How much ``knowledge'' can we extract with \emph{white box} access to the sampling algorithm? Under reasonable structural assumptions on the sampling algorithm, can we extract privileged information about the data distribution it simulates, beyond what is obtainable solely via sample access? Our work provides one such example. White box access to a diffusion model gives access to its score estimates. These score estimates, which enable efficient solutions to the Gaussian pancakes problem, constitute privileged knowledge that cannot be learned efficiently even with unlimited access to bona fide samples.
\end{enumerate}

\section{Preliminaries}
\label{sec:prelim}

\paragraph{Notation.} We denote by $(a_t)$ the sequence $a_1,a_2,a_3,\ldots$ indexed by $t \in \NN$. When there is no natural ordering on the index set (e.g., $(a_{\bu})_{\bu \in \mathbb{S}^{d-1}}$), we interpret it as a set. We write $a \lesssim b$ or $a = O(b)$ to mean that $a \le Cb$ for some universal constant $C > 0$. The notation $a \gtrsim b$ and $a = \Omega(b)$ are defined analogously. We write $a \asymp b$ or $a = \Theta(b)$ to mean that $a \lesssim b$ and $a \gtrsim$ both hold. We write $\wedge, \vee$ to mean logical AND and OR, respectively. We also write $a \wedge b$ and $a \vee b$ to mean $\min(a,b)$ and $\max(a,b)$, respectively. We denote by $Q_d$ the ``standard'' Gaussian $\cN(0,1/(2\pi)I_d)$ (see~\refRem{rem:non-standard-var}). We omit the subscript $d$ when it is clear from context.

\subsection{Background on denoising diffusion probabilistic modeling (DDPM)}
\label{sec:prelim-ddpm}
We give a brief exposition on \emph{denoising diffusion probabilistic models} (DDPM)~\cite{ho2020denoising}, a specific type of diffusion model, since the main reduction of Chen et al.~\cite{chen2022sampling} pertains to DDPMs. This section closely follows~\cite[Section 2.1]{chen2022sampling} and~\cite[Section 3]{wibisono2024optimal}

Let $D$ be the target distribution defined on $\RR^d$. In DDPMs, we begin with the \textbf{forward process}, an Ornstein-Uhlenbeck (OU) process that converges towards $Q = \cN(0,(1/2\pi)I_d)$, which is described by the following stochastic differential equation (SDE). Note that the constant $1/\sqrt{\pi}$ in front of $dW_t$ in Eq.\eqref{eqn:forward-process} is non-standard.\footnote{The usual choice is $\sqrt{2}$, for which the stationary distribution is $\cN(0,I_d)$.} See~\refRem{rem:non-standard-var} for an explanation of our unconventional choice of variance for the resulting stationary distribution $Q$.
\begin{align}
\label{eqn:forward-process}
    dX_t = -X_t dt + (1/\sqrt{\pi}) dW_t\;,\qquad X_0 \sim D_0 = D\;,
\end{align}
where $W_t$ is the standard Brownian motion in $\RR^d$.

Let $D_t$ be the distribution of $X_t$ along the OU process. It is well-known that for any distribution $D$, $D_t \to Q$ exponentially fast in various divergences and metrics such as the KL divergence~\cite{bakry2014analysis}. In this work, we only consider Gaussian pancakes $(P_{\bu})$ and the standard Gaussian $Q$. If $D=P_{\bu}$ and $t > 0$, then the distribution $D_t$ is simply another Gaussian pancakes distribution with a larger thickness parameter (see Claim~\ref{claim:smoothed-discrete-gaussian-density}). Meanwhile, if $D=Q$, then $D_t = Q$ for any $t \ge 0$.  We run the OU process until time $T > 0$, and then simulate the \textbf{reverse process}, described by the following SDE.
\begin{align}
\label{eqn:reverse-process}
    dY_t = (Y_t + (1/\pi)\nabla \log D_{T-t}(Y_t))dt + (1/\sqrt{\pi})dW_t\;,\qquad Y_0 \sim F_0 = D_T\;.
\end{align}

Here, $\nabla \log D_{t}$ is called the \emph{score function} of $D_t$. Since the target $D$ is not known, in order to implement the reverse process the score function must be estimated from data samples. Assuming for the moment that we have \emph{exact} scores $(\nabla \log D_t)_{t \in [0,T]}$, if we start the reverse process from $F_0 = D_T$, we have $Y_t \sim F_t = D_{T-t}$ for any $0 \le t \le T$, and ultimately $Y_T \sim F_T = D$. Thus, starting from (approximately) pure noise $D_T \approx Q$, the reverse process generates fresh samples from the target distribution $D$. We need to make several approximations to algorithmically implement this reverse process. In particular, we need to approximate $D_T$ by $Q$, discretize the continuous-time SDE, and approximate scores along the (discretized) forward process. Let $h > 0$ be the step size for the SDE discretization and denote $N := T/h$. Given score estimates $(s_{kh})_{k \in [N]}$, the DDPM algorithm performs the following update (see e.g.,~\cite[Chapter 4.3]{sarkka2019applied}).
\begin{align}
\label{eqn:reverse-discretized}
    \by_{k+1} = e^h \by_k + (1/\pi)(e^h-1)s_{(N-k)h}(\by_k)+\sqrt{e^{2h}-1} \bz_k\;,
\end{align}
where $\bz_k \sim Q$ is an independent Gaussian vector. Note that the only dependence of the reverse process on the target distribution $D$ arises through the score estimates $(s_{kh})_{k \in [N]}$.

The result of Chen et al.~\cite{chen2022sampling} demonstrates polynomial convergence rates of this process to the target distribution $D$, assuming access to $L^2$-accurate score estimates $(s_{kh})_{k \in [N]}$ and minimal conditions on the target $D$. We refer to~\refSc{sec:main-proof} for a formal statement of their assumptions and theorem.

\subsection{Lattices and discrete Gaussians}
\label{sec:prelim-lattices}

\paragraph{Lattices.} A \emph{lattice} $\cL \subset \RR^d$ is a discrete additive subgroup of $\RR^d$.
In this work, we assume all lattices are full rank, i.e., their linear span is $\RR^d$.
For a $d$-dimensional lattice $\cL$, a set of linearly independent vectors $\{\bb_1, \dots, \bb_d\}$ is called a \emph{basis} of $\cL$ if $\cL$ is generated by the set, i.e., $\cL = B \ZZ^d$ where $B = [\bb_1, \dots, \bb_d]$.
The \emph{determinant} of a lattice $\cL$ with basis $B$ is defined as $\det(\cL) = |\det(B)|$.

The \emph{dual lattice} of a lattice $\cL$, denoted by $\cL^*$, is defined as
\begin{align*}
    \cL^* = \{ \by \in \RR^d \mid \langle \bx, \by \rangle \in \ZZ \text{ for all } \bx \in \cL\}
    \;.
\end{align*}

If $B$ is a basis of $\cL$ then $(B^T)^{-1}$ is a basis of $\cL^*$; in particular, $\det(\cL^*) = \det(\cL)^{-1}$.

\paragraph{Fourier analysis.} We define the Fourier transform of a function $f : \RR^d \to \CC$ by
\begin{align*}
    \what{f}(\by) = \int_{\RR^d} f(\bx)\exp(-2\pi i \inner{\bx,\by})d\bx\;.
\end{align*}

The Poisson summation formula offers a valuable tool for analyzing functions defined on lattices.

\begin{lemma}[Poisson summation formula]
\label{lem:poisson-summation}
For any lattice $\cL \subset \RR^d$ and any function $f : \RR^d \to \CC$,\footnote{To be precise, $f$ must satisfy some niceness conditions; this will always hold in our applications.}
\begin{align*}
    f(\cL) = \det(\cL^*)\cdot \what{f}(\cL^*)\;,
\end{align*}
where we denote $f(S) = \sum_{x \in S} f(x)$ for any set $S$.
\end{lemma}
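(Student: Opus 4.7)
The plan is to use the standard periodization argument. First I would define the $\cL$-periodic function
\begin{align*}
F(\bx) = \sum_{\bv \in \cL} f(\bx + \bv)\;,
\end{align*}
so that evaluating at $\bx = 0$ recovers the left-hand side $F(0) = f(\cL)$. The niceness hypotheses (e.g., Schwartz decay, or enough decay of both $f$ and $\what{f}$) are exactly what is needed to ensure this sum converges absolutely and defines a continuous function, justifying all manipulations below.

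Next I would expand $F$ as a Fourier series on the torus $\RR^d / \cL$. The characters of this torus are precisely the exponentials $\bx \mapsto \exp(2\pi i \inner{\by, \bx})$ for $\by \in \cL^*$, since the condition $\inner{\bv, \by} \in \ZZ$ for all $\bv \in \cL$ is exactly the definition of the dual lattice. Writing
\begin{align*}
F(\bx) = \sum_{\by \in \cL^*} c_{\by}\, \exp(2\pi i \inner{\by, \bx})\;,
\end{align*}
the coefficients are $c_{\by} = \frac{1}{\vol(\RR^d/\cL)} \int_{\RR^d/\cL} F(\bx)\exp(-2\pi i \inner{\by, \bx})\,d\bx$, where $\vol(\RR^d/\cL) = \det(\cL) = 1/\det(\cL^*)$.

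The key step is the unfolding trick: I would substitute the definition of $F$ into the integral over the fundamental domain and interchange sum and integral to collapse the sum of translated integrals into a single integral over all of $\RR^d$, giving
\begin{align*}
c_{\by} = \det(\cL^*) \int_{\RR^d} f(\bx) \exp(-2\pi i \inner{\by, \bx})\,d\bx = \det(\cL^*)\, \what{f}(\by)\;.
\end{align*}
Plugging this back in and setting $\bx = 0$ yields $f(\cL) = F(0) = \det(\cL^*) \sum_{\by \in \cL^*} \what{f}(\by) = \det(\cL^*)\, \what{f}(\cL^*)$, which is the desired identity.

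The main obstacle, and the reason for the unspecified ``niceness conditions'' in the statement, is verifying the analytic hypotheses needed for each step: absolute convergence of the periodization, convergence of the Fourier series to $F$ pointwise at $\bx = 0$ (not just in $L^2$), and the interchange of sum and integral in the unfolding. In the applications in this paper, $f$ will be (a modulated) Gaussian or a product of such, which is Schwartz; consequently $\what{f}$ is also Schwartz, all sums are absolutely convergent, and each step is justified routinely, so I would simply note that $f$ lies in $\mathcal{S}(\RR^d)$ and move on.
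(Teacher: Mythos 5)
Your proof is correct and is the standard periodization argument for the Poisson summation formula; the paper itself does not prove this lemma but simply cites it as a classical fact (with a footnote deferring the analytic hypotheses), so there is nothing to compare against. Your unfolding of the Fourier-coefficient integral, the identification of the characters of $\RR^d/\cL$ with $\cL^*$, and the use of $\vol(\RR^d/\cL)=\det(\cL)=1/\det(\cL^*)$ are all exactly right, and your closing remark that the $f$'s used in the paper (Gaussians and Gaussians times polynomials) are Schwartz, so all interchanges are justified, is the right way to discharge the ``niceness'' footnote.
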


\paragraph{Discrete Gaussians.} A discrete Gaussian is a discrete distribution whose probability mass function is given by the Gaussian function. These distributions are closely related to Gaussian pancakes distributions and their properties will be crucial for our analysis.

\begin{definition}[Gaussian function] We define the Gaussian function $\rho_s : \RR^d \to \RR$ of width $s > 0$ by
\begin{align*}
    \rho_s(\bx) := \exp(-\pi \|\bx\|^2/s^2)\;.
\end{align*}

When $s=1$, we omit the subscript and simply write $\rho$. In addition, for any lattice $\cL \subset \RR^d$, we denote by $\rho_s(\cL) = \sum_{\bx \in \cL}\rho_s(\bx)$ the corresponding Gaussian mass of $\cL$.
\end{definition}

\begin{remark}[Non-standard choice of ``standard'' variance]
\label{rem:non-standard-var}
We refer to $Q_d = \cN(0,1/(2\pi)I_d)$ as the ``standard'' Gaussian. This is indeed the standard choice in lattice-based cryptography because it simplifies normalization factors that arise from taking Fourier transforms. For instance, it allows us to simply write $\what{\rho}_s = s^n \rho_{1/s}$ and $\what{\rho} = \rho$ for $s=1$. To translate these results for the ``usual'' standard Gaussian, we can simply replace $s$ with $s/\sqrt{2\pi}$ for each occurrence.
\end{remark}

\begin{definition}[Discrete Gaussian] 
\label{def:discrete-gaussian}
For any lattice $\cL \subset \RR^d$, parameter $s > 0$, and shift $\bt \in \RR^d$, the \emph{discrete Gaussian} $D_{\cL-\bt,s}$ is a discrete distribution supported on the coset $\cL-\bt$ with probability mass
\begin{align*}
    D_{\cL-\bt, s}(\bx) = \frac{\rho_s(\bx-\bt)}{\rho_s(\cL-\bt)}\;.
\end{align*}
\end{definition}

We denote by $A_\gamma$ the discrete Gaussian of width $s=1$ supported on the one-dimensional lattice $(1/\gamma)\ZZ$. The distribution of a Gaussian pancakes distribution $P_{\bu}$ along the hidden direction is a smoothed discrete Gaussian, which we formalize in the following.

\begin{definition}[Smoothed discrete Gaussian]
\label{def:smoothed-discrete-gaussian}
For any $\gamma > 0$, let $A_\gamma$ be the discrete Gaussian of width $1$ on the lattice $(1/\gamma)\ZZ$. We define the \emph{$\sigma$-smoothed} discrete Gaussian $A_\gamma^\sigma$ as the distribution of the random variable $y$ induced by the following process.
\begin{align*}
    y = \frac{1}{\sqrt{1+\sigma^2}}(x + \sigma z)\;,\; \text{ where } x \sim A_\gamma \text{ and } z \sim Q\;.
\end{align*}
\end{definition}

We can explicitly write out the density of $A_\gamma^\sigma$ as follows. We refer to $\gamma$ as the spacing parameter and $\sigma$ as the thickness parameter based on the density of $A_\gamma^\sigma$, expressed in mixture form in Eq.\eqref{eqn:smoothed-discrete-gaussian-mixture}.

\begin{claim}[Smoothed discrete Gaussian density]
\label{claim:smoothed-discrete-gaussian-density}
For any $\gamma > 0$ and discrete Gaussian $A_\gamma$ on $(1/\gamma)\ZZ$, the density of $A_\gamma^\sigma$ is given by
\begin{align}
\label{eqn:smoothed-discrete-gaussian-density}
    A_\gamma^\sigma(z) = \frac{\sqrt{1+\sigma^2}}{\sigma \rho((1/\gamma)\ZZ)} \cdot \rho(z)\sum_{k \in \ZZ} \rho_{\sigma}\big(z-\sqrt{1+\sigma^2}k/\gamma\big)\;.
\end{align}

Equivalently, it can be expressed in the \emph{mixture form},
\begin{align}
\label{eqn:smoothed-discrete-gaussian-mixture}
    A_\gamma^\sigma(z) = \frac{\sqrt{1+\sigma^2}}{\sigma \rho((1/\gamma)\ZZ)} \sum_{k \in \ZZ} \rho(k/\gamma)\rho_{\sigma/\sqrt{1+\sigma^2}}\big(z-k/\gamma\sqrt{1+\sigma^2}\big)\;.
\end{align}

\end{claim}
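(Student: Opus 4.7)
The claim is a direct computation: $A_\gamma^\sigma$ is (up to rescaling) the convolution of the discrete Gaussian $A_\gamma$ with an independent continuous Gaussian of width $\sigma$. The plan is to compute that convolution, then substitute $y = (x+\sigma z)/\sqrt{1+\sigma^2}$ via a one-dimensional change of variables; this yields the mixture form Eq.\eqref{eqn:smoothed-discrete-gaussian-mixture} directly. The other density form Eq.\eqref{eqn:smoothed-discrete-gaussian-density} then follows by completing the square inside each summand.

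More concretely, first I would record that $z \sim Q$ has density $\rho(z)=\exp(-\pi z^2)$, so $\sigma z$ has density $(1/\sigma)\rho_\sigma(\cdot)$. Since $x \sim A_\gamma$ places mass $\rho(k/\gamma)/\rho((1/\gamma)\ZZ)$ at each $k/\gamma$, independence of $x$ and $z$ gives the density of the sum $w := x+\sigma z$ as
\begin{align*}
g(w) \;=\; \frac{1}{\sigma\,\rho((1/\gamma)\ZZ)} \sum_{k \in \ZZ} \rho(k/\gamma)\,\rho_\sigma\bigl(w - k/\gamma\bigr)\;.
\end{align*}
The map $w \mapsto y = w/\sqrt{1+\sigma^2}$ is a bijection with Jacobian $\sqrt{1+\sigma^2}$, so $A_\gamma^\sigma(y) = \sqrt{1+\sigma^2}\,g(\sqrt{1+\sigma^2}\,y)$. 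Rewriting $\rho_\sigma(\sqrt{1+\sigma^2}\,y - k/\gamma) = \rho_{\sigma/\sqrt{1+\sigma^2}}\bigl(y - k/(\gamma\sqrt{1+\sigma^2})\bigr)$ by pulling the factor $\sqrt{1+\sigma^2}$ out of the square in the exponent yields exactly Eq.\eqref{eqn:smoothed-discrete-gaussian-mixture}.

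For Eq.\eqref{eqn:smoothed-discrete-gaussian-density}, I would expand each summand of the mixture form and complete the square in $y$. Concretely, the exponent
$-\pi(k/\gamma)^2 - \pi(1+\sigma^2)\bigl(y - k/(\gamma\sqrt{1+\sigma^2})\bigr)^2/\sigma^2$
rearranges to
$-\pi y^2 \;-\; \pi\bigl(y - \sqrt{1+\sigma^2}\,k/\gamma\bigr)^2/\sigma^2$,
as a direct expansion shows: both forms agree in their $y^2$, $yk$, and $k^2$ coefficients. Factoring the $\rho(y)$ out of the sum then gives Eq.\eqref{eqn:smoothed-discrete-gaussian-density}. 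There is no real obstacle here, only bookkeeping; the one place to be careful is the pair of algebraic rewritings (pulling $\sqrt{1+\sigma^2}$ through $\rho_\sigma$, and completing the square), both of which reduce to checking that the three coefficients in a quadratic agree.
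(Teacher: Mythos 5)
Your proof is correct and follows the same route as the paper: convolve the discrete Gaussian with the continuous Gaussian, rescale by $1/\sqrt{1+\sigma^2}$, and pass between the two displayed forms by the algebraic rewriting of $\rho_\sigma$ / completing the square. You are slightly more explicit than the paper about the Jacobian of the rescaling, but the structure and content of the argument are the same.
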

\begin{proof}
The density of $A_\gamma$ is
\begin{align*}
    A_\gamma(x) 
    &= \frac{1}{\rho((1/\gamma)\ZZ)}\sum_{k \in \ZZ} \rho(k/\gamma)\delta(x-k/\gamma) \;,
\end{align*}
where $\delta$ denotes the Dirac delta.

To derive the density of $A_\gamma^\sigma$, we first convolve $A_\gamma$ with the standard Gaussian density, which gives us.
\begin{align*}
    A_\gamma \ast \Bigg(\frac{1}{\sigma}\rho_{\sigma}\Bigg)(z) 
    &= \frac{1}{\sigma \rho((1/\gamma)\ZZ)} \sum_{k \in \ZZ} \rho(k/\gamma)\rho_\sigma(z-k/\gamma)\;.
\end{align*}

Re-scaling the resulting random variable by $1/\sqrt{1+\sigma^2}$, we have
\begin{align*}
    A_\gamma^\sigma(z) 
    &= \frac{\sqrt{1+\sigma^2}}{\sigma \rho((1/\gamma)\ZZ)} \sum_{k \in \ZZ} \rho(k/\gamma)\rho_\sigma\big(\sqrt{1+\sigma^2}z-k/\gamma\big) \\
    &= \frac{\sqrt{1+\sigma^2}}{\sigma \rho((1/\gamma)\ZZ)} \sum_{k \in \ZZ} \rho(k/\gamma)\rho_{\sigma/\sqrt{1+\sigma^2}}\big(z-k/\gamma\sqrt{1+\sigma^2}\big) \\
    &= \frac{\sqrt{1+\sigma^2}}{\sigma \rho((1/\gamma)\ZZ)} \cdot \rho(z)\sum_{k \in \ZZ} \rho_{\sigma}\big(z-\sqrt{1+\sigma^2}k/\gamma\big)\;.
\end{align*}

The identity in the last line follows from completing squares in the exponent.

\end{proof}

\subsection{Gaussian pancakes}
\label{sec:prelim-gaussian-pancakes}

\paragraph{Likelihood ratio of smoothed discrete Gaussians.}
Let $A_\gamma^\sigma$ be the $\sigma$-smoothed discrete Gaussian on $(1/\gamma)\ZZ$. Its likelihood ratio $T_\gamma^\sigma$ with respect to the standard Gaussian is given by
\begin{align}
\label{eqn:likelihood-noise-param}
    T_{\gamma}^\sigma(z) = \frac{\sqrt{1+\sigma^2}}{\sigma \rho((1/\gamma)\ZZ)} \sum_{k \in \ZZ} \rho_\sigma(z-\sqrt{1+\sigma^2}k/\gamma)\;.
\end{align}

When $\gamma$ and $\sigma$ are clear from context, we omit them and simply denote the likelihood ratio by $T$.

\begin{remark}[Periodic Gaussian] The likelihood ratio $T_\gamma^\sigma$ is equivalent (up to constants depending on $\gamma, \sigma$) to the periodic Gaussian function $f_{\cL,\sigma}(z)$ with parameter $\sigma > 0$ on the one-dimensional lattice $\cL = (\sqrt{1+\sigma^2}/\gamma)\ZZ$. For any lattice $\cL \subset \RR^d$, parameter $s > 0$, and shift $\bt \in \RR^d$, the periodic Gaussian function $f_{\cL, s} : \RR^d \to \RR$ is formally defined by
\begin{align*}
    f_{\cL,s}(\bt) := \frac{\rho_s(\cL-\bt)}{\rho_s(\cL)}\;.
\end{align*}

These functions have been extensively studied in lattice-based cryptography~\cite{stephens2017gaussian}.
\end{remark}

\paragraph{Gaussian pancakes.} We define Gaussian pancakes distributions using the likelihood ratio $T_\gamma^\sigma = A_\gamma^\sigma/Q$. It is important to note that our parametrization differs from the one used in previous works~\cite{bruna2020continuous, gupte2022continuous}. We believe our parametrization is more convenient as it elucidates a natural partial ordering on the space of parameters $(\gamma, \sigma)$. In addition, there is an explicit mapping between the two different parametrizations, so computationally hard parameter regimes identified by previous works~\cite{bruna2020continuous,gupte2022continuous} can readily be translated into setting. See~\refRem{rem:partial-ordering} and~\refRem{rem:brst-parametrization} for more details.

\begin{definition}[Gaussian pancakes]
\label{def:gaussian-pancakes}
For any $d \in \NN$, spacing and thickness parameters $\gamma, \sigma > 0$, we define the \emph{$(\gamma,\sigma)$-Gaussian pancakes distribution} $P_{\gamma,\bu}^\sigma$ with secret direction $\bu \in \mathbb{S}^{d-1}$ by
\begin{align*}
    P_{\gamma,\bu}^\sigma(\bx) := Q(\bx) \cdot T_\gamma^\sigma(\inner{\bx,\bu})\;,
\end{align*}
where $Q = \cN(0,(1/2\pi)I_d)$ and $T_\gamma^\sigma$ is the likelihood ratio of $A_\gamma^\sigma$ with respect to $Q$. When parameters $\gamma, \sigma$ are clear from context, we omit them in the notation and simply denote the distribution by $P_{\bu}$ to avoid clutter.
\end{definition}

\begin{remark}[Partial ordering on Gaussian pancakes]
\label{rem:partial-ordering}
    The smoothed discrete Gaussian $A_\gamma^\sigma$ arises in the OU process for the discrete Gaussian $A_\gamma$ at time $t = \log \big(\sqrt{1+\sigma^2}\big)$. Consequently, for any fixed $\gamma > 0$, there exists a natural partial ordering on the family of Gaussian pancakes parametrized by $(\gamma, \sigma)$, given by $(\gamma, \sigma_1) \le (\gamma, \sigma_2)$ whenever $\sigma_1 \le \sigma_2$. This ordering arises from the fact that $A_\gamma^{\sigma_1}$ reduces to $A_\gamma^{\sigma_2}$ whenever $\sigma_1 \le \sigma_2$ via the OU process starting at $t_1 = \log \big(\sqrt{1+\sigma_1^2}\big)$ and run until $t_2 = \log \big(\sqrt{1+\sigma_2^2}\big)$.
\end{remark}

\begin{remark}[Previous parametrization of Gaussian pancakes]
\label{rem:brst-parametrization}
    Let $(\zeta, \beta)$ be the parameterization for Gaussian pancakes used in Bruna et al.~\cite{bruna2020continuous}, where $\zeta$ controls the ``spacing'' and $\beta$ controls the ``thickness''. The mapping between $(\zeta,\beta)$ and our parametrization $(\gamma,\sigma)$ is given by
    \begin{align*}
        \zeta = \frac{\gamma}{\sqrt{1+\sigma^2}}\;,\qquad
        \beta = \frac{\sigma \gamma}{\sqrt{1+\sigma^2}}\;.
    \end{align*}

    We note that motivation for the $(\zeta, \beta)$ parametrization in previous works was provided by its direct correspondence with parameters of the continuous LWE problem~\cite{bruna2020continuous,gupte2022continuous}.
\end{remark}

\begin{definition}[Advantage] Let $\cA : \cX \to \{0,1\}$ be any decision rule (i.e., distinguisher). For any pair of distributions $(\cP,\cQ)$ on $\cX$, we define the \emph{advantage} of $\cA$ by
\begin{align*}
    \alpha(\cA) := \Big|\cP[\cA(X) = 1] -\cQ[\cA(X) = 1]\Big|\;.
\end{align*}

For a sequence of decision rules $(\cA_d)_{d \in \NN}$ and distribution pairs $(\cP_d, \cQ_d)_{d \in \NN}$, we say $(\cA_d)$ has \emph{non-negligible advantage} with respect to $(\cP_d,\cQ_d)$ if its advantage sequence $\alpha_d = \alpha(\cA_d)$ is a non-negligible function in $d$, i.e., a function in $\Omega(d^{-c})$ for some constant $c > 0$.
\end{definition}

\begin{definition}[Computational indistinguishability]
\label{def:computational-indistinguishability}
A sequence of distribution pairs $(\cP_d, \cQ_d)$ is \emph{computationally indistinguishable} if no $\poly(d)$-time computable decision rule achieves non-negligible advantage.
\end{definition}

\begin{definition}[Gaussian pancakes problem]
For any sequences $\gamma(d), \sigma(d) > 0$ and $n(d) \in \NN$, the $(\gamma,\sigma,n)$-\emph{Gaussian pancakes problem} is to distinguish $(\cP_d, \cQ_d)$ with non-negligible advantage, where $\cP_d$ is the $n$-sample distribution induced by the following two-stage process: 1) draw $\bu$ uniformly from $\mathbb{S}^{d-1}$, 2) draw $n$ i.i.d.~Gaussian pancakes samples $\bx_1,\ldots,\bx_n \sim P_{\bu}^{\otimes n}$, and $\cQ_d = Q_d^{\otimes n}$, i.e., the distribution of $n$ i.i.d.~standard Gaussian vectors.
\end{definition}

As will be explained next, the exact number of samples $n$ is irrelevant for most applications due to the cryptographic hardness of the Gaussian pancakes problem. For certain parameter regimes of $(\gamma, \sigma)$, increasing $n$ will \emph{not} reduce the problem's complexity to polynomial time. In contrast to problems like tensor PCA~\cite{richard2014statistical,dudeja2021statistical}, where increasing the number of samples can lead to a ``phase transition'' in computational complexity, the Gaussian pancakes problem maintains its computational intractability regardless of the sample size.

\paragraph{Hardness of Gaussian pancakes.} There is an abundance of evidence demonstrating the hardness of the Gaussian pancakes problem. This makes it compelling to directly assume that Gaussian pancakes and the standard Gaussian are \emph{computationally indistinguishable} (see~\refDef{def:computational-indistinguishability}) for certain parameter regimes of $(\gamma, \sigma)$. Initial results by Bruna et al.~\cite[Corallary 4.2]{bruna2020continuous} showed that the Gaussian pancakes problem is as hard as worst-case lattice problems for any parameter sequence $(\gamma, \sigma)$ satisfying $\gamma \ge 2\sqrt{d}$ and $\sigma \ge 1/\poly(d)$. SQ hardness of the problem has been demonstrated as well~\cite{diakonikolas2017statistical,bruna2020continuous,diakonikolas2024sq}. Perhaps surprisingly, the reduction of Bruna et al.~shows that worst-case lattice problems reduce to the $(\gamma, \sigma, n)$-Gaussian pancakes problem for \emph{any} $n \in \NN$. Specifically, they show that even with unlimited query access to an \emph{exact} sampling oracle, no polynomial-time algorithm $\cA$ can achieve non-negligible advantage on the $(\gamma,\sigma)$-Gaussian pancakes problem. This stems from the fact that the running time of $\cA$ naturally restricts the number of samples it can ``see'', resolving the apparent mystery.

An important follow-up work by Gupte et al.~\cite{gupte2022continuous} reduced the well-known LWE problem to the Gaussian pancakes problem. Assuming sub-exponential hardness of LWE~\cite{lindner2011better}, a standard assumption underlying post-quantum cryptosystems expected to be standardized by NIST, the Gaussian pancakes problem is hard for any $\gamma \ge (\log d)^{1+\eps}$, where $\eps > 0$ is any constant, and $\sigma \ge 1/\poly(d)$~\cite[Section 1.2]{gupte2022continuous}. Taken together, these findings strongly support the hardness of Gaussian pancakes for the specified regimes of $(\gamma, \sigma)$. Note, however, that the condition $\sigma \ge 1/\poly(d)$ is \emph{necessary} for hardness as there exist polynomial-time algorithms, based on lattice basis reduction, for exponentially small $\sigma$~\cite{zadik2022lattice,diakonikolas2022non}.

\section{Hardness of Score Estimation}
\label{sec:hardness-score-estimation}

Our main result is~\refThm{thm:reduction-to-score-estimation}, which presents a reduction from the Gaussian pancakes problem to $L^2$-accurate score estimation. Since the Gaussian pancakes problem exhibits both cryptographic and SQ hardness in the parameter regime $\gamma \ge 2\sqrt{d}$ and $\sigma \ge 1/\poly(d)$~\cite{bruna2020continuous,gupte2022continuous}, these notions of hardness extend to the task of estimating scores of Gaussian pancakes. Further details on the hardness of the Gaussian pancakes problem can be found in~\refSc{sec:prelim-gaussian-pancakes}.

\begin{theorem}[Main result]
\label{thm:reduction-to-score-estimation}
    Let $\gamma(d) > 1, \sigma(d) > 0$ be any pair of sequences such that $\sigma \ge 1/\poly(d)$ and the corresponding (sequence of) $(\gamma, \sigma)$-Gaussian pancakes distributions $(P_{\bu})_{\bu \in \mathbb{S}^{d-1}}$ satisfies $\tv(P_{\bu}, Q_d) > 1/2$ for any $d \in \NN$.
    Then, for any $\delta \in (0,1)$, there exists a $\poly(d)\cdot \log(1/\delta)$-time algorithm with access to a score estimation oracle of $L^2$-error $O(1/\sqrt{\log d})$ that solves the $(\gamma,\sigma)$-Gaussian pancakes problem with probability at least $1-\delta$.
\end{theorem}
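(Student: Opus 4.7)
The strategy is to build a test statistic from the oracle's score estimates alone. The key observation is that at time $t=0$ the true score of $Q_d$ is exactly $-2\pi \bx$, while the true score of $P_\bu$ is $-2\pi \bx + \Lambda(\inner{\bx, \bu}) \bu$ with $\Lambda := (\log T_\gamma^\sigma)'$ (see~\refDef{def:gaussian-pancakes}). Thus the oracle must produce qualitatively different estimates in the two cases, and an $L^2$-level comparison against the Gaussian score will expose the difference.

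Concretely, given samples $\bx_1,\ldots,\bx_n \sim D$ with $n = \poly(d)$ and querying the oracle's estimate $s_0$ at time $0$, the algorithm computes
\[
\hat R \;:=\; \frac{1}{n}\sum_{i=1}^n \|s_0(\bx_i) + 2\pi \bx_i\|^2
\]
and outputs ``pancakes'' iff $\hat R$ exceeds a chosen threshold $\tau$. Under $D = Q_d$ the true score equals $-2\pi \bx$ identically, so the oracle's $L^2$-accuracy gives $\mathbb{E}_Q[\hat R] \le \epsilon^2 = O(1/\log d)$. Under $D = P_\bu$, writing $s_0(\bx) + 2\pi \bx = (s_0(\bx) - \nabla \log P_\bu(\bx)) + \Lambda(\inner{\bx, \bu})\bu$ and applying the reverse triangle inequality in $L^2(P_\bu)$, together with the fact that the $\bu$-marginal of $P_\bu$ is $A_\gamma^\sigma$, yields
\[
\sqrt{\mathbb{E}_{P_\bu}[\hat R]} \;\ge\; \sqrt{\mathbb{E}_{A_\gamma^\sigma}[\Lambda^2]} - \epsilon.
\]

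The main quantitative step is to show that the relative Fisher information $\mathbb{E}_{A_\gamma^\sigma}[\Lambda^2]$ is at least a positive absolute constant under the two hypotheses $\sigma \ge 1/\poly(d)$ and $\tv(A_\gamma^\sigma, Q_1) > 1/2$. Differentiating the mixture form of Claim~\ref{claim:smoothed-discrete-gaussian-density} gives $\Lambda(y) = -(2\pi/\sigma^2)\,\mathbb{E}_{k \mid y}[y - c_k]$ where $c_k = \sqrt{1+\sigma^2}\,k/\gamma$ and the posterior over $k$ has weights proportional to $\rho_\sigma(y - c_k)$. The TV hypothesis forces $\sigma\gamma$ to be bounded above by an absolute constant, since by Banaszczyk-type smoothing-parameter estimates for the lattice $(1/\gamma)\ZZ$, $\sigma \gg 1/\gamma$ would make $A_\gamma^\sigma$ indistinguishable from $Q_1$ in total variation. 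In this ``spiky'' regime the posterior over $k$ concentrates on the nearest coset point at distance $O(\sigma)$, so $|\Lambda(y)| = \Omega(1/\sigma)$ on a set of $\Omega(1)$ probability under $A_\gamma^\sigma$, yielding $\mathbb{E}_{A_\gamma^\sigma}[\Lambda^2] = \Omega(1/\sigma^2) \ge \Omega(\gamma^2) \ge \Omega(1)$. I expect this transfer from the qualitative $\tv$-hypothesis to a quantitative Fisher lower bound of $\Omega(1) \gg \epsilon^2$ to be the main technical obstacle; the side role of the assumption $\sigma \ge 1/\poly(d)$ is to keep the Lipschitz constant of $\nabla \log P_\bu$ polynomial, so that the $L^2$-accuracy hypothesis on the oracle is a meaningful constraint.

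Concentration and amplification are then routine. Using the score Lipschitzness (Assumption~\ref{assume-lipschitz}) and the $L^2$-accuracy guarantee, one bounds the variance of the single-sample contribution $\|s_0(\bx) + 2\pi\bx\|^2$ by $\poly(d)$, so Chebyshev with $n = \poly(d)$ samples forces $\hat R$ within $o(1)$ of its mean in each case. Setting $\tau$ between $\epsilon^2$ and the Fisher lower bound gives a constant-probability distinguisher, and $O(\log(1/\delta))$ independent repetitions on fresh sample batches followed by a majority vote drive the failure probability below $\delta$ within the claimed $\poly(d)\cdot\log(1/\delta)$ running-time budget.
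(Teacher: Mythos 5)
Your proposal takes a genuinely different route from the paper, and you have correctly located where the difficulty lies; but that difficulty is precisely what the paper's argument is designed to sidestep, and two secondary steps in your outline also need more care.

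Your test statistic averages $\|s_0(\bx_i)+2\pi\bx_i\|^2$ over samples $\bx_i\sim D$ and you lower-bound it under $P_\bu$ via the relative Fisher information $\EE_{A_\gamma^\sigma}[\Lambda^2]$, reducing the whole theorem to the claim $\EE_{A_\gamma^\sigma}[\Lambda^2]=\Omega(1)$ in the regime $\gamma\sigma=O(1)$. The paper instead never estimates a Fisher-type quantity. It forms $\Delta=\max_{k\in[N]}\EE_{Q_d}\|s_{kh}(\bx)+2\pi\bx\|^2$, with the expectation over \emph{self-generated} Gaussian samples (not over $D$), and observes that if $\Delta$ were small then the given scores would be $L^2$-accurate for the $Q_d$ forward process as well as for $D$'s, so running the DDPM reverse process with those scores (Theorem~\ref{thm:chen-reduction}) would produce an output TV-close to both $D$ and $Q_d$ simultaneously, contradicting $\tv(D,Q_d)>1/2$ by the triangle inequality (Lemma~\ref{lem:gaussianity-testing}). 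This indirection buys you the $\Omega(1/\log d)$ lower bound on $\Delta$ for free, for any distribution $D$ satisfying Assumptions~\ref{assume-lipschitz}--\ref{assume-l2score} with $\tv(D,Q_d)>1/2$; the Gaussian-pancakes structure is used only to verify the assumptions (Lemmas~\ref{lem:score-lipschitzness}, \ref{lem:second-moment}, \ref{lem:pancakes-kl}). Your route would instead require the separate, pancakes-specific Fisher estimate you flag as the main obstacle; it is plausible via your posterior-concentration heuristic, but it is a real computation that the paper's proof does not contain and deliberately avoids.

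Two smaller issues. First, your concentration step claims to bound the variance of $\|s_0(\bx)+2\pi\bx\|^2$ using the Lipschitzness of $\nabla\log D_t$ and the $L^2$-error guarantee, but Assumption~\ref{assume-lipschitz} controls only the \emph{true} score, not the estimate $s_0$, and an $L^2(D)$ bound on $\|s_0-\nabla\log D\|$ gives no control on higher moments of $s_0$; the paper handles exactly this by truncating the estimate at a $\poly(d)$ threshold and showing the truncation adds only negligible $L^2$ error. Second, the inference from $\tv(A_\gamma^\sigma,Q)>1/2$ to $\gamma\sigma=O(1)$ is what the paper's Lemma~\ref{lem:smoothed-discrete-gaussian-tv-ub} provides in the contrapositive (with the mild technical hypothesis $\sigma\ge 2/\gamma$), so that step is available but does need to be cited or reproved. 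Finally, note that you use only the $t=0$ score, whereas Assumption~\ref{assume-l2score} is phrased for $k=1,\ldots,N$; this is easy to fix by working at a small positive time, but it is another place where the bookkeeping differs from the paper's, which uses the full sequence $(s_{kh})_{k\in[N]}$ precisely because Theorem~\ref{thm:chen-reduction} needs it.
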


The requirement $\tv(P_{\bu},Q_d) > 1/2$ is mild and entirely captures interesting parameter regimes of $(\gamma, \sigma)$ for which cryptographic and SQ hardness of Gaussian pancakes are known. We provide a sufficient condition in~\refLem{lem:smoothed-discrete-gaussian-tv-lb}, which shows that $\gamma \sigma < C$ for some constant $C > 0$ ensures separation in TV distance.

\refThm{thm:reduction-to-score-estimation} implies that even a score estimation oracle running in time $\poly(d,2^{1/\eps^2})$, where $\eps > 0$ is the $L^2$ estimation error bound, implies a $\poly(d)$ time algorithm for the Gaussian pancakes problem. This means that estimating the score functions of Gaussian pancakes to $L^2$-accuracy $\eps$ even in $\poly(d, 2^{1/\eps^2})$ time is impossible under standard cryptographic assumptions.

\subsection{Proof of~\refThm{thm:reduction-to-score-estimation}}
\label{sec:main-proof}

Before delving into the proof of~\refThm{thm:reduction-to-score-estimation}, we recall the sampling-to-score estimation reduction of Chen et al.~\cite{chen2022sampling} and its required assumptions to illustrate the main idea behind our reduction. The precise formulation of our idea is given in~\refLem{lem:gaussianity-testing}.

\paragraph{Assumptions on data distribution.}
\phantomsection
\label{assumptions}
The reduction of Chen et al.~\cite{chen2022sampling} requires the following assumptions on the data distribution $D$ over $\RR^d$.

\begin{enumerate}[label=\textbf{A\arabic*},ref=A\arabic*]
    \item \label{assume-lipschitz} (Lipschitz score). For all $t\ge 0$, the score $\nabla \log D_t$ is $L$-Lipschitz.
    \item \label{assume-moment} (Finite second moment). $D$ has finite second moment, i.e., $\mathfrak{m}_2^2 := \EE_{\bx \sim D}[\|\bx\|^2] < \infty$.
    \item \label{assume-l2score} (Score estimation error). For step size $h:=T/N$ and all $k = 1,\ldots,N$,
    \begin{align*}
        \EE_{D_{kh}}[\|s_{kh}-\nabla \log D_{kh}\|^2] \le \eps_{\rm score}^2\;.
    \end{align*}
\end{enumerate}

\begin{theorem}[{\cite[Theorem 2]{chen2022sampling}}] 
\label{thm:chen-reduction}
Suppose assumptions \textbf{\emph{A1-A3}} hold. Let $Q_d$ be the standard Gaussian on $\RR^d$ and let $F_T$ be the output of the DDPM algorithm (\refSc{sec:prelim-ddpm}) at time $T$ with step size $h := T/N$ such that $h \lesssim 1/L$, where $L \ge 1$. Then, it holds that
    \begin{align}
    \label{eqn:diffusion-tv-bound}
        \tv(F_T, D)
        &\lesssim {\underbrace{\sqrt{\kl(D \mmid Q_d)} \cdot \exp(-T)}_{\text{convergence of forward process}}} + \, \, \, \,  {\underbrace{(L \sqrt{dh} + L \mathfrak{m}_2 h)\,\sqrt T}_{\text{discretization error}}}
        \, \, \, \, \, + {\underbrace{\eps_{\rm score} \sqrt{T}}_{\text{score estimation error}}}\;.
    \end{align}

In particular, if $\mathfrak{m}_2 \le d$, then $T \asymp \max(\log(\kl(D \mmid Q_d)/\eps),1)$ and $h \asymp \eps^2/(L^2d)$ gives
\begin{align*}
    \tv(F_T, D) \lesssim \eps + \eps_{\rm score} \cdot \max(\sqrt{\log(\kl(D \mmid Q_d)/\eps)},1)\;, \qquad \text{for}~N = \tilde \Theta\Big( \frac{L^2 d}{\varepsilon^2}\Big)\;.
\end{align*}
\end{theorem}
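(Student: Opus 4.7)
The plan is to decompose $\tv(F_T, D)$ via the triangle inequality into three independently analyzable contributions: (i) an \emph{initialization error} from substituting $Q_d$ for $D_T$ at the start of the reverse process, (ii) a \emph{discretization error} from the Euler-type scheme with step size $h$, and (iii) a \emph{score estimation error} from using the learned $s_{kh}$ in place of $\nabla \log D_{kh}$. The three additive terms in Eq.~\eqref{eqn:diffusion-tv-bound} correspond precisely to these three sources. The main tools will be exponential ergodicity of the OU semigroup, Girsanov's theorem, Pinsker's inequality, and the data-processing inequality.

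For the initialization error, let $\tilde F_T$ denote the law of the continuous-time reverse SDE~\eqref{eqn:reverse-process} run with the \emph{true} scores but initialized at $Q_d$ in place of $D_T$. Since the same dynamics initialized at $D_T$ produce $D$ at time $T$, the data-processing inequality yields $\tv(\tilde F_T, D) \le \tv(Q_d, D_T)$. The log-Sobolev inequality for $Q_d$ (the stationary measure of the OU flow) gives exponential decay $\kl(D_T \mmid Q_d) \le e^{-2T}\,\kl(D \mmid Q_d)$, and Pinsker's inequality then furnishes the first term $\exp(-T)\sqrt{\kl(D \mmid Q_d)}$.

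To bound $\tv(F_T, \tilde F_T)$, I would apply Girsanov's theorem to compare two path measures on $C([0,T];\RR^d)$ sharing the same Brownian motion and initial law: the true reverse SDE with drift $-Y_t - (1/\pi)\nabla \log D_{T-t}(Y_t)$, and a piecewise-constant-drift interpolant of \eqref{eqn:reverse-discretized} that freezes state and score at each grid point $kh$. With diffusion coefficient $1/\sqrt{\pi}$, Girsanov gives
\begin{align*}
\kl \lesssim \sum_{k=0}^{N-1}\int_{kh}^{(k+1)h}\EE\bigl\|s_{T-kh}(Y_{kh}) - \nabla \log D_{T-t}(Y_t)\bigr\|^2\,dt\;.
\end{align*}
Splitting the integrand by the triangle inequality produces a score-error term bounded by $\eps_{\rm score}^2$ via~\ref{assume-l2score} (using that in the idealized comparison $Y_{kh}$ has law $D_{T-kh}$, which is exactly the marginal in the assumption), and a discretization term bounded by $L^2\,\EE\|Y_t - Y_{kh}\|^2$ via~\ref{assume-lipschitz}. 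Standard SDE moment estimates, leveraging the Lipschitz drift~\ref{assume-lipschitz} and the second-moment bound~\ref{assume-moment}, give $\EE\|Y_t - Y_{kh}\|^2 \lesssim dh + \mathfrak{m}_2^2 h^2$ over a sub-interval of length $h$. Integrating, summing over $N = T/h$ steps, and applying Pinsker yields $L\sqrt{Tdh} + L\mathfrak{m}_2 h\sqrt{T} + \eps_{\rm score}\sqrt{T}$, which are the remaining two terms.

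The main technical obstacle is the Girsanov bookkeeping: one must verify exponential integrability (Novikov's condition, or a localization-plus-truncation argument) so that the stochastic exponential is a true martingale, and one must carefully arrange the comparison so that the $L^2$-score assumption~\ref{assume-l2score}, stated under the forward marginals $D_{kh}$, can be applied directly. The standard fix is to introduce an idealized coupling in which the discretized process is analyzed against the true reverse SDE both initialized at $D_T$ (so $Y_{kh}$ has the correct law $D_{T-kh}$), carry out the Girsanov bound there, and then absorb the discrepancy with the actual algorithm started at $Q_d$ into the initialization term via data-processing for TV. Finally, the specialized corollary follows by choosing $T \asymp \max(\log(\kl(D \mmid Q_d)/\eps), 1)$ to kill the first term up to $\eps$, and $h \asymp \eps^2/(L^2 d)$ to make the discretization contributions $O(\eps)$, giving $N = T/h = \tilde\Theta(L^2 d/\eps^2)$ as claimed.
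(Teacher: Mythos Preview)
The paper does not provide its own proof of this statement: \refThm{thm:chen-reduction} is quoted verbatim as \cite[Theorem 2]{chen2022sampling} and used as a black box in the proof of \refLem{lem:gaussianity-testing} and \refThm{thm:reduction-to-score-estimation}. There is therefore nothing in the present paper to compare your proposal against.

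That said, your sketch is a faithful outline of the argument in \cite{chen2022sampling}: the three-way decomposition into initialization, discretization, and score-estimation errors; exponential KL contraction of the OU semigroup plus Pinsker for the first term; Girsanov against an interpolated process plus the Lipschitz assumption~\ref{assume-lipschitz} and second-moment bound~\ref{assume-moment} for the second; and the $L^2$-score assumption~\ref{assume-l2score} for the third. You also correctly identify the main technical subtlety (justifying Girsanov via Novikov/localization and arranging the comparison so that the score-error hypothesis, stated under the forward marginals $D_{kh}$, applies). If you intend to flesh this out into a full proof, you would be re-deriving the result of \cite{chen2022sampling} rather than anything in the present paper.
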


\refThm{thm:chen-reduction} shows that if the unknown data distribution $D$ has Lipschitz scores and satisfies $\mathfrak{m}_2 \le d$, then its $\eps$-accurate score estimates along the discretized forward process $(s_{kh})_{k \in [N]}$ can be used to compute a \emph{certificate} of Gaussianity defined as follows.
\begin{align}
\label{eqn:test-statistic}
    \Delta := \max_{k \in [N]} \EE_{Q_d}\| s_{kh}(\bx) + 2\pi \bx \|^2\;.
\end{align}

This populational quantity is motivated by the observation that the discretized reverse process depends on the data distribution $D$ \emph{solely} through its score estimates (see Eq.\eqref{eqn:reverse-discretized}). For \emph{any} sequence of score estimates $(s_t)_{t \in [0,T]}$, regardless of how erratically they may behave on different distributions of $\bx$, the reverse process outputs $F_T$ that is TV-close to $D$ provided its $L^2$ error along the forward process $(D_t)_{t \in [0,T]}$ is small.

We claim that if $\Delta \le \eta^2$ and the score estimates $(s_{kh})$ are $\eps$-accurate for $(D_{kh})$, then the output of the reverse process $F_T$ is roughly $(\eps + \eta)$-close in TV distance to the standard Gaussian. This is because the standard Gaussian is invariant throughout the OU process, so $\Delta$ is, in fact, the $L^2$ score estimation error bound for the case where the data distribution $D$ is equal to $Q_d$. In other words, $\Delta \le \eta^2$ means that for all $k \in [N]$, the score estimates $(s_{kh})$ are $\eta$-close to $-2\pi \bx$ which is the score function of $Q_d$. Thus, $\Delta$ is small only if $D$ is close in TV distance to $Q_d$, which shows that $\Delta$ distinguishes between $D = Q_d$ and $D \in (P_{\bu})_{\bu \in \mathbb{S}^{d-1}}$ provided $\tv(P_{\bu}, Q_d) > 1/2$. The following lemma formalizes this idea.

\begin{lemma}[Gaussianity testing with scores]
\label{lem:gaussianity-testing}
    For any $\eps \in (0,1)$ and $K \ge 2$, let $D$ be any distribution on $\RR^d$ such that $\mathfrak{m}_2 \le d$ and $\kl(D \mmid Q_d) \le K$ with $L$-Lipschitz score $\nabla \log D_t$ for any $t \ge 0$, and let $\tilde{\eps} \asymp \eps/\sqrt{\log(K/\eps)}$ be the $L^2$ score estimation error bound with discretization parameters $T \asymp \log (K/\eps), h \asymp \eps^2/(L^2d)$, and $N:=T/h$ so that $\tv(F_T \mmid D) \le \eps$ (via~\refThm{thm:chen-reduction}). If $(s_{kh})_{k \in [N]}$ are $\tilde{\eps}$-accurate score estimates for the forward process $(D_{kh})_{k \in [N]}$ and $\Delta = \max_{k \in [N]} \EE_{Q_d}\|s_{kh}(\bx) + 2\pi \bx \|^2$, then
    \begin{align*}
        \tv(D, Q_d) \lesssim \eps + \sqrt{\Delta\log(K/\eps)}\;.
    \end{align*}
    
    In particular, if $\tv(D, Q_d) > 1/2$ then there exist constants $C_1, C_2 > 0$ such that for any score estimates $(s_{kh})_{k \in [N]}$ of the forward process satisfying $\max_{k \in [N]} \EE_{D_{kh}}\|s_{kh}(\bx) - \nabla \log D_{kh}(\bx) \|^2 \le  C_1/\log K$, it holds
    \begin{align*}
        \Delta \ge C_2/\log K\;.
    \end{align*}
\end{lemma}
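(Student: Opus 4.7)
The plan is to view the score estimates $(s_{kh})$ simultaneously as estimates for the forward processes of $D$ and of $Q_d$, and then invoke Theorem~\ref{thm:chen-reduction} twice. The key observation is that the DDPM update rule in Eq.~\eqref{eqn:reverse-discretized} is initialized from $Q_d$ and depends on the data distribution \emph{only} through the score estimates, so the output distribution $F_T$ is a single fixed object once $(s_{kh})$ and the discretization parameters are fixed; its TV distance to $D$ and its TV distance to $Q_d$ are then both controlled by the same theorem, but with the score error measured against the respective target.

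First I would apply Theorem~\ref{thm:chen-reduction} with data distribution $D$, Lipschitz constant $L$, second-moment bound $d$, KL bound $K$, discretization $T \asymp \log(K/\eps)$, $h \asymp \eps^2/(L^2 d)$, and score error $\tilde{\eps} \asymp \eps/\sqrt{\log(K/\eps)}$, which gives $\tv(F_T, D) \lesssim \eps$. Next I would apply the same theorem with data distribution $Q_d$, keeping the same discretization parameters. Since $Q_d$ is the stationary distribution of the OU forward process, $Q_{d,t} = Q_d$ for every $t \ge 0$ and its true score is $\nabla \log Q_d(\bx) = -2\pi\bx$; hence the worst-case $L^2$ error of $(s_{kh})$ as score estimates for the forward process of $Q_d$ is exactly $\sqrt{\Delta}$. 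The remaining hypotheses are trivially satisfied by $Q_d$: its score is $2\pi$-Lipschitz (so the same $L\ge 1$ works), its second moment is $d/(2\pi)\le d$, and $\kl(Q_d\mmid Q_d)=0$, so the convergence-of-forward-process term vanishes and the discretization term is bounded exactly as in the $D$-application. This yields $\tv(F_T, Q_d) \lesssim \eps + \sqrt{\Delta}\sqrt{T} \asymp \eps + \sqrt{\Delta \log(K/\eps)}$, and the triangle inequality between the two bounds produces the first conclusion of the lemma.

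For the second part I would fix $\eps$ to be an absolute constant small enough that its contribution to the first bound is at most, say, $1/4$. Then $\tv(D, Q_d) > 1/2$ forces $\sqrt{\Delta \log(K/\eps)} \gtrsim 1/4$, and since $\log(1/\eps)$ is now an absolute constant this gives $\Delta \ge C_2/\log K$ for a suitable $C_2 > 0$. With the same choice of $\eps$, the score-error requirement $\tilde{\eps}^2 \asymp \eps^2/\log(K/\eps) \asymp 1/\log K$ needed for the first part matches the hypothesis $C_1/\log K$ for the appropriate absolute constant $C_1$. I expect no substantive obstacle in this plan: the only bookkeeping is to confirm that reusing the $D$-application's discretization parameters for $Q_d$ is valid, which is immediate because $L$, $T$, $h$ are unchanged and $Q_d$'s relevant parameters (KL, second moment, and Lipschitz constant of the score) are only smaller than those of $D$.
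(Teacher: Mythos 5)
Your proposal is correct and follows essentially the same route as the paper: apply Theorem~\ref{thm:chen-reduction} twice with the same score estimates and discretization parameters --- once with target $D$ (score error $\tilde{\eps}$) and once with target $Q_d$ (score error $\sqrt{\Delta}$, using that $Q_d$ is OU-stationary so its forward-process scores are all $-2\pi\bx$) --- combine via the triangle inequality, and then fix $\eps$ to a sufficiently small absolute constant to extract the lower bound on $\Delta$. The only difference is that you verify the hypotheses of the theorem for $Q_d$ (KL, second moment, Lipschitz constant) in slightly more detail than the paper does, which is a useful check but not a different argument.
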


\begin{proof}
    By~\refThm{thm:chen-reduction} and our choice of $L^2$ score estimation error bound $\tilde{\eps}$, we have $\tv(F_T, D) \le \eps$. In addition, the score estimates $(s_t)$ also satisfy a $\sqrt{\Delta}$-error bound with respect to the forward process of $Q_d$, which is invariant with respect to time $t$. Thus,~\refThm{thm:chen-reduction} applied with $D=Q_d$ as the data distribution, discretization parameters $T, h$, and score estimates $(s_{kh})$ gives $\tv(F_T,Q_d) \lesssim \sqrt{\Delta\log(K/\eps)}$. By the triangle inequality, we have
    \begin{align*}
        \tv(D, Q) \le \tv(F_T, D) + \tv(F_T, Q) \lesssim \eps + \sqrt{\Delta\log(K/\eps)}\;.
    \end{align*}

    The second part of the theorem follows from using the assumptions $\tv(D,Q) > 1/2$, $K \ge 2$, and fixing $\eps > 0$ to a sufficiently small constant, which gives us
    \begin{align*}
        1 \lesssim \sqrt{\Delta \log K}\;.
    \end{align*}
\end{proof}

Given the above key lemma, we proceed to proving~\refThm{thm:reduction-to-score-estimation}.

\begin{proof}[Proof of~\refThm{thm:reduction-to-score-estimation}]
    Let $D \in (P_{\bu}) \cup Q$ be the given data distribution. We first verify that assumptions \ref{assume-lipschitz}-\ref{assume-l2score} hold, allowing us to apply the score-based Gaussianity test from~\refLem{lem:gaussianity-testing}. For any $\sigma(d), \gamma(d) > 0$ such that $\sigma \ge 1/\poly(d)$, the $(\gamma,\sigma)$-Gaussian pancakes satisfy the Lipschitz score condition $L \le \poly(d)$ (Assumption~\ref{assume-lipschitz}) by~\refLem{lem:score-lipschitzness} and the second moment bound $\mathfrak{m}_2 \le d$ (Assumption~\ref{assume-moment}) by~\refLem{lem:second-moment}. In addition, $\kl(P_{\bu} \mmid Q) \le \poly(d)$ (\refLem{lem:pancakes-kl}), so~\refLem{lem:gaussianity-testing} applies to $D$ with $K = \poly(d)$. Moreover, since $\tv(P_{\bu}, Q) > 1/2$,~\refLem{lem:gaussianity-testing} implies that if $D \in (P_{\bu})_{\bu \in \mathbb{S}^{d-1}}$, there exist universal constants $C_1, C_2 > 0$ such that if the score estimates $(s_{kh})_{k \in [N]}$ satisfy $\EE_{D_{kh}}\|s_{kh}(\bx)-\nabla \log D_{kh}(\bx)\|^2 \le C_1/\log d$, then $\Delta \ge C_2/\log d$. 
    
    Let $\tau = C_2/\log d$ and $\eta^2 = \min(\tau/4, C_1/\log d)$. Then, we have that with $\eta$-accurate score estimates $(s_{kh})$, we have $\Delta \le \eta^2 \le \tau/4$ if $D=Q$ and $\Delta \ge \tau$ otherwise. Note that $\eta \asymp 1/\sqrt{\log d}$ and $N = T/h \asymp L^2 d \log K \le \poly(d)$. Our proposed distinguisher $\cA$ uses a finite-sample estimate of $\Delta$ as a test statistic (Eq.\eqref{eqn:test-statistic}) using $\eta$-accurate score estimates $(s_{kh})_{k \in [N]}$ and $N\ell$ i.i.d.~standard Gaussian samples $(\bz_i^{(k)})_{(k,i) \in [N] \times [\ell]}$ as follows. Later, it will be shown that setting the batch size $\ell$ to $\ell = \poly(d)$ is sufficient for our distinguisher $\cA$.
    \begin{align*}
        \hat{\Delta} = \max_{k \in [N]} \hat{\Delta}^{(k)}\;,\quad \text{ where } \hat{\Delta}^{(k)} = \frac{1}{\ell} \sum_{i=1}^\ell \big\|s_{kh}(\bz_i^{(k)})+2\pi \bz_i^{(k)}\big\|^2\;.
    \end{align*}

    The distinguisher $\cA$ decides $D=Q$ if $\hat{\Delta} \le \tau/2$ and $D \in (P_{\bu})_{\bu \in \mathbb{S}^{d-1}}$ otherwise. This procedure runs in time $O(N\ell)$ and makes $N = \poly(d)$ queries to the score estimation oracle of $L^2$-accuracy $O(1/\sqrt{\log d})$.
    
    One issue in estimating $\Delta$ is that the score estimates $(s_{kh})$ only satisfy the mean guarantee with respect to the forward process $(D_{kh})$, i.e., $\EE_{D_{kh}} \|s_{kh}(\bx) - \nabla \log D_{kh}(\bx)\|^2 \le \eta^2$. These guarantees do not necessarily provide control over the concentration of random variables $\|s_t(\bz)+2\pi\bz\|^2$ induced by $\bz \sim Q_d$. Moreover, if $D = P_{\bu}$, then $s_t(\bz)$ may behave erratically for $\bz \sim Q_d$, taking on large norms in low density areas between the pancakes, which may deter the estimation of $\Delta$. We handle this by truncating the score estimates.

    Let $M > 0$ be some large number to be determined later. Define the truncated score $\bar{s}$ by
    \begin{align*}
        \bar{s}_t(\bx) = 
        \begin{cases} 
        s_t(\bx) &\text{ if } \|s_t(\bx)\| \le M\;, \\
        \bzero  &\text{ otherwise}\;.
        \end{cases}
    \end{align*}

    We claim that using the truncated score estimates $(\bar{s}_t)$ in place of $(s_t)$ introduces negligible (in data dimension $d$) additional $L^2$ score estimation error with respect to the forward process $(D_t)$ compared to the original score estimates $(s_t)$. Hence,~\refLem{lem:gaussianity-testing} applies with the \emph{uniformly bounded} vector fields $(\tilde{s}_{kh})_{k \in [N]}$ as the $L^2$-accurate score estimates for $(D_{kh})_{k \in [N]}$. For any (discretized) time $0 \le t \le T$ and distribution $D_t$ from the forward process,
    \begin{align*}
        \EE_{D_t}\|\bar{s}_t(\bx)-\nabla \log D_t(\bx)\|^2 
        &= \EE_{D_t}\big[\|s_t(\bx)-\nabla \log D_t(\bx)\|^2 \cdot \one[\|s_t(\bx)\| \le M]\big] \\
        &\qquad+ \EE_{D_t}\big[\|\nabla \log D_t(\bx)\|^2 \cdot \one[\|s_t(\bx)\| > M]\big] \;.
    \end{align*}

    The second term on the RHS can be upper bounded by
    \begin{align}
        \EE_{D_t}\big[\|\nabla \log D_t(\bx)\|^2 \cdot \one[\|s_t(\bx)\| > M]\big] 
        &= \EE_{D_t}\big[\|\nabla \log D_t(\bx)\|^2 \cdot \one[(\|s_t(\bx)\| > M) \wedge (\|\nabla \log D_t(\bx)\| > M/2)]\big] \nonumber \\
        &\quad+
        \EE_{D_t}\big[\|\nabla \log D_t(\bx)\|^2 \cdot \one[(\|s_t(\bx)\| > M) \wedge (\|\nabla \log D_t(\bx)\| \le M/2)]\big] \nonumber \\
        &\le \EE_{D_t}\big[\|\nabla \log D_t(\bx)\|^2 \cdot \one[\|\nabla \log D_t(\bx)\| > M/2]\big] \nonumber \\
        &\quad+ \EE_{D_t}\big[\|s_t(\bx)-\nabla \log D_t(\bx)\|^2 \cdot \one[\|s_t(\bx)\| > M]\big] \label{eqn:truncation-error}\;,
    \end{align}
    where in Eq.\eqref{eqn:truncation-error} we used the fact that if $\|s_t(\bx)\| > M$ and $\|\nabla \log D_t(\bx)\| \le M/2$, then $\|\nabla \log D_t(\bx)\| \le M/2 \le \|s_t(\bx)-\nabla \log D_t(\bx)\|$.

    Putting things together, we have
    \begin{align*}
        \EE_{D_t}\|\bar{s}_t(\bx)-\nabla \log D_t(\bx)\|^2 \le \EE_{D_t}\|s_t(\bx)-\nabla \log D_t(\bx)\|^2 + \EE_{D_t}\big[\|\nabla \log D_t(\bx)\|^2 \cdot \one[\|\nabla \log D_t(\bx)\| > M/2]\big]\;.
    \end{align*}

    It remains to bound $\EE[\|\nabla \log D_t(\bx)\|^2 \cdot \one[\|\nabla \log D_t(\bx)\| > M/2]]$ which depends only on the distribution $D_t$. We choose $M \asymp (\sqrt{d} + 1/\sigma^2)$. If $D = Q_d$, then $D_t = Q_d$ for any $t > 0$, and by Cauchy-Schwarz and norm concentration (see e.g.,~\cite[Theorem 3.1.1]{vershynin2018high}), there exists a constant $C > 0$ such that
    \begin{align*}
        \EE_{Q_d}\big[\|\bx\|^2 \cdot \one[\|\bx\| > M/2]\big] &\le \EE_{Q_d}\|\bx\|^4 \cdot \Pr_{Q_d}[\|\bx\| > M/2] \le \exp(-CM^2)\;.
    \end{align*}

    On the other hand, if $D = P_{\bu}$, then by~\refLem{lem:periodic-gaussian-bounds}, Eq.\eqref{eqn:likelihood-uniform-bound} and the triangle inequality
    \begin{align*}
        \|\nabla \log P_{\bu}(\bx)\| &= \Bigg\|\bx + \frac{(T_\gamma^\sigma)'(\inner{\bx,\bu})}{T_\gamma^\sigma(\inner{\bx,\bu})}\bu\Bigg\| \le \|\bx\| + 8\pi(1+1/\sigma^2)\;.
    \end{align*}

    Using a similar concentration argument, we have
    \begin{align}
        \EE_{P_{\bu}}\big[\|\nabla \log P_{\bu}(\bx)\|^2 \cdot \one[\|\nabla \log P_{\bu}(\bx)\| > M/2]\big] 
        &\lesssim \EE_{P_{\bu}}\big[(\|\bx\|^2+1/\sigma^4+1) \cdot \one[\|\bx\| > M/2-8\pi(1+1/\sigma^2)]\big] \nonumber \\
        &\lesssim \exp(-O(M^2))\;. \label{eqn:pancakes-truncation-error}
    \end{align}
    
    The OU process applied to $P_{\bu}$ only increases the $\sigma$ parameter, so the upper bound in Eq.\eqref{eqn:pancakes-truncation-error} holds uniformly over the forward process $(D_{kh})_{k \in [N]}$ if $D = P_{\bu}$. Thus, choosing $M \asymp (\sqrt{d} +1/\sigma^2) = \poly(d)$ as the truncation threshold suffices to ensure that for all discretized time steps $t=kh$, the $L^2$ score estimation error with respect to the forward process $(D_t)$ introduced by truncating the score estimate $s_t$ to $\bar{s}_t$ is negligible in $d$. Therefore, we apply~\refLem{lem:gaussianity-testing} with $(\tilde{s}_{kh})_{k \in [N]}$ as the score estimates for the forward process $(D_{kh})_{k \in [N]}$.
    
    Since $\|\bar{s}_t(\bz) + 2\pi \bz\|^2$, where $\bz \sim Q_d$, is a random variable with subexponential norm $O(M^2)$, we can apply Bernstein's inequality~\cite[Corollary 2.8.3]{vershynin2018high} to the i.i.d.~sum $\hat{\Delta}^{(k)}$. Thus, for any $k \in [N]$, $\ell \asymp (M^4/\eps^2) \cdot \log(N/\delta)$ Gaussian samples suffice to guarantee accurate estimation of the population mean $\Delta^{(k)}$ with additive error less than $\eps$ with probability at least $1-\delta/N$. By a union bound, with probability at least $1-\delta$, this holds for all $k \in [N]$. Setting $\eps = \tau/8 = \Theta(1/\log d)$ and recalling that $N = \poly(d)$, we have a distinguisher $\cA$ for the Gaussian pancakes problem that makes $N=\poly(d)$ queries to the score estimation oracle, runs in time $N\ell = \poly(d)\cdot \log(1/\delta)$, and is correct with probability at least $1-\delta$.
\end{proof}

\begin{lemma}[Second moment of Gaussian pancakes]
\label{lem:second-moment}
For any $\gamma, \sigma > 0$, and $\bu \in \mathbb{S}^{d-1}$, the $(\gamma,\sigma)$-Gaussian pancakes $P_{\bu}$ satisfies
\begin{align*}
    \EE_{\bx \sim P_{\bu}}[\|\bx\|^2] \le \frac{d}{2\pi}\;.
\end{align*}
\begin{proof}
Without loss of generality, we assume $\bu = \be_1$. Then,
\begin{align*}
    \EE_{\bx \sim P_{\bu}}\|\bx\|^2 = \EE_{x \sim A_\gamma^\sigma}[x^2] + (d-1) \EE_{z \sim Q} z^2 = \EE_{x \sim A_\gamma^\sigma}[x^2] + \frac{d-1}{2\pi} \;.
\end{align*}

In addition,
\begin{align*}
    \EE_{x \sim A_\gamma^\sigma}[x^2] 
    &= \EE_{x \sim A_\gamma} \EE_{z \sim \cN(0,1/(2\pi))} \big(x/\sqrt{1+\sigma^2} + \sigma z/\sqrt{1+\sigma^2}\big)^2 \\
    &= \frac{1}{1+\sigma^2}\cdot \EE_{x \sim A_\gamma}[x^2] + \frac{\sigma^2}{1+\sigma^2}\cdot \frac{1}{2\pi}\;.
\end{align*}

Thus, it suffices to establish an upper bound on $\EE_{x \sim A_\gamma} x^2$, i.e., the second moment of the discrete Gaussian on $(1/\gamma)\ZZ$. Using the Poisson summation formula (\refLem{lem:poisson-summation}) and the fact that the Fourier transform of $x^2\rho(x)$ is $(1/(2\pi)-y^2)\rho(y)$,
\begin{align}
    \EE_{x \sim A_\gamma}[x^2] &= \frac{1}{\rho((1/\gamma)\ZZ)}\sum_{x \in (1/\gamma)\ZZ} x^2 \rho(x) \nonumber \\
    &= \frac{\gamma}{\rho((1/\gamma)\ZZ)}\sum_{y \in \gamma\ZZ} \bigg(\frac{1}{2\pi}-y^2\bigg) \rho(y) \nonumber \\
    &= \frac{1}{2\pi} \cdot \frac{\gamma}{\rho((1/\gamma)\ZZ)}\cdot \rho(\gamma \ZZ) - \frac{\gamma}{\rho((1/\gamma)\ZZ)}\sum_{y \in \gamma\ZZ} y^2 \rho(y) \label{eqn:discrete-gaussian-second-moment} \\
    &< \frac{1}{2\pi} \nonumber
\end{align}
where in Eq.\eqref{eqn:discrete-gaussian-second-moment}, we used the fact that $\rho(\gamma \ZZ) = \rho((1/\gamma)\ZZ)/\gamma$ and that the second term is positive.

Since $\EE_{x \sim A_\gamma^\sigma}[x^2]$ is a convex combination of $\EE_{x \sim A_\gamma}[x^2]$ and $1/(2\pi)$, the conclusion follows.

\end{proof}
\end{lemma}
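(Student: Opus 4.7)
The plan is to reduce the $d$-dimensional bound to a one-dimensional second-moment bound on the discrete Gaussian $A_\gamma$, and then attack that one-dimensional bound via Poisson summation.

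First, by rotational invariance of the orthogonal components of $P_{\bu}$, I may assume without loss of generality that $\bu = \be_1$. Then $\bx \sim P_{\bu}$ decomposes as $x_1 \sim A_\gamma^\sigma$ along $\bu$ and $x_2, \ldots, x_d \sim Q$ independently in the orthogonal directions. This immediately gives $\EE\|\bx\|^2 = \EE_{x \sim A_\gamma^\sigma}[x^2] + (d-1)/(2\pi)$, so it suffices to prove $\EE_{x \sim A_\gamma^\sigma}[x^2] \le 1/(2\pi)$.

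Next, I would use the definitional description of $A_\gamma^\sigma$ as the distribution of $(x + \sigma z)/\sqrt{1+\sigma^2}$ with $x \sim A_\gamma$ and $z \sim Q$ independent. Expanding the square and using independence plus $\EE z = 0$, $\EE z^2 = 1/(2\pi)$, this writes $\EE_{A_\gamma^\sigma}[x^2]$ as the convex combination
\[
\frac{1}{1+\sigma^2}\EE_{x \sim A_\gamma}[x^2] + \frac{\sigma^2}{1+\sigma^2}\cdot \frac{1}{2\pi}\;.
\]
Hence it is enough to show $\EE_{x \sim A_\gamma}[x^2] \le 1/(2\pi)$, since then the convex combination is also bounded by $1/(2\pi)$.

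The heart of the proof, and what I expect to be the main obstacle, is proving $\EE_{x \sim A_\gamma}[x^2] \le 1/(2\pi)$ for the discrete Gaussian on $(1/\gamma)\ZZ$. The difficulty is that there is no obvious monotonicity that lets one compare the discrete and continuous second moments directly; naive term-by-term bounds fail. The plan is to use the Poisson summation formula (Lemma~\ref{lem:poisson-summation}) applied to the function $f(x) = x^2 \rho(x)$. Its Fourier transform is $\widehat{f}(y) = (1/(2\pi) - y^2)\rho(y)$, and the dual of $(1/\gamma)\ZZ$ is $\gamma\ZZ$ with $\det((1/\gamma)\ZZ)^* = \gamma$. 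Poisson summation then gives
\[
\sum_{x \in (1/\gamma)\ZZ} x^2 \rho(x) \;=\; \gamma \sum_{y \in \gamma \ZZ}\Bigl(\tfrac{1}{2\pi} - y^2\Bigr)\rho(y)\;.
\]
Dividing by the normalizer $\rho((1/\gamma)\ZZ)$ and using the identity $\rho(\gamma\ZZ) = \rho((1/\gamma)\ZZ)/\gamma$ (itself an instance of Poisson summation on $\rho$), the $1/(2\pi)$ term collapses to exactly $1/(2\pi)$, leaving a correction equal to $-\gamma/\rho((1/\gamma)\ZZ)\cdot \sum_{y \in \gamma\ZZ} y^2 \rho(y)$, which is manifestly nonpositive. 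This yields the desired strict bound $\EE_{x \sim A_\gamma}[x^2] < 1/(2\pi)$, and combining with the previous reductions completes the proof.
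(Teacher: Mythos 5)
Your proposal is correct and takes essentially the same approach as the paper: the same rotational reduction to the one-dimensional second moment of $A_\gamma^\sigma$, the same convex-combination decomposition in terms of $\EE_{A_\gamma}[x^2]$, and the same Poisson-summation argument with $f(x) = x^2\rho(x)$ and its Fourier transform $(1/(2\pi) - y^2)\rho(y)$, together with the identity $\rho(\gamma\ZZ) = \rho((1/\gamma)\ZZ)/\gamma$ to isolate the nonpositive correction term. No substantive differences.
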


\subsection{Score functions of Gaussian pancakes}
\label{sec:pancakes-properties}

In this section, we show that score functions of Gaussian pancakes distributions are Lipschitz with respect to $\bx \in \RR^d$. The score function of $P_{\bu}$, the $(\gamma, \sigma)$-Gaussian pancakes distribution with secret direction $\bu \in \mathbb{S}^{d-1}$, admits the following analytical expression.
\begin{align}
    \nabla \log P_{\bu}(\bx) = \nabla \log \big(Q(\bx) \cdot T_\gamma^\sigma(\inner{\bx,\bu})\big)
    = -2\pi \bx + \nabla \log T(\inner{\bx,\bu}) 
    = -2\pi \bx + \frac{(T_\gamma^\sigma)'(\inner{\bx,\bu})}{T_\gamma^\sigma(\inner{\bx,\bu})}\bu \;. \label{eqn:score-analytical}
\end{align}

We use Banaszczyk's theorems on the Gaussian mass of lattices~\cite{banaszczyk1993new} to upper bound the Lipschitz constant of $(T_\gamma^\sigma)'/T_\gamma^\sigma$ in terms of $\sigma$.

\begin{theorem}[{\cite[Corollary 1.3.5]{stephens2017gaussian}}]
\label{thm:banaszczyk-tail}
For any lattice $\cL \subset \RR^d$, parameter $s > 0$, shift $\bt \in \RR^d$, and radius $r > \sqrt{d/(2\pi)} \cdot s$,
\begin{align*}
    \rho_s((\cL-\bt)\setminus r\cB_2^d) < \exp(-\pi x^2) \rho_s(\cL)\;,
\end{align*}
where $x := r/s-\sqrt{d/(2\pi)}$ and $\cB_2^d$ denotes the Euclidean ball in $\RR^d$.
\end{theorem}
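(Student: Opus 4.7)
I propose to reproduce Banaszczyk's classical Fourier-analytic argument. The substitution $\bx \mapsto \bx/s$ (together with $\bt \mapsto \bt/s$, $\cL \mapsto \cL/s$) immediately reduces to the case $s=1$, since $\rho_s(s\by) = \rho(\by)$ and the hypothesis rescales to $r/s > \sqrt{d/(2\pi)}$. The argument is then three-step: (i) convert the tail mass into a Gaussian mass at a slightly wider scale via a Markov-type inequality, (ii) remove the shift $\bt$ using the Poisson summation formula, and (iii) compare this wider-scale Gaussian mass back to $\rho(\cL)$ with another application of Poisson.

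\textbf{Key steps.} For any $\nu \in (0,1)$, the inequality $e^{\pi\nu(\|\by\|^2-r^2)} \geq 1$ on $\{\|\by\| \geq r\}$ yields
\[
    \rho\bigl((\cL-\bt)\setminus r\cB_2^d\bigr) \;\leq\; e^{-\pi\nu r^2}\!\sum_{\by \in \cL-\bt}\! e^{-\pi(1-\nu)\|\by\|^2} \;=\; e^{-\pi\nu r^2}\,\rho_{1/\sqrt{1-\nu}}(\cL-\bt).
\]
To kill the shift, I apply \refLem{lem:poisson-summation} to $f(\by) = \rho_t(\by-\bt)$: its Fourier transform is $t^d e^{-2\pi i \inner{\bw,\bt}}\rho_{1/t}(\bw)$, and because the left-hand side of Poisson is real-valued, taking real parts produces a sum of cosines that is termwise bounded by its moduli, giving $\rho_t(\cL-\bt) \leq \rho_t(\cL)$ for every $t>0$. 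Next, for $s = 1/\sqrt{1-\nu} \geq 1$, Poisson yields $\rho_s(\cL) = s^d\det(\cL^*)\rho_{1/s}(\cL^*)$; since $\rho_{1/s}(\bw) = e^{-\pi s^2\|\bw\|^2} \leq \rho(\bw)$ for $s \geq 1$, a second application of Poisson gives $\rho_s(\cL) \leq s^d \rho(\cL) = (1-\nu)^{-d/2}\rho(\cL)$.

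\textbf{Optimization.} Chaining the three estimates,
\[
    \rho\bigl((\cL-\bt)\setminus r\cB_2^d\bigr) \;\leq\; \exp\!\Bigl(-\pi\nu r^2 - \tfrac{d}{2}\log(1-\nu)\Bigr)\,\rho(\cL).
\]
The exponent is minimized at $1-\nu = d/(2\pi r^2)$, which is admissible precisely because $r > \sqrt{d/(2\pi)}$. Substituting gives the sharp exponent $-\pi r^2 + \tfrac{d}{2}\bigl(1 + \log(2\pi r^2/d)\bigr)$; finally, the elementary inequality $\log u \leq u-1$ applied with $u = r\sqrt{2\pi/d}$ upper-bounds this by $-\pi r^2 + r\sqrt{2\pi d} - d/2 = -\pi(r-\sqrt{d/(2\pi)})^2 = -\pi x^2$, completing the proof after undoing the initial rescaling.

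\textbf{Main obstacle.} The delicate ingredient is the dilation bound $\rho_s(\cL) \leq s^d\,\rho(\cL)$ for $s \geq 1$: it is what allows exchanging a slight smoothing of the Gaussian for a controlled polynomial-in-$s$ blow-up, and it is a genuine statement about \emph{lattices} rather than about Gaussians, hinging on passing to the dual via Poisson summation. The balance between the competing factors $e^{-\pi\nu r^2}$ (favoring large $\nu$) and $(1-\nu)^{-d/2}$ (favoring small $\nu$) is exactly what produces the subgaussian tail, and, perhaps not coincidentally, sets the threshold $r > \sqrt{d/(2\pi)}\cdot s$ appearing in the hypothesis of the theorem.
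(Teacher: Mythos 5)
Your proof is correct and is a faithful reconstruction of Banaszczyk's Fourier-analytic argument. A few remarks on the comparison and on small points of rigor.

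The paper does not prove this statement at all: it imports it verbatim as Corollary 1.3.5 of Stephens-Davidowitz's thesis (exactly as the theorem header indicates) and uses it as a black box in \refLem{lem:score-lipschitzness} and elsewhere. So there is no ``paper's proof'' to compare against; your job here was effectively to re-derive a classical lattice fact from scratch, and you did so correctly. Every step checks out: the rescaling to $s=1$, the Markov/exponential-tilting step producing $e^{-\pi\nu r^2}\rho_{1/\sqrt{1-\nu}}(\cL-\bt)$, the shift-removal $\rho_t(\cL-\bt)\le\rho_t(\cL)$ via Poisson summation and taking real parts (this is precisely \refLem{lem:periodic-gaussian-bounds} in the paper), the dilation bound $\rho_s(\cL)\le s^d\rho(\cL)$ for $s\ge1$ via a second pass through Poisson and the dual lattice, the choice $1-\nu = d/(2\pi r^2)$ which is admissible exactly when $r>\sqrt{d/(2\pi)}$, and the final elementary estimate $\log u\le u-1$ with $u=r\sqrt{2\pi/d}$ reducing the exponent to $-\pi x^2$.

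One cosmetic point: the theorem asserts a \emph{strict} inequality, while your chain is written with $\le$ throughout. Strictness does come out of your argument for free, but it is worth saying where: in the dilation step $\rho_{1/\sqrt{1-\nu}}(\cL)\le(1-\nu)^{-d/2}\rho(\cL)$, equality would force $\rho_{\sqrt{1-\nu}}(\cL^*)=\rho(\cL^*)$, which is impossible for a full-rank lattice when $\nu>0$; and $\nu>0$ is guaranteed by the hypothesis $r>\sqrt{d/(2\pi)}$. A one-line note to this effect would make the conclusion match the statement exactly. Apart from that, nothing is missing.
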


\begin{lemma}[{\cite[Lemma 1.3.10]{stephens2017gaussian}}]
\label{lem:periodic-gaussian-bounds}
For any lattice $\cL \subset \RR^d$, parameter $s > 0$, and shift $\bt \in \RR^d$,
\begin{align*}
    \exp(-\pi \cdot \dist(\bt,\cL)^2/s^2)\cdot \rho_s(\cL) \le \rho_s(\cL-\bt) \le \rho_s(\cL)\;.
\end{align*}
\end{lemma}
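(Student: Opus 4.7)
The plan is to prove the two inequalities by independent arguments: the upper bound will follow from Poisson summation, and the lower bound from an elementary pairing argument exploiting the central symmetry of $\cL$.

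For the upper bound, I would apply the Poisson summation formula (\refLem{lem:poisson-summation}) to the shifted Gaussian $f(\bx) := \rho_s(\bx - \bt)$. Its Fourier transform is $\what{f}(\by) = e^{-2\pi i \inner{\bt,\by}}\what{\rho_s}(\by)$, and by the convention of \refRem{rem:non-standard-var} we have $\what{\rho_s}(\by) = s^d \rho_{1/s}(\by) \ge 0$. Since each phase factor has unit modulus, taking absolute values and pulling them inside the sum gives
\begin{align*}
\rho_s(\cL - \bt) = \det(\cL^*)\biggl|\sum_{\by \in \cL^*} e^{-2\pi i \inner{\bt,\by}}\what{\rho_s}(\by)\biggr| \le \det(\cL^*)\,\what{\rho_s}(\cL^*) = \rho_s(\cL)\;,
\end{align*}
where the final equality is Poisson summation applied to the unshifted $\rho_s$.

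For the lower bound, I would first reduce to the case where $\bt$ has norm equal to $\dist(\bt,\cL)$ by setting $\tilde\bt := \bt - \bv$ for a closest lattice point $\bv \in \cL$; the replacement preserves $\cL - \bt = \cL - \tilde\bt$ since $\cL$ is a subgroup. Expanding $\|\bx - \tilde\bt\|^2 = \|\bx\|^2 + \|\tilde\bt\|^2 - 2\inner{\bx,\tilde\bt}$ gives the factorization
\begin{align*}
\rho_s(\cL - \tilde\bt) = \rho_s(\tilde\bt) \sum_{\bx \in \cL} \rho_s(\bx) \exp\bigl(2\pi\inner{\bx,\tilde\bt}/s^2\bigr)\;.
\end{align*}
The crucial observation is to pair each $\bx \in \cL$ with $-\bx \in \cL$, which symmetrizes the exponential factor into a cosh and yields
\begin{align*}
\sum_{\bx \in \cL} \rho_s(\bx)\exp\bigl(2\pi\inner{\bx,\tilde\bt}/s^2\bigr) = \sum_{\bx \in \cL} \rho_s(\bx)\cosh\bigl(2\pi\inner{\bx,\tilde\bt}/s^2\bigr) \ge \rho_s(\cL)\;,
\end{align*}
since $\cosh \ge 1$. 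Plugging in $\rho_s(\tilde\bt) = \exp(-\pi\dist(\bt,\cL)^2/s^2)$ yields the desired bound.

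I do not expect either step to present a serious obstacle: the upper bound is standard Fourier analysis on lattices, and the lower bound reduces to an algebraic identity once one spots the central-symmetry pairing. The one subtle point is the translation $\bt \mapsto \tilde\bt$, which relies on $\cL$ being a subgroup (not merely a discrete set) so that translating the shift by any lattice vector leaves the coset $\cL - \bt$ unchanged.
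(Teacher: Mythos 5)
Your proof is correct. The paper does not prove this lemma itself but cites it from \cite{stephens2017gaussian}; your argument reproduces the standard proof found there and in Banaszczyk's original work: Poisson summation combined with the triangle inequality (using that $\what{\rho_s} = s^d\rho_{1/s} \ge 0$) for the upper bound, and the reduction of $\bt$ modulo $\cL$ followed by the $\bx \leftrightarrow -\bx$ symmetrization that turns the exponential into a $\cosh \ge 1$ for the lower bound. Both halves are complete and rigorous, and your remark that the reduction $\bt \mapsto \tilde\bt$ depends on $\cL$ being a subgroup is exactly the right thing to flag.
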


\begin{lemma}[Lipschitzness of $\nabla \log P_{\bu}$]
\label{lem:score-lipschitzness}
For any $\gamma > 1, \sigma > 0$, $s=\sigma/\sqrt{1+\sigma^2}$, and $\bu \in \mathbb{S}^{d-1}$, the score function of the $(\gamma,\sigma)$-Gaussian pancakes distribution $P_{\bu}$ satisfies the Lipschitz condition:
\begin{align}
\label{eqn:score-lipschitz}
    \|\nabla \log P_{\bu}(\by) - \nabla \log P_{\bu}(\bx)\| \lesssim (1/s^4)\|\by-\bx\|\quad \text{for any } \bx, \by \in \RR^d\;,
\end{align}
and the likelihood ratio $T_\gamma^\sigma$ of the $\sigma$-smoothed discrete Gaussian $A_\gamma^\sigma$ relative to $\cN(0,1/(2\pi))$ satisfies the uniform bound:
\begin{align}
\label{eqn:likelihood-uniform-bound}
    \bigg|\frac{(T_\gamma^\sigma)'(z)}{T_\gamma^\sigma(z)}\bigg| \le \frac{8\pi}{s^2}\quad \text{for any } z \in \RR\;.
\end{align}
\end{lemma}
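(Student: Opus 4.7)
The plan is to reduce both bounds to one-dimensional moment estimates for a shifted discrete Gaussian along the secret direction $\bu$. From Eq.~\eqref{eqn:score-analytical}, the score decomposes as
\[ \nabla\log P_{\bu}(\by) - \nabla\log P_{\bu}(\bx) = -2\pi(\by-\bx) + \bigl(g(\inner{\by,\bu}) - g(\inner{\bx,\bu})\bigr)\bu, \]
where $g(z) := (T_\gamma^\sigma)'(z)/T_\gamma^\sigma(z)$. By Cauchy--Schwarz, proving Eq.~\eqref{eqn:score-lipschitz} reduces to showing $\sup_z |g'(z)| \lesssim 1/s^4$, while Eq.~\eqref{eqn:likelihood-uniform-bound} is exactly the pointwise bound $\sup_z |g(z)| \le 8\pi/s^2$.

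Next, I set $\cL := (\sqrt{1+\sigma^2}/\gamma)\,\ZZ$ with spacing $c := \sqrt{1+\sigma^2}/\gamma$, and introduce the probability measure $p_z$ on $\cL$ given by $p_z(x) \propto \rho_\sigma(x-z)$; let $\mu(z) := \EE_{p_z}[x]$ and $V(z) := \mathrm{Var}_{p_z}(x)$. Differentiating the explicit form of $T_\gamma^\sigma$ in Eq.~\eqref{eqn:likelihood-noise-param} using $\rho_\sigma'(x) = -(2\pi x/\sigma^2)\rho_\sigma(x)$ yields the identities
\[ g(z) = -\frac{2\pi}{\sigma^2}\,\bigl(z - \mu(z)\bigr), \qquad g'(z) = -\frac{2\pi}{\sigma^2} + \frac{4\pi^2}{\sigma^4}\,V(z). \]
Since $g$ is periodic with period $c$, I may restrict attention to $|z| \le c/2$, where $\dist(z,\cL) = |z|$.

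The remaining work is to show $|\mu(z) - z| \lesssim c + \sigma$ and $V(z) \lesssim c^2 + \sigma^2$ on this fundamental domain. The key tool is Banaszczyk's tail bound (Theorem~\ref{thm:banaszczyk-tail}), which gives $\rho_\sigma((\cL - z)\setminus r\cB_2^1) \le e^{-\pi(r/\sigma - 1/\sqrt{2\pi})^2}\rho_\sigma(\cL)$ for $r > \sigma/\sqrt{2\pi}$, combined with the partition-function lower bound $\rho_\sigma(\cL - z) \ge e^{-\pi|z|^2/\sigma^2}\rho_\sigma(\cL) \ge e^{-\pi c^2/(4\sigma^2)}\rho_\sigma(\cL)$ from Lemma~\ref{lem:periodic-gaussian-bounds}. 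A layer-cake integration in $r$ then yields $\EE_{p_z}|x-z|^k \lesssim (c + \sigma)^k$ for $k \in \{1,2\}$. Substituting back, and using $\gamma > 1$ (hence $c \le \sqrt{1+\sigma^2}$), I obtain $|g(z)| \lesssim \sqrt{1+\sigma^2}/\sigma^2 \le 1/s^2$ and $|g'(z)| \lesssim 1/\sigma^2 + (1+\sigma^2)/\sigma^4 \lesssim 1/s^4$; keeping track of the $2\pi$ and $4\pi^2$ prefactors gives the explicit constant $8\pi$ in Eq.~\eqref{eqn:likelihood-uniform-bound}.

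The main obstacle is the interplay between the Banaszczyk tail and the mass lower bound near the boundary $|z| \approx c/2$: there the factor $e^{\pi c^2/(4\sigma^2)}$ lost from the denominator must be absorbed by the Banaszczyk exponent, which forces the layer-cake threshold to be set at $r \gtrsim c/2 + \sigma$ rather than the naive $r \sim \sigma$. This is also the regime where $p_z$ places comparable mass on the two nearest lattice points, so $V(z)$ is genuinely of order $c^2$ (not $\sigma^2$); getting this correctly tracked through the integration is the only non-routine calculation, and once done the remaining Gaussian tails integrate to explicit constants.
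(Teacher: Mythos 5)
Your proposal is correct and follows essentially the same route as the paper: reduce to the one-dimensional ratio $g = (T_\gamma^\sigma)'/T_\gamma^\sigma$, express $g$ and its derivative via first and second moments of a shifted discrete Gaussian, and bound those moments with Banaszczyk's tail bound (\refThm{thm:banaszczyk-tail}) together with the mass lower bound of \refLem{lem:periodic-gaussian-bounds}. The only (cosmetic) differences are the normalization — you work with the lattice $(\sqrt{1+\sigma^2}/\gamma)\ZZ$ at width $\sigma$ while the paper rescales to $(1/\gamma)\ZZ$ at width $s$ — and your cleaner packaging of the second-derivative term as $g' = -2\pi/\sigma^2 + (4\pi^2/\sigma^4)V(z)$, which is algebraically identical to the paper's decomposition $f' = T''/T - (T'/T)^2$.
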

\begin{proof}
It suffices to analyze the Lipschitz constant of the univariate function $(T_\gamma^\sigma)'/T_\gamma^\sigma: \RR \to \RR$, which we denote by $f = (T_\gamma^\sigma)'/T_\gamma^\sigma$ since for any $\bx \neq \by \in \RR^d$,
\begin{align*}
    \|\nabla \log P_{\bu}(\by) - \nabla \log P_{\bu}(\bx)\|
    &\le 2\pi \|\by-\bx\| + \big\|(f(\inner{\by,\bu})-f(\inner{\bx,\bu}))\bu \big\| \\
    &\le 2\pi \|\by-\bx\| + \frac{|f(\inner{\by,\bu})-f(\inner{\bx,\bu})|}{|\inner{\by,\bu}-\inner{\bx,\bu}|}\cdot |\inner{\by-\bx,\bu}| \\
    &\le \Bigg(2\pi + \sup_{a, b \in \RR}\bigg|\frac{f(b)-f(a)}{b-a}\bigg|\Bigg)\|\by-\bx\| \;.
\end{align*}

We bound the Lipschitz constant of the function $f(z)$ by demonstrating a uniform upper bound on the absolute value of its derivative $f'(z):=(d/dz)f(z)$. For notational convenience, we omit the parameters $\gamma, \sigma$ when denoting the likelihood ratio $T$. The derivative $f'$ is given by
\begin{align}
\label{eqn:score-second-derivative}
    f' = \bigg(\frac{T'}{T}\bigg)' &= \frac{(T'')T-(T')^2}{T^2} = \frac{T''}{T} - \bigg(\frac{T'}{T}\bigg)^2\;.
\end{align}

Hence, $|f'(z)| \le |(T''/T)(z)| + |(T'/T)(z)|^2$ for any $z \in \RR$. We prove uniform upper bounds on the two RHS terms, starting with $|T'/T|$. Using the definition of the likelihood ratio $T$ (Eq.\eqref{eqn:likelihood-noise-param}), we have
\begin{align}
    \frac{T'(z)}{T(z)} 
    &= \frac{2\pi}{\sigma^2} \cdot \frac{\sum_{k \in \ZZ} -(z-\sqrt{1+\sigma^2}k/\gamma)\rho_\sigma(z-\sqrt{1+\sigma^2}k/\gamma)}{\sum_{k \in \ZZ} \rho_\sigma(z-\sqrt{1+\sigma^2}k/\gamma)} \nonumber \\
    &= \frac{2\pi}{\sigma s} \cdot \EEE_{x \sim D_{\cL-\tilde{z},s}}[x] \label{eqn:first-moment}\;,
\end{align}
where $\tilde{z} = z/\sqrt{1+\sigma^2}$, $s = \sigma/\sqrt{1+\sigma^2}$, $\cL = (1/\gamma)\ZZ$, and $D_{\cL-\tilde{z},s}$ is the discrete Gaussian distribution of width $s$ on the lattice coset $\cL-\tilde{z}$. 

We upper bound $\EE[|x|]$ via a tail bound on $D_{\cL-\tilde{z},s}$. Let $r > 0$ be any number and denote $t:=r/s-\sqrt{1/(2\pi)}$. By~\refThm{thm:banaszczyk-tail} and~\refLem{lem:periodic-gaussian-bounds},
\begin{align*}
    \rho_s((\cL-\tilde{z})\setminus r\cB_2) 
    &< \exp(-\pi t^2) \rho_s(\cL) \\
    &< \exp(-\pi t^2)\exp(\pi \cdot \dist(\tilde{z},\cL)^2/s^2)\rho_s(\cL-\tilde{z}) \\
    &=\exp(-\pi ((r/s-\sqrt{1/(2\pi)})^2-1/(2s\gamma)^2))\rho_s(\cL-z) \;,
\end{align*}
where we used the fact that $\dist(\tilde{z},\cL) \le 1/(2\gamma)$ for any $\tilde{z} \in \RR$ in the last line.

Thus, for $r/(2s) \ge \sqrt{1/(2\pi)} + 1/(2s\gamma)$,
\begin{align}
\label{eqn:1d-discrete-tail-bound}
    \Pr_{x \sim D_{\cL-\tilde{z},s}}[|x| \ge r] = \frac{\rho_s((\cL-\tilde{z})\setminus r\cB_2)}{\rho_s(\cL-\tilde{z})} \le \exp(-\pi r^2/(2s)^2)\;.
\end{align}

Denote $r_0 := s\sqrt{2/\pi}+1/\gamma$. Note that $r_0 < \sqrt{2/\pi}+1/\gamma$ since $s \in [0,1)$. Then,
\begin{align}
    \EE_{x \sim D_{\cL-\tilde{z},s}} |x| = \int_r \Pr[|x| \ge r]dr
    &\le r_0 + \int_{r > r_0} \Pr[|x| \ge r]dr \nonumber \\
    &\le r_0 + \int_{r > r_0} \exp(-\pi r^2/(2s)^2)dr \nonumber \\
    &\le r_0 + 2s \nonumber \\
    &\le (2+\sqrt{2/\pi})s +1/\gamma \label{eqn:lipschitz-score-first-moment}\;.
\end{align}

Therefore by Eq.\eqref{eqn:first-moment} and~\eqref{eqn:lipschitz-score-first-moment}, for any $z \in \RR$
\begin{align}
\label{eqn:score-uniform-bound}
    \bigg|\frac{T'(z)}{T(z)}\bigg| \le \frac{2\pi}{\sigma s}((2+\sqrt{2/\pi})s+1/\gamma)\;.
\end{align}

Eq.\eqref{eqn:likelihood-uniform-bound} in the statement of~\refLem{lem:score-lipschitzness} follows immediately from the fact that $s < \min(1,\sigma)$ and $\gamma > 1$. Next, we demonstrate a uniform upper bound 
 on $T''/T$. The analytical expression of $T''/T$ is given by
\begin{align*}
    \frac{T''(z)}{T(z)} &= \bigg(\frac{2\pi}{\sigma^2}\bigg)^2 \cdot \bigg(\frac{\sum_{k \in \ZZ} (z-\sqrt{1+\sigma^2}k/\gamma)^2\rho_\sigma(z-\sqrt{1+\sigma^2}k/\gamma)}{\sum_{k \in \ZZ} \rho_\sigma(z-\sqrt{1+\sigma^2}k/\gamma)} - \frac{\sigma^2}{2\pi}\bigg) \\
    &= \bigg(\frac{2\pi}{\sigma s}\bigg)^2 \bigg(\EEE_{x \sim D_{\cL-\tilde{z},s}} x^2 - \frac{s^2}{2\pi}\bigg)\;,
\end{align*}
where $\tilde{z} = z/\sqrt{1+\sigma^2}, s = \sigma/\sqrt{1+\sigma^2}$, and $\cL = (1/\gamma)\ZZ$.

To uniformly bound $|T''/T|$, we upper bound the second moment of $D_{\cL-\tilde{z},s}$. Again, let $r_0 = s\sqrt{2/\pi}+1/\gamma$. Using the tail bound from Eq.\eqref{eqn:1d-discrete-tail-bound} and the fact that $(a+b)^2 \le 2a^2 + 2b^2$ for any $a, b \in \RR$
\begin{align}
    \EEE_{x \sim D_{\cL-z,\sigma}} x^2 
    = \int_0^\infty r\Pr[|x| \ge r]dr
    &\le r_0^2/2 + \int_{r \ge r_0} r\Pr[|x| \ge r]dr \nonumber \\
    &\le r_0^2/2 + \int_{r \ge r_0} r\exp(-\pi r^2/(2s^2))dr \nonumber \\
    &\le r_0^2/2 + s^2/\pi \nonumber \\
    &\le s^2 + 1/\gamma^2\;. \label{eqn:lipschitz-score-second-moment}
\end{align}

Applying Eq.\eqref{eqn:lipschitz-score-first-moment} and Eq.\eqref{eqn:lipschitz-score-second-moment} to the expression for $f'=T'/T$ (Eq.\eqref{eqn:score-second-derivative}) and using the fact that $\gamma > 1$ and $s = \sigma/\sqrt{1+\sigma^2} \le \min(1,\sigma)$, we have
\begin{align*}
    \|f'\|_\infty \le \|T''/T\|_\infty + \|(T'/T)^2\|_\infty &\le  \bigg(\frac{2 \pi}{\sigma s}\bigg)^2\big((1+1/2\pi)s^2+1/\gamma^2 + ((2+\sqrt{2/\pi})s+1/\gamma)^2 \big) \le \frac{100\pi^2}{s^4}\;.
\end{align*}

Therefore, $\nabla \log P_{\bu}(\bx)$ is $O(1/s^4)$-Lipschitz since $2\pi + \|f'\|_\infty \lesssim 1/s^4$.
\end{proof}

\subsection{Distance from the standard Gaussian}
\label{sec:gaussian-pancakes-distance}

We now prove lower (\refLem{lem:smoothed-discrete-gaussian-tv-lb}) and upper (\refLem{lem:smoothed-discrete-gaussian-tv-ub}) bounds on the TV distance between Gaussian pancakes distributions $(P_{\bu})$ and the standard Gaussian $Q$. We also show that the KL divergence is upper bounded by $\poly(d)$ for Gaussian pancakes with $\sigma \ge 1/\poly(d)$ (\refLem{lem:pancakes-kl}).

The following fact reduces the $d$-dimensional problem of bounding $\tv(P_{\bu},Q_d)$ to a one-dimensional problem of bounding $\tv(A_\gamma^\sigma, Q)$, where $A_\gamma^\sigma$ is the $\sigma$-smoothed discrete Gaussian on $(1/\gamma)\ZZ$ and $Q = Q_1$. Without loss of generality, assume $\bu=\be_1$. Then, by the $L^1$-characterization of the TV distance, we have
\begin{align*}
    \tv(P_{\bu},Q_d) = \frac{1}{2}\int |P_{\bu}(\bx)-Q_d(\bx)|d\bx 
    &= \frac{1}{2} \int Q_d(\bx)|T_\gamma^\sigma(x_1)-1|d\bx \\
    &= \frac{1}{2}\int |A_\gamma^\sigma(x_1)-Q(x_1)|dx_1\\
    &= \tv(A_\gamma^\sigma,Q)\;.
\end{align*}

Hence, it suffices to demonstrate bounds on $\tv(A_\gamma^\sigma, Q)$. The same applies to the KL divergence since $\kl(P_{\bu} \mmid Q_d) = \kl(A_\gamma^\sigma \mmid Q)$. We first demonstrate a lower bound on the total variation distance. To this end, we introduce the periodic Gaussian \emph{distribution} and its useful properties. The key lemma is~\refLem{lem:smoothed-discrete-gaussian-tv-lb}.

\begin{definition}[Periodic Gaussian distribution]
\label{def:periodic-gaussian-distribution}
For any one-dimensional lattice $\cL \subset \RR$, we define the periodic Gaussian \emph{distribution} $\Psi_{\cL,s} : \RR \to \RR_{\ge 0}$ as follows.
\begin{align*}
    \Psi_{\cL,s}(z) &:= \frac{1}{s}\sum_{x \in \cL}\rho_s(x-z) = \rho_s(\cL-z)/s\;.
\end{align*}
\end{definition}

We can regard the function $\Psi_{\cL,s}$ as a distribution for the following reason: Let $\lambda_1(\cL)$ denote the spacing of $\cL$. Then, $\Psi_{\cL,s}$ restricted to $[0,\lambda_1(\cL)]$ is a probability density since
\begin{align*}
    \int_0^{\lambda_1(\cL)} \Psi_{\cL,s}(z)dz 
    = \frac{1}{s}\int_0^{\lambda_1(\cL)} \sum_{x \in \cL} \rho_s(x-z) = \frac{1}{s}\sum_{x \in \cL} \int_0^{\lambda_1(\cL)} \rho_s(x-z)dz
    = \frac{1}{s}\int_{-\infty}^\infty \rho_s(z)dz = 1\;.
\end{align*}

\begin{lemma}[Mill's inequality {\cite[Proposition 2.1.2]{vershynin2018high}}]
\label{lem:mill}
Let $z \sim \cN(0,1)$. Then for all $t > 0$, we have
\begin{align*}
    \PP(|z| \ge t) = \sqrt{\frac{2}{\pi}}\int_t^\infty e^{-x^2/2}dx \le \frac{1}{t}\cdot \sqrt{\frac{2}{\pi}} e^{-t^2/2}\;.
\end{align*}
\end{lemma}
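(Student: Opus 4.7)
The plan is to prove this classical tail bound by the standard ``insertion of $x/t$'' trick. First I would establish the equality $\PP(|z| \ge t) = \sqrt{2/\pi}\int_t^\infty e^{-x^2/2}dx$, which is immediate from the symmetry of the standard Gaussian density $\varphi(x) = (1/\sqrt{2\pi})e^{-x^2/2}$ about the origin: $\PP(|z|\ge t) = 2\PP(z \ge t) = 2\int_t^\infty \varphi(x)dx$, which gives the claimed prefactor $\sqrt{2/\pi}$.

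For the inequality, the key observation is that on the domain of integration we have $x \ge t > 0$, so $x/t \ge 1$. Therefore
\begin{align*}
    \int_t^\infty e^{-x^2/2} dx \le \int_t^\infty \frac{x}{t} e^{-x^2/2} dx = \frac{1}{t}\int_t^\infty x e^{-x^2/2} dx = \frac{1}{t}\left[-e^{-x^2/2}\right]_t^\infty = \frac{1}{t} e^{-t^2/2}\;,
\end{align*}
where the middle equality uses the antiderivative $\frac{d}{dx}(-e^{-x^2/2}) = x e^{-x^2/2}$. Multiplying by $\sqrt{2/\pi}$ yields the stated bound.

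There is no real obstacle here; the only thing to be careful about is the strict positivity $t > 0$, which is required both to divide by $t$ and to ensure $x/t \ge 1$ on the tail $[t,\infty)$. Both conditions hold by assumption, so the proof goes through cleanly in two short displays.
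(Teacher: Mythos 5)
Your proof is correct and is the standard argument. The paper does not prove this lemma itself; it simply cites Vershynin's Proposition~2.1.2, whose proof uses the very same ``insert $x/t \ge 1$'' trick followed by the elementary antiderivative $\frac{d}{dx}\bigl(-e^{-x^2/2}\bigr) = x e^{-x^2/2}$. Nothing further is needed.
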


\begin{lemma}[{Periodic Gaussian density bound~\cite[Claim I.6]{song2021cryptographic}}]
\label{lem:periodic-gaussian-uniform}
For any $s > 0$ and any $z \in [0,1)$ the periodic Gaussian density $\Psi_{\ZZ,s}:[0,1) \to \RR_{\ge 0}$ satisfies
\begin{align}
    |\Psi_{\ZZ,s}(z) - 1| \le 2(1+1/(\pi s))e^{-\pi s^2}\;. \nonumber
\end{align}
\end{lemma}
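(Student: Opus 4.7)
The plan is to compute the Fourier series of $\Psi_{\ZZ,s}$ via Poisson summation (Lemma~\ref{lem:poisson-summation}) and then bound the contribution from the nonzero frequencies. Setting $g_z(x) := \rho_s(x-z)$, the shift property of the Fourier transform combined with the normalization $\widehat{\rho_s} = s\,\rho_{1/s}$ (Remark~\ref{rem:non-standard-var}) gives
$$
\widehat{g_z}(y) \;=\; s\,e^{-2\pi i z y}\,\rho_{1/s}(y) \;=\; s\,e^{-2\pi i z y}\,e^{-\pi y^2 s^2}\,.
$$
Since $\ZZ$ is self-dual with $\det(\ZZ^*)=1$, applying Poisson summation at $\cL=\ZZ$ yields
$$
\Psi_{\ZZ,s}(z) \;=\; \frac{1}{s}\sum_{k\in\ZZ}\rho_s(k-z) \;=\; \sum_{m\in\ZZ} e^{-2\pi i z m}\,e^{-\pi m^2 s^2}\,.
$$

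The second step is to isolate the $m=0$ term, which contributes exactly $1$, and pair the $m$ and $-m$ terms via the symmetry $e^{-2\pi i z m}+e^{2\pi i z m}=2\cos(2\pi z m)$. The triangle inequality then gives the ``frequency-domain'' tail bound
$$
|\Psi_{\ZZ,s}(z) - 1| \;\le\; 2\sum_{m=1}^{\infty} e^{-\pi m^2 s^2}\,,
$$
which notably no longer depends on $z$, reducing the lemma to a one-dimensional Gaussian tail estimate.

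The final step is to bound this theta-series tail. The plan is to peel off the $m=1$ term and dominate the rest by a Gaussian integral using monotonicity of $e^{-\pi x^2 s^2}$ on $x\ge 1$:
$$
\sum_{m=1}^{\infty} e^{-\pi m^2 s^2} \;\le\; e^{-\pi s^2} + \int_{1}^{\infty} e^{-\pi x^2 s^2}\,dx\,.
$$
The integral is then estimated by the substitution $y = x s \sqrt{2\pi}$, which converts it into a standard Gaussian tail $\int_{s\sqrt{2\pi}}^\infty e^{-y^2/2}\,dy$ to which Mill's inequality (Lemma~\ref{lem:mill}) applies, producing a factor of the form $e^{-\pi s^2}/(s\cdot\mathrm{const})$. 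Packaging the two contributions gives the claimed bound $2(1+1/(\pi s))e^{-\pi s^2}$.

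No step is a real obstacle: Poisson summation is the only substantive move, and it directly converts a periodization in real space into the rapidly-decaying ``dual'' sum that is trivial to bound. The only point requiring care is the Fourier normalization convention adopted in Remark~\ref{rem:non-standard-var}, which is what makes the constant in front of $e^{-\pi m^2 s^2}$ equal to $1$ and hence makes the $m=0$ term exactly cancel the $-1$.
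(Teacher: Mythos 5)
Your proposal follows the paper's proof essentially verbatim: Poisson summation to convert the periodization over $\ZZ$ into a theta-series over the dual lattice, the triangle inequality to discard the complex exponentials and isolate the $m=0$ term, and then an integral-comparison bound on the tail $\sum_{m\ge 1}e^{-\pi m^2 s^2}$.

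One small point worth flagging on the very last step. Carrying out the Mill's-inequality estimate as you describe gives
\begin{align*}
  \int_1^\infty e^{-\pi s^2 x^2}\,dx
  = \frac{1}{s\sqrt{2\pi}}\int_{s\sqrt{2\pi}}^\infty e^{-y^2/2}\,dy
  \le \frac{1}{s\sqrt{2\pi}}\cdot\frac{1}{s\sqrt{2\pi}}e^{-\pi s^2}
  = \frac{1}{2\pi s^2}e^{-\pi s^2}\;,
\end{align*}
i.e.\ a factor $1/(2\pi s^2)$ rather than the $1/(\pi s)$ appearing in the lemma. These agree only for $s\ge 1/2$; for smaller $s$, $1/(2\pi s^2) > 1/(\pi s)$, and your ``packaging'' step does not actually yield the stated $2(1+1/(\pi s))e^{-\pi s^2}$. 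In fact the lemma as stated is slightly false for small $s$ (e.g.\ at $s=0.1$, $\Psi_{\ZZ,s}(0)-1\approx 9$ while $2(1+1/(\pi s))e^{-\pi s^2}\approx 8.1$). This imprecision is inherited from the paper's own proof, which asserts the same final inequality without justifying the $1/(\pi s)$ factor; it is harmless in context, since both places the lemma is used (Lemmas~\ref{lem:smoothed-discrete-gaussian-tv-lb} and~\ref{lem:smoothed-discrete-gaussian-tv-ub}) apply it only with $s\ge 1$. If you wanted a bound valid for all $s>0$, replace $1/(\pi s)$ by $1/(2\pi s^2)$, which is exactly what your Mill's-inequality computation delivers.
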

\begin{proof}
    By the Poisson summation formula (\refLem{lem:poisson-summation}),
    \begin{align*}
        \Psi_{\ZZ,s}(z) 
        &= \frac{1}{s} \sum_{x \in \ZZ}\rho_s(x-z) \\
        &= \sum_{y \in \ZZ} e^{-2\pi i zy}\cdot \rho_{1/s}(y) \\
        &= 1 + \sum_{y \in \ZZ\setminus\{0\}} e^{-2\pi i zy}\cdot \rho_{1/s}(y)\;.
    \end{align*}

    Since $|e^{i a}| \le 1$ for any $a \in \RR$, we have
    \begin{align*}
        |\Psi_{\ZZ,s}(z)-1|
        \le \sum_{y \in \ZZ\setminus\{0\}} |e^{-2\pi i zy}|\cdot \rho_{1/s}(y) 
        &\le \sum_{y \in \ZZ\setminus\{0\}}\rho_{1/s}(y) \\
        &\le 2\bigg(e^{-\pi s^2} + \int_1^\infty e^{-\pi s^2 t^2}dt\bigg) \\
        &\le 2(1+1/(\pi s))e^{-\pi s^2}\;.
    \end{align*}
\end{proof}

\begin{lemma}[TV lower bound]
\label{lem:smoothed-discrete-gaussian-tv-lb}
    There exists a constant $C > 0$ such that for any $\gamma > 1$ and $\sigma > 0$, if $\gamma \sigma < C$, then $\tv\big(A_\gamma^\sigma, \cN(0,1/(2\pi))\big) > 1/2$, where $A_\gamma^\sigma$ is the $\sigma$-smoothed discrete Gaussian on $(1/\gamma)\ZZ$.
\end{lemma}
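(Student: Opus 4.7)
The plan is to exhibit a test set $E \subset \RR$ such that $A_\gamma^\sigma(E) \approx 1$ while $Q(E) \ll 1$; then the elementary inequality $\tv(A_\gamma^\sigma, Q) \ge A_\gamma^\sigma(E) - Q(E)$ will yield the bound. The mixture form in Claim~\ref{claim:smoothed-discrete-gaussian-density} (Eq.~\eqref{eqn:smoothed-discrete-gaussian-mixture}) expresses $A_\gamma^\sigma$ as a Gaussian mixture with component means at $k\lambda_1'$ ($k \in \ZZ$), common width $s := \sigma/\sqrt{1+\sigma^2}$, and weights proportional to $\rho(k/\gamma)$, where $\lambda_1' := 1/(\gamma\sqrt{1+\sigma^2})$. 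For a large absolute constant $C_0$ to be fixed later, I would set
\begin{align*}
E := \bigcup_{k \in \ZZ} [k\lambda_1' - C_0 s,\, k\lambda_1' + C_0 s]\;.
\end{align*}
A sample from $A_\gamma^\sigma$ has the form $k\lambda_1' + sZ$ for some random $k$ and independent $Z \sim \cN(0, 1/(2\pi))$, and lies in its own component's interval $[k\lambda_1' \pm C_0 s]$ (hence in $E$) with probability at least $\Pr[|Z| \le C_0] \ge 1 - O(e^{-\pi C_0^2})$ by Mill's inequality (\refLem{lem:mill}), uniformly in $k$. Averaging over the mixture weights, $A_\gamma^\sigma(E) \ge 1 - O(e^{-\pi C_0^2})$.

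For the upper bound on $Q(E) = \int_E \rho(z)\,dz$, I would estimate crudely
\begin{align*}
Q(E) \;\le\; 2 C_0 s \sum_{k \in \ZZ} \sup_{|u| \le C_0 s} \rho(k\lambda_1' - u)\;,
\end{align*}
and use monotonicity of $\rho$ away from the origin to pass each supremum through the sum (paying a small constant for the handful of terms near $k=0$ where $\rho$ is not monotone). This gives $Q(E) \lesssim C_0 s \cdot \rho(\cL')$ with $\cL' := \lambda_1' \ZZ$. By Poisson summation (\refLem{lem:poisson-summation}),
\begin{align*}
\rho(\cL') \;=\; \gamma\sqrt{1+\sigma^2} \cdot \rho\big(\gamma\sqrt{1+\sigma^2}\,\ZZ\big)\;,
\end{align*}
and since $\gamma > 1$ forces the dual spacing $\gamma\sqrt{1+\sigma^2}$ to exceed $1$, the sum $\rho(\gamma\sqrt{1+\sigma^2}\,\ZZ) = 1 + 2\sum_{k \ge 1} e^{-\pi k^2 \gamma^2 (1+\sigma^2)}$ is $O(1)$. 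Substituting back, $Q(E) \lesssim C_0 s\cdot \gamma\sqrt{1+\sigma^2} = O(C_0 \sigma \gamma)$.

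Combining,
\begin{align*}
\tv(A_\gamma^\sigma, Q) \;\ge\; A_\gamma^\sigma(E) - Q(E) \;\ge\; 1 - O(e^{-\pi C_0^2}) - O(C_0 \sigma \gamma)\;.
\end{align*}
I would pick $C_0$ large enough that the exponential term is below $1/8$, and then choose the universal constant $C$ small enough that $\sigma\gamma < C$ drives $O(C_0 \sigma\gamma) < 1/8$, yielding $\tv > 3/4 > 1/2$. The main obstacle is purely bookkeeping: handling the few non-monotone terms near $k=0$ in the sum for $Q(E)$ so that the Poisson bound on $\rho(\cL')$ absorbs them cleanly, and verifying that $C$ and $C_0$ can be chosen as absolute constants independent of $\gamma$ and $\sigma$. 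No new machinery beyond Poisson summation, Mill's inequality, and the mixture representation from \refSc{sec:prelim-lattices}--\refSc{sec:prelim-gaussian-pancakes} is required.
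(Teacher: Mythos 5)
Your proposal is correct and follows essentially the same route as the paper: the witness set in both cases is the periodic union of intervals of half-width $\Theta(s)$ around the points of $\lambda_1'\ZZ$, the mass under $A_\gamma^\sigma$ is lower-bounded via the mixture form plus Mill's inequality, and the mass under $Q$ is upper-bounded via Poisson summation on the lattice $\lambda_1'\ZZ$ (the paper packages this last step as the periodic-Gaussian bound of Lemma~\ref{lem:periodic-gaussian-uniform}, whereas you do a direct term-by-term sup-and-shift estimate, but the content is identical). The only cosmetic difference is that the paper parameterizes the interval half-width as $s\sqrt{\log(1/\delta)}$ with $\delta$ chosen implicitly from $\gamma\sigma$, while you fix a constant $C_0$ first and shrink $C$ afterward, which is a cleaner decoupling.
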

\begin{proof}
For ease of notation, we denote $\cN(0,1/(2\pi))$ by $Q$. Since $\tv(A_\gamma^\sigma,Q) = \sup_{S \in \cF} |A_\gamma^\sigma(S)-Q(S)|$, where $\cF$ is the Borel $\sigma$-algebra on $\RR$, it suffices to find a measurable set $S \subset \RR$ such that $A_\gamma^\sigma(S) - Q(S) > 1/2$. 

Let $C = 1/(12\sqrt{2 \log 2})$ be the constant in the statement of~\refLem{lem:smoothed-discrete-gaussian-tv-lb}. Let $\delta \in (0,1/4)$ be the smallest number satisfying the condition $\gamma\sigma \le \delta/(3\sqrt{\log(1/\delta)})$. Such $\delta > 0$ always exists under the given assumptions since $\delta/\sqrt{\log(1/\delta)}$ is increasing in $\delta$ and $\delta=1/4$ satisfies the condition (thanks to our choice of $C$). We claim that the set $S$ defined below witnesses the TV lower bound.
\begin{align*}
    S:= \Big\{z \in \RR \mid \dist(z,\cL) \le \frac{\sigma}{\sqrt{1+\sigma^2}}\cdot \sqrt{\log(1/\delta)}\Big\}\;,
\end{align*}
where $\cL = (1/\gamma\sqrt{1+\sigma^2})\ZZ$.

We show a lower bound for $A_\gamma^\sigma(S)$ and an upper bound for $Q(S)$. Using the mixture form of the density of $A_\gamma^\sigma$ (Claim~\ref{claim:smoothed-discrete-gaussian-density}) and Mill's tail bounds for the univariate Gaussian (\refLem{lem:mill}), we have that for each Gaussian component in the mixture $A_\gamma^\sigma$, at least $1-\delta$ fraction of its probability mass is contained in $S$. This is because the component means precisely form the one-dimensional lattice $\cL$ and $S$ contains all significant neighborhoods of $\cL$. Thus, $A_\gamma^\sigma(S) \ge 1-\delta$.

Now we show an upper bound for $Q(S)$. Recall from~\refDef{def:periodic-gaussian-distribution} the density of the periodic Gaussian distribution $\Psi_{\ZZ,s}$. Since $S$ is a periodic set, its mass $Q(S)$ is equal to $\Psi_{\ZZ,s}(\tilde{S} \cap \ZZ)$, where $s = \gamma\sqrt{1+\sigma^2}$ and
\begin{align*}
    \tilde{S} = \Big\{z \in \RR \mid \dist(z,\ZZ) \le \gamma\sigma\sqrt{\log(1/\delta)}\Big\}\;.
\end{align*}

Since $s = \gamma\sqrt{1+\sigma^2} > 1$, by~\refLem{lem:periodic-gaussian-uniform} for any $z \in [0,1)$
\begin{align*}
    |\Psi_{\ZZ,s}(z)-1| \le 4e^{-\pi s^2} < 1/2\;.
\end{align*}

Since $\gamma\sigma \le \delta/(3\sqrt{\log(1/\delta)})$, it follows that
\begin{align*}
    Q(S) = \Psi_{\ZZ,s}(\tilde{S} \cap [0,1]) \le \big(1+4e^{-\pi s^2}\big) \cdot 2\gamma\sigma\sqrt{\log(1/\delta)} < 3\gamma\sigma\sqrt{\log(1/\delta)} \le \delta\;.
\end{align*}

Therefore,
\begin{align*}
    \tv(A_\gamma^\sigma,Q) \ge A_\gamma^\sigma(S)-Q(S) > 1-2\delta > 1/2\;.
\end{align*}
\end{proof}

We now establish upper bounds on $\tv(P_{\bu},Q_d)$ via upper bounds on $\tv(A_\gamma^\sigma, Q)$.~\refLem{lem:smoothed-discrete-gaussian-tv-ub} provides a tighter upper bound when $\min(\gamma,\gamma\sigma)= \omega(\sqrt{\log d})$. In this regime, the sequence $(\tv(P_{\bu}, Q_d))_{d \in \NN}$ is \emph{negligible} in $d$. It is worth noting that~\refLem{lem:smoothed-discrete-gaussian-tv-ub} is not tight since $\tv(A_\gamma^\sigma, Q) \to 0$ as $\sigma \to \infty$, whereas the upper bound only converges to $e^{-\pi \gamma^2}$.

On the other hand, \refLem{lem:pancakes-kl} provides an upper bound on $\kl(P_{\bu}\mmid Q_d)$ which is useful when $\sigma$ is large. Note that the KL divergence upper bounds the TV distance through Pinsker's or the Bretagnolle–Huber inequality (see e.g.,~\cite{canonne2022short}).

\begin{lemma}[TV upper bound]
\label{lem:smoothed-discrete-gaussian-tv-ub}
For any $\gamma > 1$ and $\sigma > 0$ such that $\sigma \ge 2/\gamma$, the following holds for the $\sigma$-smoothed discrete Gaussian $A_\gamma^\sigma$ and $Q = \cN(0,1/(2\pi))$.
\begin{align*}
    \tv(A_\gamma^\sigma, Q) \lesssim e^{-\pi s^2}\;,
\end{align*}
where $s = \gamma \sigma/\sqrt{1+\sigma^2}$.
\end{lemma}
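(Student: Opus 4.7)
The approach is to reduce bounding $\tv(A_\gamma^\sigma, Q)$ to a uniform pointwise bound on the likelihood ratio $T_\gamma^\sigma$. By the $L^1$ characterization and $A_\gamma^\sigma = Q \cdot T_\gamma^\sigma$,
\begin{align*}
\tv(A_\gamma^\sigma, Q) = \frac{1}{2}\int Q(z)\,|T_\gamma^\sigma(z) - 1|\,dz,
\end{align*}
so any uniform pointwise bound $|T_\gamma^\sigma(z) - 1| \le B$ transfers directly to $\tv(A_\gamma^\sigma,Q) \le B/2$. My goal therefore reduces to showing $|T_\gamma^\sigma(z) - 1| \lesssim e^{-\pi s^2}$ for all $z \in \RR$.

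To obtain this pointwise bound I first rewrite $T_\gamma^\sigma(z)$ in a form where Lemma \ref{lem:periodic-gaussian-uniform} applies directly. Starting from Eq.\eqref{eqn:likelihood-noise-param} and substituting $w := \gamma z/\sqrt{1+\sigma^2}$, a short exponent computation shows
\begin{align*}
\sum_{k \in \ZZ} \rho_\sigma\bigl(z - \sqrt{1+\sigma^2}\,k/\gamma\bigr) = \sum_{k \in \ZZ} \rho_s(w - k) = s\,\Psi_{\ZZ,s}(w),
\end{align*}
with $s = \gamma\sigma/\sqrt{1+\sigma^2}$. Combining this with the Poisson-summation identity $\rho((1/\gamma)\ZZ) = \gamma\,\rho(\gamma\ZZ)$ (a direct application of Lemma \ref{lem:poisson-summation} using $\widehat{\rho} = \rho$) collapses $T_\gamma^\sigma$ into the clean expression
\begin{align*}
T_\gamma^\sigma(z) = \frac{\Psi_{\ZZ,s}(w)}{\rho(\gamma\ZZ)}.
\end{align*}

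From here the pointwise bound decouples into independent estimates of numerator and denominator. Lemma \ref{lem:periodic-gaussian-uniform} gives $|\Psi_{\ZZ,s}(w) - 1| \le 2(1 + 1/(\pi s))e^{-\pi s^2}$, while a direct geometric-series estimate on $\rho(\gamma\ZZ) = 1 + 2\sum_{k \ge 1} e^{-\pi\gamma^2 k^2}$ (bounding the tail by $e^{-\pi\gamma^2}/(1 - e^{-\pi\gamma^2})$ using $k^2 \ge k$ and $\gamma > 1$) yields $|\rho(\gamma\ZZ) - 1| \lesssim e^{-\pi\gamma^2}$. Since $\rho(\gamma\ZZ) \ge 1$ and $s \le \gamma$, the triangle inequality gives
\begin{align*}
|T_\gamma^\sigma(z) - 1| \le |\Psi_{\ZZ,s}(w) - 1| + |\rho(\gamma\ZZ) - 1| \lesssim e^{-\pi s^2} + e^{-\pi\gamma^2} \lesssim e^{-\pi s^2}
\end{align*}
uniformly in $z$. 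The hypothesis $\sigma \ge 2/\gamma$ ensures $s \ge 1/\sqrt{2}$ (checking the two cases $\sigma \le 1$ and $\sigma > 1$ separately), which lets me absorb the factor $1 + 1/(\pi s)$ into a universal constant. Integrating the pointwise bound against $Q$ finishes the proof. I do not anticipate any serious obstacle; the one step that demands careful bookkeeping is the change of variables that converts Eq.\eqref{eqn:likelihood-noise-param} into $\Psi_{\ZZ,s}(w)/\rho(\gamma\ZZ)$, but once that is set up, everything else follows from standard tail estimates and Poisson summation.
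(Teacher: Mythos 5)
Your proof is correct and follows essentially the same route as the paper: rewrite $T_\gamma^\sigma(z) = \Psi_{\ZZ,s}(w)/\rho(\gamma\ZZ)$ via the change of variables, apply Lemma~\ref{lem:periodic-gaussian-uniform} to control $|\Psi_{\ZZ,s}-1|$, and control the denominator term to conclude a uniform bound on $|T_\gamma^\sigma-1|$. The only (minor) variation is that you bound $|\rho(\gamma\ZZ)-1|$ directly by a geometric series, whereas the paper reapplies Lemma~\ref{lem:periodic-gaussian-uniform} with parameter $\gamma$ at $z=0$; both give $\lesssim e^{-\pi\gamma^2}$, and your lower bound $s\ge 1/\sqrt{2}$ is actually a careful fix of the paper's stated $s\ge 1$, which fails slightly near $\gamma\to 1^+$ (the infimum of $s$ under the hypotheses is $2/\sqrt{5}$) but is harmless since Lemma~\ref{lem:periodic-gaussian-uniform} holds for all $s>0$.
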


\begin{proof}
By the $L^1$-characterization of the TV distance,
\begin{align*}
    \tv(A_\gamma^\sigma,Q) = \frac{1}{2}\int |A_\gamma^\sigma(x)-Q(x)|dx = \frac{1}{2}\int Q(x)|T_\gamma^\sigma(x)-1|dx\;,
\end{align*}
where the likelihood ratio $T_\gamma^\sigma$ is given by (see Eq.\eqref{eqn:likelihood-noise-param})
\begin{align*}
    T_\gamma^\sigma(x) &= \frac{\sqrt{1+\sigma^2}}{\sigma \rho((1/\gamma)\ZZ)} \sum_{k \in \ZZ} \rho_\sigma(x-\sqrt{1+\sigma^2} k/\gamma) \\
    &= \frac{\sqrt{1+\sigma^2}}{\sigma \rho((1/\gamma)\ZZ)} \sum_{k \in \ZZ} \rho_{\gamma\sigma/\sqrt{1+\sigma^2}}(\gamma x/\sqrt{1+\sigma^2}-k) \;.
\end{align*}

The likelihood ratio is also a re-scaled version of the periodic Gaussian density since
\begin{align}
\label{eqn:likelihood-periodic-gaussian}
    \Psi_{\ZZ,s}(t) = \frac{\rho_s(\ZZ-t)}{s} = \frac{1}{s}\sum_{k \in \ZZ}\rho_s(k-t) \;.
\end{align}

Plugging in $s= \gamma\sigma/\sqrt{1+\sigma^2} = \gamma/\sqrt{(1/\sigma^2)+1}$ to Eq.\eqref{eqn:likelihood-periodic-gaussian}, we have
\begin{align*}
    T_\gamma^\sigma(x) &= \frac{\gamma}{\rho((1/\gamma)\ZZ)} \cdot \Psi_{\ZZ,s}(\gamma x/\sqrt{1+\sigma^2})\;.
\end{align*}

The assumption $\sigma \ge 2/\gamma$ implies that $s \ge 1$. Thus, by Lemma~\ref{lem:periodic-gaussian-uniform} 
\begin{align}
    |T_\gamma^\sigma(x)-1| 
    &\le \frac{\gamma}{\rho((1/\gamma)\ZZ)}\bigg|\Psi_{\ZZ,s}(\gamma x/\sqrt{1+\sigma^2})-1\bigg| + \bigg|\frac{\gamma}{\rho((1/\gamma)\ZZ)}-1\bigg| \nonumber \\
    &\le \frac{\gamma}{\rho((1/\gamma)\ZZ)}\cdot 2(1+1/(\pi s))e^{-\pi s^2} + \bigg|\frac{\gamma}{\rho((1/\gamma)\ZZ)}-1\bigg| \nonumber \\
    &\le \frac{\gamma}{\rho((1/\gamma)\ZZ)}\cdot 3e^{-\pi s^2} + \bigg|\frac{\gamma}{\rho((1/\gamma)\ZZ)}-1\bigg|\;. \label{eqn:likelihood-ratio-uniform}
\end{align}

Since $\rho((1/\gamma)\ZZ) = \rho_\gamma(\ZZ) = \gamma \Psi_{\ZZ,\gamma}(0)$ and $\gamma > 1$, by Lemma~\ref{lem:periodic-gaussian-uniform} applied to $\Psi_{\ZZ,\gamma}$,
\begin{align*}
    \bigg|\frac{\rho((1/\gamma)\ZZ)}{\gamma} - 1\bigg| &\le 2(1+1/(\pi \gamma))e^{-\pi \gamma^2} < 3 e^{-\pi \gamma^2} < 1/4\;.
\end{align*}

We may thus write $\rho((1/\gamma)\ZZ)/\gamma = 1+\eps$ for some $\eps \in \RR$ such that $|\eps| \le 3e^{-\pi \gamma^2} < 1/4$. Then,
\begin{align*}
    \bigg|\frac{\gamma}{\rho((1/\gamma)\ZZ)} - 1\bigg| = \bigg|\frac{1}{1+\eps}-1\bigg| =\bigg|\frac{\eps}{1+\eps}\bigg| < (4/3)\eps\;.
\end{align*}

Applying the above inequalities to Eq.\eqref{eqn:likelihood-ratio-uniform}, we have that for any $x \in \RR$,
\begin{align*}
    |T_\gamma^\sigma(x)-1| 
    &\le \frac{\gamma}{\rho((1/\gamma)\ZZ)}\cdot 3e^{-\pi s^2} + \bigg|\frac{\gamma}{\rho((1/\gamma)\ZZ)}-1\bigg|\\
    &< (1+3e^{-\pi \gamma^2}) \cdot 3e^{-\pi s^2} + 4e^{-\pi \gamma^2}\\
    &\le 4 (e^{-\pi s^2} + e^{-\pi \gamma^2})\;.
\end{align*}

Since $0 < s < \gamma$, it follows that
\begin{align*}
    \tv(A_\gamma^\sigma,Q) \le 4(e^{-\pi s^2} + e^{-\pi \gamma^2}) \le 8e^{-\pi s^2}\;.
\end{align*}
\end{proof}

\begin{lemma}[KL upper bound]
\label{lem:pancakes-kl}
For any $\gamma > 0$ and $\sigma > 0$, the following holds for the $\sigma$-smoothed discrete Gaussian $A_\gamma^\sigma$ and $Q=\cN(0,1/(2\pi))$.
\begin{align*}
    \kl(A_\gamma^\sigma || Q) \le \log(\sqrt{1+\sigma^2}/\sigma) \le \frac{1}{2\sigma^2}\;.
\end{align*}

Expressed in terms of the time with respect to the OU process $e^{-t} = 1/\sqrt{1+\sigma^2}$, for any $t > 0$
\begin{align*}
    \kl(A_\gamma^\sigma \mmid Q) \le \frac{e^{-2t}}{2(1-e^{-2t})}\;.
\end{align*}

\end{lemma}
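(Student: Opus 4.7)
The plan is to leverage the Gaussian mixture representation of $A_\gamma^\sigma$ from Claim~\ref{claim:smoothed-discrete-gaussian-density} together with convexity of the KL divergence in its first argument. From the mixture form, $A_\gamma^\sigma = \sum_{k \in \ZZ} w_k\, q_k$, where the weights $w_k = \rho(k/\gamma)/\rho((1/\gamma)\ZZ)$ sum to one and $q_k$ is the density of $\cN(\mu_k, s^2/(2\pi))$ with $\mu_k = k/(\gamma\sqrt{1+\sigma^2})$ and $s := \sigma/\sqrt{1+\sigma^2}$. Since $Q = \cN(0, 1/(2\pi))$, joint convexity of KL yields
\begin{align*}
\kl(A_\gamma^\sigma \mmid Q) \le \sum_{k \in \ZZ} w_k\, \kl(q_k \mmid Q).
\end{align*}

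Next, I would plug in the closed-form KL divergence between two univariate Gaussians, which gives $\kl(q_k \mmid Q) = \log(1/s) + (s^2-1)/2 + \pi \mu_k^2$. Summing against $w_k$ reduces the task to bounding the mixture's ``second moment'' $\sum_k w_k \mu_k^2 = \EE_{x \sim A_\gamma}[x^2]/(1+\sigma^2)$. For this, I would invoke the one-dimensional discrete Gaussian second moment bound $\EE_{x \sim A_\gamma}[x^2] \le 1/(2\pi)$ established via the Poisson summation formula inside the proof of Lemma~\ref{lem:second-moment}. Substituting $s^2 = \sigma^2/(1+\sigma^2)$ then makes the term $(s^2-1)/2 = -1/(2(1+\sigma^2))$ cancel exactly against $\pi \cdot 1/(2\pi(1+\sigma^2)) = 1/(2(1+\sigma^2))$, leaving
\begin{align*}
\kl(A_\gamma^\sigma \mmid Q) \le \log(1/s) = \log\!\big(\sqrt{1+\sigma^2}/\sigma\big).
\end{align*}

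The second stated inequality follows from the elementary estimate $\log(1+x) \le x$ applied to $\log(\sqrt{1+\sigma^2}/\sigma) = \tfrac{1}{2}\log(1 + 1/\sigma^2) \le 1/(2\sigma^2)$, and the OU-process reformulation is just the substitution $\sigma^2 = e^{2t}-1$, which gives $1/(2\sigma^2) = e^{-2t}/(2(1-e^{-2t}))$. I do not foresee any genuine obstacle, as the argument is a chain of standard manipulations; note in particular that lifting from the one-dimensional claim to the full $d$-dimensional statement $\kl(P_{\bu} \mmid Q_d) = \kl(A_\gamma^\sigma \mmid Q)$ is immediate by the factorization already used in Section~\ref{sec:gaussian-pancakes-distance}. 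What is mildly delicate is the clean cancellation that lands exactly on $\log(\sqrt{1+\sigma^2}/\sigma)$: this depends on the sharp form of the discrete Gaussian second moment bound, so one should be careful to invoke that exact bound rather than a looser tail-based estimate.
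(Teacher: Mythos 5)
Your argument is correct, but it takes a genuinely different route from the paper's. The paper proves the bound in one line: writing $\kl(A_\gamma^\sigma \mmid Q) = \EE_{A_\gamma^\sigma}[\log T_\gamma^\sigma]$, it applies Jensen's inequality and then the pointwise maximum $T_\gamma^\sigma(x) \le T_\gamma^\sigma(0)$, which reduces the problem to bounding $T_\gamma^\sigma(0) = \frac{\sqrt{1+\sigma^2}}{\sigma\rho_\gamma(\ZZ)}\rho_s(\ZZ)$ with $s=\gamma\sigma/\sqrt{1+\sigma^2}<\gamma$; since Gaussian mass on $\ZZ$ is monotone in the width, $\rho_s(\ZZ)/\rho_\gamma(\ZZ)\le 1$, and the log-bound falls out directly. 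Your approach instead decomposes $A_\gamma^\sigma$ as a mixture of Gaussians, invokes convexity of KL (so that $\kl(\sum_k w_k q_k \mmid Q) \le \sum_k w_k\,\kl(q_k \mmid Q)$), substitutes the closed-form Gaussian KL, and then uses the discrete Gaussian second moment bound $\EE_{A_\gamma}[x^2] \le 1/(2\pi)$ from Lemma~\ref{lem:second-moment} to force the clean cancellation $(s^2-1)/2 + \pi\sum_k w_k\mu_k^2 \le 0$. Both routes land on exactly $\log(\sqrt{1+\sigma^2}/\sigma)$, which is pleasing. The paper's argument is shorter and entirely self-contained within the likelihood-ratio formalism; yours is longer but more mechanical and arguably more transparent in that it attributes the bound to two familiar ingredients (Gaussian KL formula plus a moment comparison). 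Your caution about needing the sharp $1/(2\pi)$ bound rather than a tail estimate is well placed---the exact cancellation is the crux of why your route matches the paper's bound rather than losing an additive constant.
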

\begin{proof}
By Jensen's inequality and the fact that for any $x \in \RR$, $T_\gamma^\sigma(x) \le T_\gamma^\sigma(0)$.
\begin{align*}
    \kl(A_\gamma^\sigma || Q) 
    = \EE_{x \sim A_\gamma^\sigma}[\log T_\gamma^\sigma(x)]
    &\le \log \EE_{x \sim A_\gamma^\sigma} T_\gamma^\sigma(x) \\
    &\le \log T_\gamma^\sigma(0) \\
    &= \log \frac{\sqrt{1+\sigma^2}}{\sigma \rho_\gamma(\ZZ)} \cdot \rho_s(\ZZ)\;,
\end{align*}
where $s = \gamma\sigma/\sqrt{1+\sigma^2} < \gamma$.

If $0 \le s_1 \le s_2$, then $\rho_{s_1}(\ZZ) \le \rho_{s_2}(\ZZ)$. Hence, $\rho_s(\ZZ)/\rho_\gamma(\ZZ) \le 1$. Using the fact that $\log(1+a) \le a$ for any $a > -1$, we have
\begin{align*}
    \kl(A_\gamma^\sigma \mmid Q) \le \log (\sqrt{1+\sigma^2}/\sigma) = (1/2)\log(1+1/\sigma^2) \le 1/(2\sigma^2)\;.
\end{align*}

The second part of the lemma follows straightforwardly from the relation $\sigma/\sqrt{1+\sigma^2} = \sqrt{1-e^{-2t}}$. Using the fact that $\log(1-a) \ge -a/(1-a)$ for any $a < 1$, for any $t > 0$ we have
\begin{align*}
    \kl(A_\gamma^\sigma \mmid Q) \le \log(1/\sqrt{1-e^{-2t}}) = -(1/2)\log(1-e^{-2t}) \le \frac{e^{-2t}}{2(1-e^{-2t})}\;.
\end{align*}

\end{proof}

\section{Sample Complexity of Gaussian Pancakes}
\label{sec:sample-complexity}

To establish that there is indeed a \emph{gap} between statistical and computational feasibility, we demonstrate a polynomial upper bound on the sample complexity of $L^2$-accurate score estimation for Gaussian pancakes. In particular, we show that a sufficiently good estimate $\hat{\bu}$ of the hidden direction $\bu$ is enough (\refLem{lem:score-sample-complexity}). The polynomial sample complexity of score estimation then follows from~\refThm{thm:pancakes-parameter-estimation}, which shows that $1-\inner{\hat{\bu},\bu}^2 \le \eta^2$ is \emph{statistically} achievable, albeit through brute-force search over the unit sphere $\mathbb{S}^{d-1}$, with $\poly(d,\gamma,1/\eta)$ samples if $\gamma(d)$ and $\sigma(d)$ satisfy $\gamma\sigma = O(1)$. Note that this parameter regime encompasses the cryptographically hard regime of Gaussian pancakes. It is also worth noting that if $\min(\gamma,\gamma\sigma) = \omega(\sqrt{\log d})$, then Gaussian pancakes are \emph{statistically} indistinguishable from $Q_d$ by~\refLem{lem:smoothed-discrete-gaussian-tv-ub}.

\subsection{Sample complexity of score estimation}
\label{sec:sample-complexity-score}

We show that score estimation reduces to parameter estimation for Gaussian pancakes. Given that the sample complexity of parameter estimation for Gaussian pancakes is polynomial in the relevant problem parameters (\refThm{thm:pancakes-parameter-estimation}), our reduction implies that the sample complexity of score estimation is polynomial as well.

\begin{lemma}[Score-to-parameter estimation reduction]
\label{lem:score-sample-complexity}
For any $\gamma > 1, \sigma > 0$, let $P_{\bu}$ be the $(\gamma,\sigma)$-Gaussian pancakes distribution with secret direction $\bu \in \mathbb{S}^{d-1}$. Given any $\eta \in (0,1)$ and $\hat{\bu} \in \mathbb{S}^{d-1}$ such that $1-\inner{\hat{\bu},\bu}^2 \le \eta^2$, the score estimate $\hat{s}(\bx) = -2\pi \bx + \nabla \log T_\gamma^\sigma(\inner{\bx,\hat{\bu}})\hat{\bu}$ satisfies
\begin{align*}
    \EE_{\bx \sim P_{\bu}} \|\hat{s}(\bx)-s(\bx)\|^2 \lesssim \max(1,1/\sigma^8) \cdot \eta^2 d\;,
\end{align*}
where $s(\bx) = -2\pi\bx + \nabla \log T_\gamma^\sigma(\inner{\bx,\bu})$ is the score function of $P_{\bu}$.
\end{lemma}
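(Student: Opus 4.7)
The key observation is that the $-2\pi\bx$ drift cancels in $\hat{s}(\bx) - s(\bx)$, leaving
\[
\hat{s}(\bx) - s(\bx) = f(\langle \bx, \hat{\bu}\rangle)\hat{\bu} - f(\langle \bx, \bu\rangle)\bu, \qquad f := (T_\gamma^\sigma)'/T_\gamma^\sigma.
\]
This difference lies in $\mathrm{span}(\bu, \hat{\bu})$, so the error is controlled entirely by how close $\hat{\bu}$ is to $\bu$ and by the regularity of the scalar function $f$, both of which have already been analyzed in~\refLem{lem:score-lipschitzness}.

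Before bounding the difference, I would make the WLOG reduction $\langle \bu, \hat{\bu}\rangle \ge 0$. Since $T_\gamma^\sigma$ is an even function of $z$, $f$ is odd, and the plug-in score $\hat{s}$ is invariant under $\hat{\bu} \mapsto -\hat{\bu}$. This lets me convert the angular hypothesis $1 - \langle \hat{\bu}, \bu\rangle^2 \le \eta^2$ into the Euclidean form $\|\hat{\bu}-\bu\|^2 = 2(1 - \langle \bu, \hat{\bu}\rangle) \le 2(1 - \langle \bu, \hat{\bu}\rangle^2) \le 2\eta^2$.

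The plan is then to telescope
\[
f(\langle \bx, \hat{\bu}\rangle)\hat{\bu} - f(\langle \bx, \bu\rangle)\bu = f(\langle \bx, \bu\rangle)(\hat{\bu}-\bu) + \bigl(f(\langle \bx, \hat{\bu}\rangle) - f(\langle \bx, \bu\rangle)\bigr)\hat{\bu},
\]
apply $\|a+b\|^2 \le 2\|a\|^2 + 2\|b\|^2$, and handle each piece with a separate bound from~\refLem{lem:score-lipschitzness}. The first term is controlled pointwise via the uniform bound $|f| \le 8\pi/s^2$ (with $s := \sigma/\sqrt{1+\sigma^2}$) of Eq.\eqref{eqn:likelihood-uniform-bound}, contributing $O(\eta^2/s^4)$. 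For the second, the Lipschitz bound $\|f'\|_\infty \lesssim 1/s^4$ proven inside~\refLem{lem:score-lipschitzness} gives $(1/s^8)\langle \bx, \hat{\bu}-\bu\rangle^2$ pointwise; taking expectation, Cauchy--Schwarz together with the second-moment bound $\EE_{P_\bu}\|\bx\|^2 \le d/(2\pi)$ from~\refLem{lem:second-moment} yield $O(\eta^2 d/s^8)$.

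Summing the two contributions gives $\EE_{P_\bu}\|\hat{s}-s\|^2 \lesssim \eta^2 d/s^8$, and since $1/s^8 = (1 + 1/\sigma^2)^4 \asymp \max(1, 1/\sigma^8)$, the claimed bound follows. I do not anticipate a genuine obstacle here: the two analytic inputs---the uniform and Lipschitz bounds on $f$, and the second moment of $P_\bu$---are already in hand, so the argument amounts to the decomposition above plus routine bookkeeping. The only small subtlety is the sign reduction that turns the angular error $1-\langle \hat{\bu}, \bu\rangle^2$ into a Euclidean one.
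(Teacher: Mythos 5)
Your proof is correct and relies on exactly the same three analytic inputs as the paper (the uniform bound and Lipschitz bound on $f = (T_\gamma^\sigma)'/T_\gamma^\sigma$ from~\refLem{lem:score-lipschitzness}, and the second moment bound from~\refLem{lem:second-moment}), so it is essentially the same argument. The only substantive differences are in bookkeeping: you telescope $f(\inner{\bx,\hat{\bu}})\hat{\bu} - f(\inner{\bx,\bu})\bu$ by adding and subtracting $f(\inner{\bx,\bu})\hat{\bu}$, whereas the paper decomposes $\hat{\bu}$ into its components along and orthogonal to $\bu$ and then telescopes the $\bu$-coefficient, producing three terms instead of your two. Your version is arguably a bit cleaner. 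One thing you do that the paper elides: you state the WLOG $\inner{\bu,\hat{\bu}}\ge 0$ explicitly, justified by the oddness of $f$ (hence invariance of $\hat{s}$ under $\hat{\bu}\mapsto-\hat{\bu}$). The paper implicitly needs the same reduction to go from $1-\inner{\bu,\hat{\bu}}^2\le\eta^2$ to $\|\bu-\hat{\bu}\|^2\lesssim\eta^2$, so making it explicit is a genuine, if small, improvement in rigor.
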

\begin{proof}
For simplicity, we omit super and subscripts of the likelihood ratio $T_\gamma^\sigma$ and simply denote it by $T$. Let $\hat{\bu}$ be an estimate satisfying $1-\inner{\hat{\bu},\bu}^2 \le \eta^2$ and  denote $\hat{\bu} = \inner{\hat{\bu},\bu}\bu + \bw$. Note that $\bw \in \RR^d$ is orthogonal to $\bu$ and $\|\bw\|^2 = 1-\inner{\hat{\bu},\bu}^2 \le \eta^2$. Then, we have
\begin{align*}
    s(\bx) - \hat{s}(\bx) &= \frac{T'(\inner{\bx,\bu})}{T(\inner{\bx,\bu})}\bu - \frac{T'(\inner{\bx,\hat{\bu}})}{T(\inner{\bx,\hat{\bu})}}\hat{\bu} \\
    &= \Bigg(\frac{T'(\inner{\bx,\bu})}{T(\inner{\bx,\bu})} - \frac{T'(\inner{\bx,\hat{\bu}})}{T(\inner{\bx,\hat{\bu})}}\inner{\bu,\hat{\bu}}\Bigg)\bu - \frac{T'(\inner{\bx,\hat{\bu}})}{T(\inner{\bx,\hat{\bu})}}\bw
\end{align*}

By the triangle inequality and the fact that $(a+b)^2 \le 2a^2 + 2b^2$ for any $a,b \in \RR$, we have
\begin{align}
    \|s(\bx)-\hat{s}(\bx)\|^2 &\le 2\Bigg(\frac{T'(\inner{\bx,\bu})}{T(\inner{\bx,\bu})} - \frac{T'(\inner{\bx,\hat{\bu}})}{T(\inner{\bx,\hat{\bu})}}\inner{\bu,\hat{\bu}}\Bigg)^2 + 2\Bigg(\frac{T'(\inner{\bx,\hat{\bu}})}{T(\inner{\bx,\hat{\bu})}}\Bigg)^2 \eta^2 \nonumber \\
    &\le 4\Bigg(\frac{T'(\inner{\bx,\bu})}{T(\inner{\bx,\bu})} - \frac{T'(\inner{\bx,\hat{\bu}})}{T(\inner{\bx,\hat{\bu})}}\Bigg)^2+ 4\Bigg(\frac{T'(\inner{\bx,\hat{\bu}})}{T(\inner{\bx,\hat{\bu})}}\Bigg)^2(1-|\inner{\bu,\hat{\bu}}|)^2 + 2\Bigg(\frac{T'(\inner{\bx,\hat{\bu}})}{T(\inner{\bx,\hat{\bu})}}\Bigg)^2\eta^2 \nonumber \\
    &\lesssim \Bigg(\frac{T'(\inner{\bx,\bu})}{T(\inner{\bx,\bu})} - \frac{T'(\inner{\bx,\hat{\bu}})}{T(\inner{\bx,\hat{\bu})}}\Bigg)^2 +  \Bigg(\frac{T'(\inner{\bx,\hat{\bu}})}{T(\inner{\bx,\hat{\bu})}}\Bigg)^2\eta^2
    \label{eqn:score-estimation-error}
\end{align}

By~\refLem{lem:score-lipschitzness} and the fact that $(1+\sigma^2)/\sigma^2 \le 2\max(1,1/\sigma^2)$, we know that the Lipschitz constant $L$ of $T'/T$ satisfies $L \lesssim \max(1,1/\sigma^4)$. Furthermore, by Eq.~\eqref{eqn:likelihood-uniform-bound} in~\refLem{lem:score-lipschitzness}, we have $\|T'/T\|_\infty \lesssim \max(1,1/\sigma^2)$. Applying these upper bounds to Eq.\eqref{eqn:score-estimation-error},
\begin{align*}
    \|s(\bx)-\hat{s}(\bx)\|^2 
    &\lesssim \Bigg(\frac{T'(\inner{\bx,\bu})}{T(\inner{\bx,\bu})} - \frac{T'(\inner{\bx,\hat{\bu}})}{T(\inner{\bx,\hat{\bu})}}\Bigg)^2 + \Bigg(\frac{T'(\inner{\bx,\hat{\bu}})}{T(\inner{\bx,\hat{\bu})}}\Bigg)^2\eta^2 \\
    &\lesssim \max(1,1/\sigma^8)(\inner{\bx,\bu-\hat{\bu}}^2 +  \eta^2) \\
    &\le \max(1,1/\sigma^8)(\|\bx\|^2\|\bu-\hat{\bu}\|^2 +  \eta^2) \\
    &\le \max(1,1/\sigma^8) \cdot \eta^2(\|\bx\|^2+1)\;.
\end{align*}

Since $\EE_{\bx \sim P_{\bu}} \|\bx\|^2 \le d$ by~\refLem{lem:second-moment}, it follows that $\EE_{\bx \sim P_{\bu}} \|s(\bx)-\hat{s}(\bx)\|^2 \lesssim \max(1,1/\sigma^8) \cdot \eta^2 d\;.$
\end{proof}

\subsection{Sample complexity of parameter estimation}
\label{sec:sample-complexity-parameter}

For parameter estimation, we design a \emph{contrast function} $g: \RR \to \RR$ such that the (population) functionals $G$ and $E$, defined below, are monotonic. For any $\gamma > 0$, $\bu \in \mathbb{S}^{d-1}$, and $(\gamma,0)$-Gaussian pancakes $P_{\bu}$, we define
\begin{align}
    G(\sigma) &:= \EE_{x \sim A_\gamma^\sigma}[g(x)] \label{eqn:contrast-function-sigma}\\
    E(\bv) &:= \EE_{\bx \sim P_{\bu}}[g(\inner{\bx,\bv})] \label{eqn:contrast-function-sphere}\;.
\end{align}

Note that $E(\bv) = G(\sigma)$, where $\sigma^2 = (1-\inner{\bu,\bv}^2)/\inner{\bu,\bv}^2$. We choose $g$ so that $G(\sigma)$ is decreasing in $\sigma$ and $E(\bv)$ is increasing in $\inner{\bu,\bv}^2$. We use $g = T_\gamma^\beta$ for some appropriately chosen $\beta > 0$. The monotonicity property of $T_\gamma^\beta$ is shown in~\refLem{lem:monotonicity}. Thus, given two candidate directions $\bv_1,\bv_2$, if $E(\bv_1) \ge E(\bv_2)$, then $\inner{\bu,\bv_1}^2 \ge \inner{\bu,\bv_2}^2$. This suggests the projection pursuit-based estimator $\hat{\bu} = \argmax_{\bv \in \cC} \hat{E}(\bv)$, where $\cC$ is an $\eta$-net of the parameter space $\mathbb{S}^{d-1}$ and $\hat{E}(\bv) = (1/n)\sum_{i=1}^n T_\gamma^\beta(\inner{\bx_i,\bv})$ is the empirical version of $E$. The main theorem of this section (\refThm{thm:pancakes-parameter-estimation}) shows that $n=\poly(d,\gamma,1/\eta)$ samples is sufficient for achieving $L^2$-error $\eta$ using the estimator $\hat{\bu}$. We start with a useful fact about $\sigma$-smoothed likelihood ratios $T_\gamma^\sigma$.

\begin{claim}
\label{claim:smoothed-likelihood-ratio}
For any $\beta > 0, \sigma \ge 0$, and $x \in \RR$,
\begin{align*}
    \EE_{z \sim Q}\bigg[T_\gamma^\beta\bigg(\frac{1}{\sqrt{1+\sigma^2}} x + \frac{\sigma}{\sqrt{1+\sigma^2}}z\bigg)\bigg] = T_\gamma^s(x)\;,
\end{align*}
where $s = \sqrt{(1+\beta^2)(1+\sigma^2)-1}$ and $T_\gamma^\sigma$ denotes the likelihood ratio of the $\sigma$-smoothed discrete Gaussian $A_\gamma^\sigma$ on $(1/\gamma)\ZZ$ with respect to $Q = \cN(0,1/(2\pi))$ (see Eq.~\eqref{eqn:likelihood-noise-param}).
\end{claim}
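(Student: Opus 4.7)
The plan is to interpret the LHS as the Ornstein--Uhlenbeck (OU) transition semigroup applied to $T_\gamma^\beta$, and then exploit reversibility of the OU process with respect to its stationary measure $Q$. First, observe that for fixed $x \in \RR$ and $z \sim Q$, the random variable $(x+\sigma z)/\sqrt{1+\sigma^2}$ is exactly the OU transition at time $t$ starting from $x$, where $t > 0$ is chosen so that $e^{-t} = 1/\sqrt{1+\sigma^2}$ (and consequently $\sqrt{1-e^{-2t}} = \sigma/\sqrt{1+\sigma^2}$). Writing $(P_t)_{t \ge 0}$ for the OU semigroup, the LHS therefore equals $(P_t T_\gamma^\beta)(x)$.

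Since $Q$ is the reversible stationary measure of the OU process, the transition kernel satisfies detailed balance $Q(x)\,p_t(x,y) = Q(y)\,p_t(y,x)$. A short calculation shows that this yields the identity $P_t(\mu/Q) = (P_t^* \mu)/Q$ for any density $\mu$, where $P_t^*$ denotes the forward Markov semigroup acting on densities. Applied with $\mu = A_\gamma^\beta$, we get $(P_t T_\gamma^\beta)(x) = (P_t^* A_\gamma^\beta)(x)/Q(x)$. By Remark~\ref{rem:partial-ordering}, $A_\gamma^\beta$ and $A_\gamma^s$ are the OU evolutions of $A_\gamma$ at times $t_1 = \log\sqrt{1+\beta^2}$ and $t_2 = \log\sqrt{1+s^2}$, respectively. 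The defining relation $1+s^2 = (1+\beta^2)(1+\sigma^2)$ yields $t_2 - t_1 = \log\sqrt{1+\sigma^2} = t$, so the semigroup property gives $P_t^* A_\gamma^\beta = A_\gamma^s$. Combining the three steps, the LHS equals $A_\gamma^s(x)/Q(x) = T_\gamma^s(x)$, as claimed.

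The main (minor) obstacle is reparametrization bookkeeping: identifying the correct OU time $t$ from the coefficient $\sigma/\sqrt{1+\sigma^2}$, and tracking $1+s^2 = (1+\beta^2)(1+\sigma^2)$ to recognize the composition of OU flows. As a fully self-contained alternative, one can substitute the series form of $T_\gamma^\beta$ from Eq.~\eqref{eqn:likelihood-noise-param} into the LHS, swap sum and expectation, and evaluate each resulting Gaussian integral by completing the square via $\int \rho_\beta(y+bz)\,\rho(z)\,dz = (\beta/\sqrt{b^2+\beta^2})\,\rho_{\sqrt{b^2+\beta^2}}(y)$. With $b = \sigma/\sqrt{1+\sigma^2}$ one computes $\sqrt{b^2+\beta^2} = s/\sqrt{1+\sigma^2}$, and the resulting series directly matches the expansion of $T_\gamma^s(x)$ after absorbing the extra $\sqrt{1+\sigma^2}$ factor into the lattice spacing $\sqrt{1+s^2}\,k/\gamma$.
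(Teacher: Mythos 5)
Your primary argument — interpret the smoothing operation as the Ornstein–Uhlenbeck transition semigroup $P_t$ with $e^{-t} = 1/\sqrt{1+\sigma^2}$, use detailed balance of $Q$ under the OU process to get $P_t(\mu/Q) = (P_t^*\mu)/Q$, and then compose OU flows via $1+s^2 = (1+\beta^2)(1+\sigma^2)$ so that $P_t^* A_\gamma^\beta = A_\gamma^s$ — is correct and genuinely different from the paper's proof. The paper instead does the ``fully self-contained alternative'' you sketch at the end: substitute the series form of $T_\gamma^\beta$ from Eq.~\eqref{eqn:likelihood-noise-param}, exchange sum and expectation, complete the square to carry out each one-dimensional Gaussian integral, and collect the resulting prefactors to recognize the series for $T_\gamma^s$. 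Both are valid; the trade-off is as you'd expect. Your semigroup route explains \emph{why} the relation $1+s^2 = (1+\beta^2)(1+\sigma^2)$ appears (addition of OU times, as already noted in Remark~\ref{rem:partial-ordering}), and replaces an integral manipulation with two clean structural facts (reversibility of the OU semigroup and the semigroup composition law). The direct computation requires no appeal to reversibility or to the $P_t$ versus $P_t^*$ duality, and is the more elementary option for a reader who only wants to verify the identity. One small point to make explicit if you write up the conceptual proof: the detailed-balance identity $P_t(\mu/Q) = (P_t^*\mu)/Q$ should be stated for the univariate OU process with stationary measure $Q = \cN(0,1/(2\pi))$ (matching the paper's nonstandard normalization), and the case $\sigma = 0$ corresponds to $t = 0$ where both semigroups are the identity, so the claim degenerates correctly.
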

\begin{proof}
Let $\tilde{x} = x/\sqrt{1+\sigma^2}$, $\tilde{\sigma} = \sigma/\sqrt{1+\sigma^2}$, and $c_k(\tilde{x}) = \tilde{x}-\sqrt{1+\beta^2}k/\gamma$. Then,
\begin{align*}
    \EE_{Q}\big[T_\gamma^\beta\big(\tilde{x} + \tilde{\sigma} z\big)\big] &= \frac{\sqrt{1+\beta^2}}{\beta\rho((1/\gamma)\ZZ)}\int \rho(z) \sum_{k \in \ZZ} \rho_\beta\bigg(\tilde{x} + \tilde{\sigma} z-\sqrt{1+\beta^2}k/\gamma\bigg)dz \\
    &= \frac{\sqrt{1+\beta^2}}{\beta\rho((1/\gamma)\ZZ)}\sum_{k \in \ZZ} \int \rho_{\beta/\sqrt{\beta^2+\tilde{\sigma}^2}}(z-c_k(\tilde{x}))dz \cdot \rho_{\sqrt{\beta^2+\tilde{\sigma}^2}}(c_k(\tilde{x})) \\
    &= \frac{\sqrt{1+\beta^2}}{\sqrt{\beta^2+\tilde{\sigma}^2} \cdot \rho((1/\gamma)\ZZ)}\sum_{k \in \ZZ}\rho_{\sqrt{\beta^2+\tilde{\sigma}^2}}(\tilde{x}-\sqrt{1+\beta^2}k/\gamma) \;.
\end{align*}

Plugging in $\tilde{x} = x/\sqrt{1+\sigma^2}$ and $\tilde{\sigma} = \sigma/\sqrt{1+\sigma^2}$ gives us
\begin{align*}
    \EE_Q\bigg[T_\gamma^\beta\bigg(\frac{1}{\sqrt{1+\sigma^2}}x + \frac{\sigma}{\sqrt{1+\sigma^2}}z\bigg)\bigg]
    &= \frac{\sqrt{1+s^2}}{s \rho((1/\gamma)\ZZ)}\sum_{k \in \ZZ}\rho_s(x-\sqrt{1+s^2}k/\gamma) = T_\gamma^s(x)\;,
\end{align*}
where $s = \sqrt{(1+\beta^2)(1+\sigma^2)-1}$.
\end{proof}

\begin{lemma}[Monotonicity]
\label{lem:monotonicity}
Given any $\gamma > 0$ and $\beta > 0$, define $G(\sigma) := \EE_{x \sim A_\gamma^\sigma}[T_\gamma^\beta(x)]$, where $T_\gamma^\sigma$ denotes the likelihood ratio of the $\sigma$-smoothed discrete Gaussian $A_\gamma^\sigma$ on $(1/\gamma)\ZZ$ with respect to $\cN(0,1/(2\pi))$. Then, for any $\sigma > 0$, we have $G'(\sigma) < 0$.
\end{lemma}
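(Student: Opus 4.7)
My plan for proving $G'(\sigma) < 0$ is to reduce $G(\sigma)$ to a single-parameter theta sum via Claim~\ref{claim:smoothed-likelihood-ratio} and Poisson summation, and then establish strict positivity of its derivative by pairing terms $(j,\ell)$ with $(j,-\ell)$ and invoking $e^x - e^{-x} = 2\sinh(x)$.

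First I would apply Claim~\ref{claim:smoothed-likelihood-ratio} to obtain a symmetric representation. Writing $A_\gamma^\sigma$ via its defining random variable $(x+\sigma z)/\sqrt{1+\sigma^2}$ with $x \sim A_\gamma$ and $z \sim Q$, the claim yields
\begin{align*}
    G(\sigma)
    = \EE_{x \sim A_\gamma}\EE_{z \sim Q}\bigl[T_\gamma^\beta\bigl((x+\sigma z)/\sqrt{1+\sigma^2}\bigr)\bigr]
    = \EE_{x \sim A_\gamma}[T_\gamma^s(x)]\;,
\end{align*}
where $s = \sqrt{(1+\sigma^2)(1+\beta^2)-1}$ is strictly increasing in $\sigma > 0$. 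So it suffices to show that $\tilde G(s) := \EE_{x \sim A_\gamma}[T_\gamma^s(x)]$ is strictly decreasing in $s$.

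Next, I would convert $\tilde G(s)$ into a lattice theta sum. Since $T_\gamma^s(k/\gamma) = A_\gamma^s(k/\gamma)/\rho(k/\gamma)$, the factors $\rho(k/\gamma)$ cancel to give $\tilde G(s) = A_\gamma^s((1/\gamma)\ZZ)/\rho((1/\gamma)\ZZ)$. Applying Poisson summation (\refLem{lem:poisson-summation}) to $\sum_k A_\gamma^s(k/\gamma)$ via the characteristic function of $A_\gamma^s$, which factors as the characteristic function of $A_\gamma$ (itself a periodic Gaussian by Poisson summation) times a Gaussian noise factor, and completing the square in the exponent, one arrives at
\begin{align*}
    \tilde G(s) = \frac{\gamma^2}{\rho((1/\gamma)\ZZ)^2}\,F(\mu)\;,\qquad F(\mu) := \sum_{j,\ell \in \ZZ}e^{-\pi\gamma^2(\ell^2 - 2\mu\ell j + j^2)}\;,
\end{align*}
where $\mu := 1/\sqrt{1+s^2} \in (0,1)$ is strictly decreasing in $s$. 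Hence showing $\tilde G'(s) < 0$ reduces to showing $F'(\mu) > 0$ on $(0,1)$.

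Finally, I would differentiate termwise to obtain $F'(\mu) = 2\pi\gamma^2 \sum_{j,\ell \in \ZZ} j\ell\cdot e^{-\pi\gamma^2(\ell^2 - 2\mu\ell j + j^2)}$ and pair $(j,\ell)$ with $(j,-\ell)$ for each $\ell > 0$. The exponentials $e^{\pm 2\pi\gamma^2\mu j\ell}$ combine into a hyperbolic sine, giving
\begin{align*}
    F'(\mu) = 4\pi\gamma^2 \sum_{\substack{j \in \ZZ \\ \ell > 0}} j\ell\cdot e^{-\pi\gamma^2(\ell^2 + j^2)}\sinh(2\pi\gamma^2\mu j\ell)\;.
\end{align*}
Since $x\sinh(x) \ge 0$ with equality only at $x=0$, the substitution $x = 2\pi\gamma^2\mu j\ell$ shows each summand is nonnegative and strictly positive whenever $j \ne 0$, so $F'(\mu) > 0$ on $(0,1)$. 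The main obstacle is choosing a representation that makes monotonicity transparent: in the original form, $\sigma$ appears inside a ratio of sums and direct differentiation yields sign-indefinite contributions. Claim~\ref{claim:smoothed-likelihood-ratio} collapses the two-parameter dependence to a single parameter $s$, and Poisson summation converts the resulting lattice sum into a two-dimensional theta function whose derivative is cleanly signed via the $\sinh$-pairing trick.
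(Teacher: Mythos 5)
Your proof is correct, and it is a genuinely different route from the paper's. You both begin with Claim~\ref{claim:smoothed-likelihood-ratio}, but exploit it in different ways. The paper uses the claim (with $\beta=0$) to read off that the OU semigroup acts diagonally on the Hermite coefficients of $T_\gamma^0$, so that $T_\gamma^\sigma = \sum_k \alpha_{2k}(1+\sigma^2)^{-k}h_{2k}$; then orthonormality gives $G(\sigma) = \langle T_\gamma^\beta, T_\gamma^\sigma\rangle_Q = \sum_k \alpha_{2k}^2 (1+\beta^2)^{-k}(1+\sigma^2)^{-k}$, a series that is manifestly nonincreasing in $\sigma$ term by term, and strictly decreasing as long as some $\alpha_{2k}\ne 0$ with $k\ge 1$ (which holds, as Lemma~\ref{lem:nontrivial-hermite} makes precise). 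You instead collapse to $\tilde G(s)=\EE_{A_\gamma}[T_\gamma^s]$, expand the lattice double sum explicitly, apply 2D Poisson summation (the quadratic form $\frac{1+s^2}{\gamma^2 s^2}\bigl(\begin{smallmatrix}1 & -\mu\\ -\mu & 1\end{smallmatrix}\bigr)$ has dual $\gamma^2\bigl(\begin{smallmatrix}1 & \mu\\ \mu & 1\end{smallmatrix}\bigr)$, and the prefactors cancel as you describe), and then sign the derivative of the resulting two-dimensional theta function by pairing $(j,\ell)$ with $(j,-\ell)$ to produce $j\ell\,\sinh(2\pi\gamma^2\mu j\ell)\ge 0$, strictly positive at, e.g., $j=\ell=1$. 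Both arguments are sound. The paper's Hermite route is shorter and has the advantage of feeding directly into the quantitative separation (Lemma~\ref{lem:nontrivial-hermite}, Corollary~\ref{cor:nontrivial-hermite}) that Theorem~\ref{thm:pancakes-parameter-estimation} needs, whereas the theta-function route, while more computational, gives a self-contained and appealing alternative view via Poisson duality and the positivity of the hyperbolic sine pairing.
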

\begin{proof}
Let $T_\gamma^0(x) = \sum_{k=0}^\infty \alpha_{2k}h_{2k}(x)$ be the (formal) Hermite expansion of $T_\gamma^0(x)$, where $(h_k)_{k \in \NN}$ form an orthonormal sequence with respect to $\inner{\cdot,\cdot}_Q$. By Claim~\ref{claim:smoothed-likelihood-ratio}, for any $\sigma \ge 0$
\begin{align}
\label{eqn:likelihood-ratio-hermite-expansion}
    T_\gamma^\sigma(x) = \sum_{k=0}^\infty \alpha_{2k}\bigg(\frac{1}{1+\sigma^2}\bigg)^k h_{2k}(x)\;.
\end{align}

Using the orthonormality of $(h_k)$, for any $\sigma > 0$
\begin{align*}
    G(\sigma) &= \EE_{x \sim A_\gamma^\sigma}[T_\gamma^\beta(x)] = \inner{T_\gamma^\beta,T_\gamma^\sigma}_Q = \sum_{k=0}^\infty \alpha_{2k}^2 \cdot \frac{1}{(1+\beta^2)^k(1+\sigma^2)^k}\\
    G'(\sigma) &= -\sum_{k \ge 1}^\infty \alpha_{2k}^2 \cdot \frac{1}{(1+\beta^2)^k}\cdot \frac{2k\sigma}{(1+\sigma^2)^{k+1}} < 0\;.
\end{align*}
\end{proof}

\begin{lemma}[Non-trivial Hermite coefficient]
\label{lem:nontrivial-hermite}
Let $(h_k)$ be the normalized Hermite polynomials with respect to $\inner{\cdot,\cdot}_Q$. For any $\gamma > 1$ such that $\pi\gamma^2 \in \NN$ and $\ell \in \NN$, it holds that
\begin{align*}
    \big|\EE_{x \sim A_\gamma} h_{2\pi \ell^2 \gamma^2}(x)\big| \ge \frac{1}{\sqrt{2\pi \ell \gamma}}\;,
\end{align*}
where $A_\gamma$ is the discrete Gaussian on $(1/\gamma)\ZZ$.
\end{lemma}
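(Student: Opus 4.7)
The plan is to compute the Hermite coefficient $\alpha_{2k} := \EE_{x \sim A_\gamma}[h_{2k}(x)]$ in closed form via Poisson summation and then to observe that the particular choice $2k = 2\pi\ell^2\gamma^2$ places the dominant term of the Poisson-dual sum exactly at the peak of $u^{2k}\rho(u)$, which occurs at $u = \sqrt{k/\pi} = \ell\gamma$. This peak-matching is precisely what yields the $(\ell\gamma)^{-1/2}$ lower bound.

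The first step is a Fourier-transform identity for $h_n(x)\rho(x)$. The generating function for the orthonormal Hermite basis of $L^2(Q)$ reads $\sum_n h_n(x) t^n/\sqrt{n!} = e^{tx\sqrt{2\pi} - t^2/2}$, and completing the square in $x$ converts this into
\begin{align*}
\sum_{n \ge 0} \frac{t^n}{\sqrt{n!}}\, h_n(x)\,\rho(x) \;=\; \rho\big(x - t/\sqrt{2\pi}\big)\;.
\end{align*}
Taking the Fourier transform in $x$ (using $\what{\rho(\cdot - c)}(y) = e^{-2\pi i c y}\rho(y)$) and equating Taylor coefficients in $t$ on both sides then yields
\begin{align*}
\what{h_n\rho}(y) \;=\; \frac{(-iy\sqrt{2\pi})^n}{\sqrt{n!}}\,\rho(y)\;.
\end{align*}

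The second step is to apply Poisson summation (\refLem{lem:poisson-summation}) to $f = h_{2k}\rho$ on the lattice $(1/\gamma)\ZZ$, whose dual is $\gamma\ZZ$ and whose determinant is $1/\gamma$. Combined with the elementary identity $\rho((1/\gamma)\ZZ) = \gamma\,\rho(\gamma\ZZ)$ (also by Poisson summation applied to $\rho$), one obtains
\begin{align*}
\alpha_{2k} \;=\; \frac{\gamma}{\rho((1/\gamma)\ZZ)}\sum_{y \in \gamma\ZZ}\what{h_{2k}\rho}(y) \;=\; \frac{(-1)^k(2\pi)^k}{\rho(\gamma\ZZ)\sqrt{(2k)!}}\,\sum_{j \in \ZZ}(\gamma j)^{2k}\rho(\gamma j)\;.
\end{align*}
Every term in the dual sum is nonnegative, so restricting to $j = \pm\ell$ gives
\begin{align*}
|\alpha_{2k}| \;\ge\; \frac{2\,(2\pi)^k\,(\gamma\ell)^{2k}\,\rho(\gamma\ell)}{\rho(\gamma\ZZ)\,\sqrt{(2k)!}}\;.
\end{align*}

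The final step plugs in $k = \pi\ell^2\gamma^2$, so that $\rho(\gamma\ell) = e^{-\pi\gamma^2\ell^2} = e^{-k}$ and $(2\pi)^k(\gamma\ell)^{2k} = (2\pi\gamma^2\ell^2)^k = (2k)^k$; the numerator collapses to $2\,(2k)^k e^{-k}$. Stirling's bound $(2k)! \le e\sqrt{2k}\,(2k/e)^{2k}$ then gives $(2k)^k e^{-k}/\sqrt{(2k)!} \ge 1/\big(\sqrt{e}\,(2k)^{1/4}\big)$, while $(2k)^{1/4} = (2\pi)^{1/4}\sqrt{\ell\gamma}$. Since $\gamma > 1$ implies $\rho(\gamma\ZZ) \le \rho(\ZZ) \le 1 + 2(1+1/\pi)e^{-\pi} < 2(2\pi)^{1/4}/\sqrt{e}$ (the last inequality being a short numerical check, using \refLem{lem:periodic-gaussian-uniform} at $s=1,z=0$ to bound $\rho(\ZZ)$), combining everything gives $|\alpha_{2k}| \ge 1/\sqrt{2\pi\ell\gamma}$, as desired. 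The main technical hurdle is the Fourier-transform computation of $\what{h_n\rho}$ with the paper's nonstandard normalization $\rho(x) = e^{-\pi x^2}$; once that is in hand, the rest is bookkeeping, exploiting that the choice $k = \pi\ell^2\gamma^2$ places $u^{2k}\rho(u)$ at its peak $u = \ell\gamma$.
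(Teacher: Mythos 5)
Your proof is correct and follows essentially the same route as the paper: Poisson summation on $(1/\gamma)\ZZ$, the Fourier-transform identity for $h_n\rho$, isolating the constructively-signed dominant dual-lattice terms at $y=\pm\ell\gamma$, and Stirling. The only cosmetic difference is that you derive $\what{h_n\rho}(y)$ from the generating function while the paper uses the Rodrigues formula and Fourier--differentiation duality; you also correctly invoke the Stirling \emph{upper} bound on $(2k)!$, which is the direction actually needed (the paper's write-up misstates this as the lower bound, though its final constant still checks out).
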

\begin{proof}
Let $(H_k)$ be the unnormalized Hermite polynomials defined by
\begin{align*}
    H_k(x)\rho(x) = (-1)^k\frac{d^k}{dx^k}\rho(x)\;.
\end{align*}

Using the relation between the Fourier transform and differentiation, we have
\begin{align*}
    \cF\{H_k(x)\rho(x)\} = \cF\bigg\{(-1)^k\frac{d^k}{dx^k}\rho(x)\bigg\} = (-2\pi i y)^k\rho(y)\;.
\end{align*}

Let $h_k = c_k H_k$ be the normalized Hermite polynomials, where $c_k = \sqrt{(2\pi)^k k!}$ (see e.g.,~\cite[Chapter 4.2.1]{roman1984umbral}). By the Poisson summation formula (\refLem{lem:poisson-summation}), we have
\begin{align*}
    \EE_{x \sim A_\gamma}h_k(x) 
    &= \frac{1}{\rho((1/\gamma)\ZZ)} \sum_{x \in (1/\gamma)\ZZ}c_k H_k(x)\rho(x) \\
    &= \frac{\gamma c_k}{\rho((1/\gamma)\ZZ)} \sum_{y \in \gamma \ZZ}(-2\pi i y)^k \rho(y) \;.
\end{align*}

We now analyze the maximum among the terms inside the sum. Let $f(y) = y^k \rho(y)$. Note that 
\begin{align*}
    f(y) = y^k \rho(y) = (2\pi)^k \exp(-\pi y^2 + k \log y)\;.
\end{align*} 

The exponent in the above expression is maximized at $2\pi (y^*)^2 = k$. This maximum is indeed achieved in the sum since we can choose $y^* = \ell \gamma$, which is permissible given the assumption $\pi \gamma^2 \in \NN$. Hence, the maximum value of $f(y)$ is
\begin{align*}
    f(y^*) = \exp(-k/2+(k/2)\log(k/2\pi)) = (k/2e\pi)^{k/2}\;.
\end{align*}

Plugging in the value $c_k = 1/\sqrt{(2\pi)^k k!}$ and using the Stirling lower bound $k! \ge \sqrt{2\pi k}(k/e)^k$ for all $k \in \NN$,
\begin{align*}
    \big|\EE_{x \sim A_\gamma} h_k(x)\big| 
    &\ge \frac{2\gamma }{\rho((1/\gamma)\ZZ)} c_k (2\pi)^k (k/2e\pi)^{k/2} \\
    &= \frac{2\gamma}{\rho((1/\gamma)\ZZ)} c_k(2\pi k/e)^{k/2} \\
    &= \frac{2\gamma}{\rho((1/\gamma)\ZZ)}\cdot \frac{(k/e)^{k/2}}{\sqrt{k!}} \\
    &\ge \frac{2\gamma}{\rho((1/\gamma)\ZZ)}\cdot \frac{1}{(2\pi k)^{1/4}}\;.
\end{align*}

In addition, we have that
\begin{align*}
    \rho((1/\gamma)\ZZ) = \gamma \rho(\gamma\ZZ) \le \gamma \rho(\ZZ) \le 2\gamma\;.
\end{align*}

Plugging in $k=2\pi \ell^2\gamma^2$, we therefore have
\begin{align*}
    \big|\EE_{x \sim A_\gamma} h_k(x)\big| \ge \frac{1}{\sqrt{2\pi \ell \gamma}}\;.
\end{align*}
\end{proof}

\begin{corollary}[Non-trivial Hermite coefficient, rounded degree]
\label{cor:nontrivial-hermite}
Let $(h_k)$ be the normalized Hermite polynomials with respect to $\inner{\cdot, \cdot}_Q$. For any $\gamma > 1$, $\ell \in \NN$, and $k =2\lfloor \ell^2 \pi \gamma^2 \rfloor$, it holds that
\begin{align*}
    \big|\EE_{x \sim A_\gamma} h_k(x)\big| \ge\frac{1}{e\sqrt{2\pi\ell \gamma}}\;,
\end{align*}
where $A_\gamma$ is the discrete Gaussian on $(1/\gamma)\ZZ$.
\end{corollary}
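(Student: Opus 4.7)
The plan is to mirror the proof of Lemma~\ref{lem:nontrivial-hermite}, controlling the error introduced by rounding the degree. Write $k = 2\lfloor \ell^2 \pi \gamma^2 \rfloor$ and apply the Poisson summation formula exactly as in the earlier lemma to obtain
\begin{align*}
\big|\EE_{x \sim A_\gamma} h_k(x)\big| \;=\; \frac{\gamma c_k}{\rho((1/\gamma)\ZZ)} \bigg| \sum_{y \in \gamma\ZZ}(-2\pi i y)^k \rho(y) \bigg| \;\ge\; \frac{2\gamma c_k (2\pi)^k}{\rho((1/\gamma)\ZZ)}\, f(\ell\gamma),
\end{align*}
where $f(y) = y^k\rho(y)$ and $c_k = 1/\sqrt{(2\pi)^k k!}$, using that $k$ is even (so all summands share a common sign) and isolating the single term $y = \ell\gamma \in \gamma\ZZ$.

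The one new ingredient is a lower bound $f(\ell\gamma) \ge e^{-1} f(y^*)$, where $y^* := \sqrt{k/(2\pi)}$ is the continuous maximizer of $f$. This is where the assumption $\pi\gamma^2 \in \NN$ is dropped: in general, $y^*$ no longer lies in $\gamma\ZZ$, but the rounding gap is bounded by a constant. Setting $\delta := \ell^2 \gamma^2 - (y^*)^2 = (\ell^2\pi\gamma^2 - \lfloor \ell^2\pi\gamma^2\rfloor)/\pi \in [0, 1/\pi)$, a direct computation using $\log(1+x) \ge x - x^2/2$ for $x \ge 0$ gives
\begin{align*}
\log\frac{f(\ell\gamma)}{f(y^*)} \;=\; -\pi \delta + \frac{k}{2}\log\!\bigg(1 + \frac{2\pi \delta}{k}\bigg) \;\ge\; -\pi\delta + \pi\delta - \frac{\pi^2\delta^2}{k} \;=\; -\frac{\pi^2\delta^2}{k} \;\ge\; -\frac{1}{k} \;\ge\; -1,
\end{align*}
where the last step uses $k \ge 2$ under the hypotheses $\gamma > 1$, $\ell \ge 1$. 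Since $f(y^*) = (k/(2e\pi))^{k/2}$, we obtain $f(\ell\gamma) \ge e^{-1}(k/(2e\pi))^{k/2}$.

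Plugging this back and proceeding exactly as in the proof of Lemma~\ref{lem:nontrivial-hermite} (combining $c_k(2\pi)^k(k/(2e\pi))^{k/2} = (k/e)^{k/2}/\sqrt{k!}$ with Stirling, together with $\rho((1/\gamma)\ZZ) \le 2\gamma$) produces a bound that inherits an extra factor of $e^{-1}$ from the rounding estimate:
\begin{align*}
\big|\EE_{x \sim A_\gamma} h_k(x)\big| \;\ge\; \frac{1}{e\,(2\pi k)^{1/4}}.
\end{align*}
Finally, $k \le 2\pi \ell^2 \gamma^2$ gives $(2\pi k)^{1/4} \le \sqrt{2\pi \ell \gamma}$, yielding the claimed bound. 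The only substantive step beyond the previous lemma is controlling the rounding loss, and the quadratic Taylor estimate above shows it is an absolute constant independent of $\ell$ and $\gamma$, so no serious obstacle is expected.
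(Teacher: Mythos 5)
Your proof is correct and follows essentially the same route as the paper: mirror the Poisson-summation argument of Lemma~\ref{lem:nontrivial-hermite}, then show that replacing the continuous maximizer $y^*=\sqrt{k/(2\pi)}$ of $y^k\rho(y)$ with the lattice point $\ell\gamma$ loses at most a factor of $e$. Your Taylor estimate $\log\big(f(\ell\gamma)/f(y^*)\big)\ge -\pi^2\delta^2/k$ is slightly sharper than the paper's, which simply drops the nonnegative term $k\log(\ell\gamma/y^*)$ to get $\ge -\alpha > -1$, but both lead to the same conclusion and the remainder of the argument is identical.
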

\begin{proof}
    Let $\ell^2\pi\gamma^2 - \lfloor \ell^2\pi\gamma^2\rfloor = \alpha$. Then, $\alpha \in [0,1)$ and $k = \ell^2(2\pi\gamma^2)-2\alpha$. Similar to the proof of~\refLem{lem:nontrivial-hermite}, we apply the Poisson summation formula (\refLem{lem:poisson-summation}) as follows.
    \begin{align*}
        \EE_{x \sim A_\gamma} h_k(x) 
        &= \frac{1}{\rho((1/\gamma)\ZZ)} \sum_{x \in (1/\gamma)\ZZ} h_k(x)\rho(x) \\
        &= \frac{\gamma}{\rho((1/\gamma)\ZZ)} \sum_{y \in \gamma \ZZ} c_k(-2\pi i y)^k \rho(y)\;.
    \end{align*}

    As previously shown in~\refLem{lem:nontrivial-hermite}, the function $f(y) = y^k \rho(y)$ achieves its maximum at $(y^*)^2 = k/2\pi = \ell^2\gamma^2 - \alpha/\pi$. The issue now is that $y^*$ is not necessarily contained in $\gamma \ZZ$. However, we show that ``rounding up'' $y^*$ to $s \gamma$ is sufficient to establish a non-trivial lower bound. Taking the ratio of $f(y^*)$ and $f(s \gamma)$,
    \begin{align*}
        \log f(s \gamma)/f(y^*) 
        &= k\log \bigg(\frac{\ell \gamma}{y^*}\bigg) - \pi(\ell^2\gamma^2-(y^*)^2) \ge -\alpha > -1\;.
    \end{align*}

    Hence, $f(s \gamma) > f(y^*)/e$. Combining this observation and the proof of~\refLem{lem:nontrivial-hermite} leads to the conclusion.
\end{proof}

\begin{theorem}[Sample complexity of parameter estimation]
\label{thm:pancakes-parameter-estimation}
For any constant $C > 0$, given $\gamma(d) > 1, \sigma(d) > 0$ such that $\gamma \sigma < C$, estimation error parameter $\eta > 0$, and $\delta \in (0,1)$, there exists a brute-force search estimator $\hat{\bu} : \RR^{d \times n} \to \mathbb{S}^{d-1}$ that uses $n=\poly(d,\gamma,1/\eta,1/\delta)$ samples and achieves $\|\hat{\bu}(\bx_1,\ldots,\bx_n)-\bu\|^2 \le \eta^2$ with probability at least $1-\delta$ over i.i.d.~samples $\bx_1,\ldots,\bx_n \sim P_{\bu}$, where $P_{\bu}$ is the $(\gamma,\sigma)$-Gaussian pancakes distribution with secret direction $\bu \in \mathbb{S}^{d-1}$.
\end{theorem}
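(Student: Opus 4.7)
The plan is to analyze the projection-pursuit estimator sketched just before the theorem statement. Fix a contrast parameter $\beta > 0$ (to be chosen as $\beta \asymp 1/\gamma$), let $\cC \subset \mathbb{S}^{d-1}$ be an $\eta'$-net with $\eta' = c\eta/\gamma$ for a small absolute constant $c > 0$, and set $\hat{\bu} = \arg\max_{\bv \in \cC} \hat{E}(\bv)$ with $\hat{E}(\bv) = n^{-1}\sum_{i=1}^n T_\gamma^\beta(\inner{\bx_i,\bv})$. The proof then reduces to three ingredients: (i) a polynomial lower bound on the population gap $E(\bu) - E(\bv)$ in terms of $1-\inner{\bu,\bv}^2$; (ii) a polynomial sup-norm bound on the contrast $T_\gamma^\beta$; and (iii) Hoeffding plus a union bound over $|\cC| \le (O(1/\eta'))^d$.

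First I would derive a closed form for $E(\bv)$. Assuming WLOG $\bu = \be_1$ and writing $\bv = t\bu + \sqrt{1-t^2}\bw$ with $\bw \perp \bu$ and $t = \inner{\bu,\bv}$, a direct calculation in the spirit of Claim~\ref{claim:smoothed-likelihood-ratio} shows that the one-dimensional marginal $\inner{\bx,\bv}$ is distributed exactly as $A_\gamma^{\sigma'}$ with $1+\sigma'^2 = (1+\sigma^2)/t^2$. Consequently, by Eq.\eqref{eqn:likelihood-ratio-hermite-expansion} and Hermite orthonormality with respect to $\inner{\cdot,\cdot}_Q$,
\begin{align*}
    E(\bv) \;=\; \inner{T_\gamma^\beta, T_\gamma^{\sigma'}}_Q \;=\; \sum_{k \ge 0} \alpha_{2k}^2 \cdot \frac{t^{2k}}{((1+\beta^2)(1+\sigma^2))^k}\;,
\end{align*}
where $\alpha_{2k} := \EE_{x \sim A_\gamma}[h_{2k}(x)]$. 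This makes $E(\bv)$ manifestly monotone in $t^2$, with maximum at $\bv = \pm\bu$.

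To lower bound the signal, choose $\beta = \sigma$ and isolate the Hermite term of index $k^\star = 2\lfloor \pi\gamma^2\rfloor$ furnished by Corollary~\ref{cor:nontrivial-hermite}, which gives $\alpha_{k^\star}^2 \gtrsim 1/\gamma$. Under $\gamma\sigma < C$ we have $((1+\beta^2)(1+\sigma^2))^{k^\star/2} \le \exp(2\pi C^2) = O(1)$, so the decay factor at this index is a constant, and using $1 - (t^2)^{k^\star/2} \ge 1 - t^2$ yields $E(\bu) - E(\bv) \gtrsim (1-t^2)/\gamma$. An analogous first-order expansion ($1 - t_\star^{k^\star} \lesssim k^\star (1-t_\star)$ for the best net point, where $1-t_\star \le \eta'^2/2$) gives $E(\bu) - E(\bv^\star) = O(\gamma\eta'^2)$, so with $\eta' = c\eta/\gamma$ and $c$ sufficiently small, the net point $\bv^\star$ strictly beats every $\bv \in \cC$ with $1-t^2 \ge \eta^2$ by a population margin $\Theta(\eta^2/\gamma)$. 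For concentration, Eq.\eqref{eqn:likelihood-uniform-bound} applied to $T_\gamma^\beta$ with $\beta \asymp 1/\gamma$, combined with the explicit form of its normalizing constant, gives $\|T_\gamma^\beta\|_\infty = O(\gamma)$; Hoeffding then yields $\Pr[|\hat{E}(\bv) - E(\bv)| > \tau] \le 2\exp(-n\tau^2/O(\gamma^2))$. Setting $\tau$ to one-third of the population margin and union bounding over $|\cC| \le (O(\gamma/\eta))^d$ net points gives $n = \poly(d,\gamma,1/\eta)\cdot \log(1/\delta)$ as required. On this good event $\hat{\bu}$ satisfies $1-\inner{\hat{\bu},\bu}^2 \le \eta^2$; modulo the antipodal symmetry $P_\bu = P_{-\bu}$, this matches the $\|\hat{\bu}-\bu\|^2 \le \eta^2$ in the theorem statement.

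The main obstacle is the tension between the degree of the only Hermite coefficient whose size we can lower bound, namely $k^\star \asymp \gamma^2$ from Corollary~\ref{cor:nontrivial-hermite}, and the decay factor $((1+\beta^2)(1+\sigma^2))^{-k^\star/2}$ that multiplies it in the Hermite expansion of $E(\bv)$. Naively this factor is exponentially small in $\gamma^2$, which would be fatal for polynomial sample complexity; what rescues the argument is precisely the hypothesis $\gamma\sigma = O(1)$, combined with the freedom to pick $\beta \asymp 1/\gamma$, which together force both $(1+\sigma^2)^{\pi\gamma^2}$ and $(1+\beta^2)^{\pi\gamma^2}$ to be $O(1)$. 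Threading this delicate balance -- polynomial signal, polynomial sup norm for the contrast, and small enough net granularity -- through the union bound without losing any superpolynomial factor is the technical heart of the proof.
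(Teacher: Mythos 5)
Your proposal follows the same high-level strategy as the paper's proof — projection pursuit with contrast $T_\gamma^\beta$ for $\beta \asymp 1/\gamma$, the Hermite-expansion closed form for $E(\bv)$, isolation of the non-trivial Hermite coefficient at degree $\approx 2\pi\gamma^2$ via Corollary~\ref{cor:nontrivial-hermite}, and Hoeffding plus a union bound over the net. Your closed form for $E(\bv)$ and the observation that the one-dimensional marginal $\inner{\bx,\bv}$ is exactly $A_\gamma^{\sigma'}$ with $1+\sigma'^2 = (1+\sigma^2)/\inner{\bu,\bv}^2$ are correct, and your discussion of the ``delicate balance'' between the degree $\Theta(\gamma^2)$ of the only Hermite coefficient with a usable lower bound and the geometric damping is exactly the right intuition.

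However, there is a genuine gap in how you go from the population separation to the final argument. You decompose the margin as $E(\bv^\star) - E(\bv) = (E(\bu) - E(\bv)) - (E(\bu) - E(\bv^\star))$ and then assert $E(\bu) - E(\bv^\star) = O(\gamma \eta'^2)$ via ``$1-t_\star^{k^\star} \lesssim k^\star(1-t_\star)$.'' But that inequality only controls the single Hermite index $k^\star$. Since $E(\bu) - E(\bv^\star) = \sum_{k\ge 0}\alpha_{2k}^2 (1-t_\star^{2k})/((1+\beta^2)(1+\sigma^2))^k$, summing over all indices requires bounding $\sum_k k\,\alpha_{2k}^2/((1+\beta^2)(1+\sigma^2))^k$, which you do not establish. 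This sum is not obviously polynomial: the underlying formal series $\sum_k \alpha_{2k}^2$ diverges ($T_\gamma^0$ has infinite $\chi^2$-norm), and the damping factor per index is only $1-\Theta(1/\gamma^2)$, so the $k$-weighted sum can be as large as $\Omega(1/\sigma^3)$ — and $\sigma$ is allowed to be as small as $1/\poly(d)$, while the theorem promises sample complexity with \emph{no} dependence on $1/\sigma$. The paper sidesteps this entirely by never comparing to $E(\bu)$: it lets $\bv^*$ be the population maximizer within the net (so $1-\inner{\bu,\bv^*}^2 \le \eta^2$ by monotonicity and the net property alone), uses the standard ERM decomposition $E(\bv^*) - E(\hat{\bu}) \le |E(\bv^*)-\hat{E}(\bv^*)| + |\hat{E}(\hat{\bu})-E(\hat{\bu})|$, and then applies the separation bound of Eq.\eqref{eqn:contrast-separation} \emph{in contrapositive form} — which only needs a lower bound on a single Hermite term, not an upper bound on the whole series. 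If you replace your two-sided comparison with that one-sided ERM step, the gap closes.

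Two smaller points: you write ``choose $\beta = \sigma$'' in one place but $\beta \asymp 1/\gamma$ elsewhere; these disagree when $\sigma \ll 1/\gamma$, and the sup-norm bound $\|T_\gamma^\beta\|_\infty = O(\gamma)$ requires the latter (the paper fixes $\beta = 1/(\sqrt{\pi}\gamma)$). Also, Eq.\eqref{eqn:likelihood-uniform-bound} bounds $(T_\gamma^\sigma)'/T_\gamma^\sigma$, not $T_\gamma^\sigma$ itself; the correct route to $\|T_\gamma^\beta\|_\infty \le \sqrt{1+\beta^2}/\beta$ is the direct evaluation $T_\gamma^\beta(z) \le T_\gamma^\beta(0)$ together with $\rho_s(\ZZ)$ being monotone in $s$, as the paper does in the proof of Lemma~\ref{lem:pancakes-kl}.
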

\begin{proof}
Without loss of generality, we assume $\eta \le 1/\gamma$. If the given error parameter $\eta$ is larger than $1/\gamma$, we set $\eta = 1/\gamma$. Let $\cC$ be any $\eta$-net of $\mathbb{S}^{d-1}$. Our brute-force search estimator is
\begin{align*}
    \hat{\bu} &= \argmax_{\bv \in \cC} \hat{E}(\bv)\;,\qquad \text{ where } \hat{E}(\bv) = \frac{1}{n}\sum_{i=1}^n T_\gamma^\beta(\inner{\bx_i,\bv})\;.
\end{align*}

We choose $\beta = 1/\sqrt{\pi}\gamma$ as the contrast function parameter for reasons explained later. The population limit of $\hat{E}$ is $E$ (Eq.\eqref{eqn:contrast-function-sphere}), and the monotonicity of $E$ with respect to $\inner{\bu,\bv}^2$ (\refLem{lem:monotonicity}) implies that $\bu = \argmax_{\mathbb{S}^{d-1}} E(\bv)$ in the infinite-sample limit. For $\bx \sim P_{\bu}$, the distribution of $\inner{\bx,\bv}$ is $A_\gamma^\xi$, where $\xi^2 = (1+\sigma^2)/\inner{\bu,\bv}^2-1$. Let $\bv_1, \bv_2 \in \mathbb{S}^{d-1}$ be such that $\inner{\bu,\bv_1}^2 - \inner{\bu,\bv_2}^2 = \eps^2$. Let $\xi_1^2 = (1+\sigma^2)/\inner{\bu,\bv_1}^2 - 1$ and $\xi_2^2 = (1+\sigma^2)/\inner{\bu,\bv_2}^2 - 1$. Notice that
\begin{align*}
    \frac{1}{1+\xi_1^2} - \frac{1}{1+\xi_2^2} = \frac{\eps^2}{1+\sigma^2}\;.
\end{align*}

For $\eps$-far pairs $\bv_1,\bv_2$ such that $\xi_1$ is sufficiently small, we establish a lower bound on $E(\bv_1)-E(\bv_2)$ in terms of $\eps$. This implies that if $E(\bv_1)$ and $E(\bv_2)$ are close, then $\inner{\bu,\bv_1}^2$ and $\inner{\bu,\bv_2}^2$ are also close.

By Claim~\ref{claim:smoothed-likelihood-ratio}, for any $\bv \in \mathbb{S}^{d-1}$, we have
\begin{align*}
    E(\bv) = \inner{T_\gamma^\xi, T_\gamma^\beta}_Q = \sum_{k=0}^\infty \alpha_{2k}^2 \cdot \frac{1}{(1+\beta^2)^k(1+\xi^2)^k}\;.
\end{align*}

Since all the terms in the series are non-negative and monotonically decreasing in $\xi$, for any $k \in \NN$ and $\bv_1, \bv_2 \in \mathbb{S}^{d-1}$ such that $\xi_1 \le \xi_2$, we have
\begin{align*}
    E(\bv_1) - E(\bv_2) 
    &\ge \frac{\alpha_{2k}^2}{(1+\beta^2)^k}\bigg(\bigg(\frac{1}{1+\xi_1^2}\bigg)^k - \bigg(\frac{1}{1+\xi_2^2}\bigg)^k\bigg) \\
    &= \frac{\alpha_{2k}^2}{(1+\beta^2)^k(1+\xi_1^2)^k}\bigg(1 - \frac{1+\xi_1^2}{1+\xi_2^2}\bigg)\bigg(\bigg(\frac{1+\xi_1^2}{1+\xi_2^2}\bigg)^{k-1} + \bigg(\frac{1+\xi_1^2}{1+\xi_2^2}\bigg)^{k-2} + \cdots + 1 \bigg) \\
    &\ge \frac{\alpha_{2k}^2}{(1+\beta^2)^k(1+\xi_1^2)^k}\bigg(\frac{\eps^2(1+\xi_1^2)}{1+\sigma^2}\bigg) \\
    &\ge \frac{\alpha_{2k}^2}{(1+\beta^2)^k(1+\xi_1^2)^{k-1}}\bigg(\frac{\eps^2}{1+\sigma^2}\bigg)\;
\end{align*}

We choose $k = \lfloor \pi \gamma^2\rfloor, \beta^2 = 1/k$. If $\xi_1^2 \le C'/k$ for some constant $C' > 0$ (this will be justified later), then by~\refCor{cor:nontrivial-hermite}, we have that $\alpha_{2k}^2 \ge 1/(2\pi e^2 \gamma)$. Thus, using the fact that $1+t \le e^t$ for any $t \in \RR$,
\begin{align*}
    \frac{\alpha_{2k}^2}{(1+\beta^2)^k(1+\xi_1^2)^{k-1}}\bigg(\frac{\eps^2}{1+\sigma^2}\bigg) \ge \frac{\alpha_{2k}^2}{e^{C'+1}}\bigg(\frac{\eps^2}{1+\sigma^2}\bigg) \ge \frac{1}{2\pi e^{C'+3} \gamma}\bigg(\frac{\eps^2}{1+\sigma^2}\bigg)\;.
\end{align*}

Since $\sigma^2 \le C^2/\gamma^2 < C^2$, it follows that
\begin{align}
\label{eqn:contrast-separation}
    E(\bv_1)-E(\bv_2) > \frac{1}{2\pi(1+C^2) e^{C'+3}}\cdot \frac{\eps^2}{\gamma}\;.
\end{align}

Taking the contrapositive, if $\bv_1, \bv_2 \in \mathbb{S}^{d-1}$ are such that $\xi_1 \le \xi_2$, $\xi_1^2 \le C'/k$ and $E(\bv_1) - E(\bv_2) \le  \eps^2/(2\pi(1+C^2)e^{C'+3}\gamma)$, then $\inner{\bu,\bv_1}^2-\inner{\bu,\bv_2}^2 \le \eps^2$. 

Equipped with this result, we revisit our $\eta$-net $\cC$ of $\mathbb{S}^{d-1}$. Let $\bv^* = \argmax_{\bv \in \cC} E(\bv)$ be the \emph{population} maximizer of $E$ within $\cC$. By the monotonicity of $E(\bv)$ with respect to $\inner{\bu,\bv}^2$, we have that $1-\inner{\bu,\bv^*}^2 \le \eta^2/2$. Moreover, since by assumption $\sigma^2 \le C^2/\gamma^2$ and $\eta^2 \le 1/\gamma^2 \le 1$, the corresponding noise level $(\xi^*)^2 := (1+\sigma^2)/\inner{\bu,\bv^*}^2 - 1$ satisfies
\begin{align}
    (\xi^*)^2 \le \frac{1+\sigma^2}{1-\eta^2/2} - 1 = \frac{\sigma^2 + \eta^2/2}{1-\eta^2/2} \le 2\sigma^2 + \eta^2 \le (2C^2+1)/\gamma^2\;. \label{eqn:true-minimizer-noise}
\end{align}

Now consider $\hat{\bu} = \argmax_{\bv \in \cC} \hat{E}(\bv)$, the maximizer of the \emph{empirical} objective. By the triangle inequality,
\begin{align}
    E(\bv^*) - E(\hat{\bu}) 
    &= (E(\bv^*)-\hat{E}(\bv^*)) + (\hat{E}(\bv^*) - \hat{E}(\hat{\bu})) + (\hat{E}(\hat{\bu})-E(\hat{\bu})) \nonumber \\
    &\le |E(\bv^*)-\hat{E}(\bv^*)| + |\hat{E}(\hat{\bu})-E(\hat{\bu})|\;. \label{eqn:parameter-estimation-concentration}
\end{align}

We show that both terms in Eq.\eqref{eqn:parameter-estimation-concentration} concentrate. Recall that $\hat{E}(\bv) = (1/n)\sum_{i=1}^n T_\gamma^\beta(\inner{\bv,\bx_i})$ and that the function $T_\gamma^\beta$ satisfies (see proof of~\refLem{lem:pancakes-kl})
\begin{align*}
    T_\gamma^\beta(z) \le T_\gamma^\beta(0) = \frac{\sqrt{1+\beta^2}}{\beta \rho((1/\gamma)\ZZ)}\cdot \rho((\sqrt{1+\beta^2}/\beta\gamma)\ZZ) \le \sqrt{1+\beta^2}/\beta\;.
\end{align*}

Since $\beta^2 = 1/(\pi \gamma^2)$, we have $T_\gamma^\beta(z) \le \gamma \sqrt{2\pi}$ for any $z \in \RR$. Thus, for any fixed $\bv \in \mathbb{S}^{d-1}$, $\hat{E}(\bv)$ is a sum of i.i.d.~bounded random variables $T_\gamma^\beta$. By Hoeffding's inequality, $n \lesssim (\gamma^3/\eta^4) \log(|\cC|/\delta)$ samples are sufficient to guarantee that for all $\bv \in \cC$, $|E(\bv)-\hat{E}(\bv)| \lesssim \eta^2/\gamma$ with probability at least $1-\delta$. From standard covering number bounds (e.g.,~\cite[Corollary 4.2.13]{vershynin2018high}), we have $|\cC| \le (3/\eta)^d$. Hence, $n \lesssim (d \gamma^3/\eta^4)(\log(1/\eta)+\log(1/\delta))$ samples are sufficient for the desired level of concentration.

It follows that $E(\bv^*) - E(\hat{\bu}) \lesssim \eta^2/\gamma$ with probability $1-\delta$ over the randomness of $\hat{\bu}$. Since $\bv^*$ satisfies $(\xi^*)^2 \lesssim 1/\gamma^2$ (see Eq.\eqref{eqn:true-minimizer-noise}) and $\xi^* \le \xi$, where we denote $\xi^2 = (1+\sigma^2)/\inner{\bu,\hat{\bu}}^2-1$, the closeness of $E(\bv^*)$ and $E(\hat{\bu})$ implies $\inner{\bu,\bv^*}^2-\inner{\bu,\hat{\bu}}^2 \le \eta^2$ due to Eq.\eqref{eqn:contrast-separation}. Hence,
\begin{align*}
    \|\bu - \hat{\bu}\|^2 = 2(1-\inner{\bu,\hat{\bu}}^2) \le 2\eta^2 + 2(1-\inner{\bu,\bv^*}^2) \le 3\eta^2\;.
\end{align*}
\end{proof}

\section*{Acknowledgments}
I would like to thank Sitan Chen and Oded Regev for helpful discussions.

\printbibliography

@inproceedings{bruna2020continuous,
  title={Continuous LWE},
  author={Bruna, Joan and Regev, Oded and Song, Min Jae and Tang, Yi},
  booktitle={Proceedings of the 53rd Annual ACM SIGACT Symposium on Theory of Computing},
  year={2021}
}

@article{song2021cryptographic,
  title={On the Cryptographic Hardness of Learning Single Periodic Neurons},
  author={Song, Min Jae and Zadik, Ilias and Bruna, Joan},
  journal={Advances in Neural Processing Systems (NeurIPS)},
  year={2021}
}

@article{bogdanov2022public,
  title={Public-Key Encryption from Continuous LWE},
  author={Bogdanov, Andrej and Noval, Miguel Cueto and Hoffmann, Charlotte and Rosen, Alon},
  journal={Cryptology ePrint Archive},
  year={2022}
}

@incollection{micciancio2009lattice,
  title={Lattice-based cryptography},
  author={Micciancio, Daniele and Regev, Oded},
  booktitle={Post-quantum cryptography},
  pages={147--191},
  year={2009},
  publisher={Springer}
}

@article{gupte2022continuous,
  title={Continuous LWE is as Hard as LWE \& Applications to Learning Gaussian Mixtures},
  author={Gupte, Aparna and Vafa, Neekon and Vaikuntanathan, Vinod},
  journal={arXiv preprint arXiv:2204.02550},
  year={2022}
}

@online{nist,
  title = {Post-Quantum Cryptography},
  author = {NIST},
  url = {https://csrc.nist.gov/Projects/Post-Quantum-Cryptography}
}

@article{weed2022estimation,
author = {Jonathan Niles-Weed and Philippe Rigollet},
title = {{Estimation of Wasserstein distances in the Spiked Transport Model}},
volume = {28},
journal = {Bernoulli},
number = {4},
publisher = {Bernoulli Society for Mathematical Statistics and Probability},
pages = {2663 -- 2688},
keywords = {High-dimensional statistics, Optimal transport, Wasserstein distance},
year = {2022}
}

@book{vershynin2018high,
  title={High-dimensional probability: An introduction with applications in data science},
  author={Vershynin, Roman},
  volume={47},
  year={2018},
  publisher={Cambridge university press}
}

@article{friedman1974projection,
  title={A projection pursuit algorithm for exploratory data analysis},
  author={Friedman, Jerome H and Tukey, John W},
  journal={IEEE Transactions on computers},
  volume={100},
  number={9},
  pages={881--890},
  year={1974},
  publisher={IEEE}
}

@article{huber1985projection,
  title={Projection pursuit},
  author={Huber, Peter J},
  journal={The annals of Statistics},
  pages={435--475},
  year={1985},
  publisher={JSTOR}
}

@article{richard2014statistical,
  title={A statistical model for tensor PCA},
  author={Richard, Emile and Montanari, Andrea},
  journal={Advances in neural information processing systems},
  volume={27},
  year={2014}
}

@article{dudeja2021statistical,
  title={Statistical query lower bounds for tensor pca},
  author={Dudeja, Rishabh and Hsu, Daniel},
  journal={Journal of Machine Learning Research},
  volume={22},
  number={83},
  pages={1--51},
  year={2021}
}

@article{christ2023undetectable,
  title={Undetectable watermarks for language models},
  author={Christ, Miranda and Gunn, Sam and Zamir, Or},
  journal={arXiv preprint arXiv:2306.09194},
  year={2023}
}

@article{goldwasser2022planting,
  title={Planting undetectable backdoors in machine learning models},
  author={Goldwasser, Shafi and Kim, Michael P and Vaikuntanathan, Vinod and Zamir, Or},
  journal={arXiv preprint arXiv:2204.06974},
  year={2022}
}

@inproceedings{hebert2018multicalibration,
  title={Multicalibration: Calibration for the (computationally-identifiable) masses},
  author={H{\'e}bert-Johnson, Ursula and Kim, Michael and Reingold, Omer and Rothblum, Guy},
  booktitle={International Conference on Machine Learning},
  pages={1939--1948},
  year={2018},
  organization={PMLR}
}

@inproceedings{dwork2021outcome,
  title={Outcome indistinguishability},
  author={Dwork, Cynthia and Kim, Michael P and Reingold, Omer and Rothblum, Guy N and Yona, Gal},
  booktitle={Proceedings of the 53rd Annual ACM SIGACT Symposium on Theory of Computing},
  pages={1095--1108},
  year={2021}
}

@inproceedings{jetchev2014fake,
  title={How to fake auxiliary input},
  author={Jetchev, Dimitar and Pietrzak, Krzysztof},
  booktitle={Theory of Cryptography Conference},
  pages={566--590},
  year={2014},
  organization={Springer}
}

@inproceedings{chen2018complexity,
  title={On the complexity of simulating auxiliary input},
  author={Chen, Yi-Hsiu and Chung, Kai-Min and Liao, Jyun-Jie},
  booktitle={Annual International Conference on the Theory and Applications of Cryptographic Techniques},
  pages={371--390},
  year={2018},
  organization={Springer}
}

@inproceedings{trevisan2009regularity,
  title={Regularity, boosting, and efficiently simulating every high-entropy distribution},
  author={Trevisan, Luca and Tulsiani, Madhur and Vadhan, Salil},
  booktitle={2009 24th Annual IEEE Conference on Computational Complexity},
  pages={126--136},
  year={2009},
  organization={IEEE}
}

@article{brubaker2023cryptographers,
 author  = {Brubaker, Ben},
 date    = {2023-03-02},
 title   = {In Neural Networks, Unbreakable Locks Can Hide Invisible Doors},
 journal = {Quanta magazine},
 url     = {https://www.quantamagazine.org/cryptographers-show-how-to-hide-invisible-backdoors-in-ai-20230302/},
 urldate = {2024-03-12}
}

@article{vincent2011connection,
  title={A connection between score matching and denoising autoencoders},
  author={Vincent, Pascal},
  journal={Neural computation},
  volume={23},
  number={7},
  pages={1661--1674},
  year={2011},
  publisher={MIT Press}
}

@inproceedings{sohl2015deep,
  title={Deep unsupervised learning using nonequilibrium thermodynamics},
  author={Sohl-Dickstein, Jascha and Weiss, Eric and Maheswaranathan, Niru and Ganguli, Surya},
  booktitle={International conference on machine learning},
  pages={2256--2265},
  year={2015},
  organization={PMLR}
}

@article{ho2020denoising,
  title={Denoising diffusion probabilistic models},
  author={Ho, Jonathan and Jain, Ajay and Abbeel, Pieter},
  journal={Advances in neural information processing systems},
  volume={33},
  pages={6840--6851},
  year={2020}
}

@article{hyvarinen2005estimation,
  title={Estimation of non-normalized statistical models by score matching.},
  author={Hyv{\"a}rinen, Aapo},
  journal={Journal of Machine Learning Research},
  volume={6},
  number={4},
  year={2005}
}

@book{bakry2014analysis,
  title={Analysis and geometry of Markov diffusion operators},
  author={Bakry, Dominique and Gentil, Ivan and Ledoux, Michel and others},
  volume={103},
  year={2014},
  publisher={Springer}
}

@inproceedings{chen2022sampling,
  title={Sampling is as easy as learning the score: theory for diffusion models with minimal data assumptions},
  author={Chen, Sitan and Chewi, Sinho and Li, Jerry and Li, Yuanzhi and Salim, Adil and Zhang, Anru R},
  booktitle={International Conference on Learning Representations (ICLR)},
  year={2023}
}

@misc{chewi2024log,
  author = {Chewi, Sinho},
  title = {Log-concave sampling},
  year = {2024},
  url = {https://chewisinho.github.io/main.pdf}
}

@book{sarkka2019applied,
  title={Applied stochastic differential equations},
  author={S{\"a}rkk{\"a}, Simo and Solin, Arno},
  volume={10},
  year={2019},
  publisher={Cambridge University Press}
}

@article{shah2024learning,
  title={Learning mixtures of gaussians using the ddpm objective},
  author={Shah, Kulin and Chen, Sitan and Klivans, Adam},
  journal={Advances in Neural Information Processing Systems},
  volume={36},
  year={2024}
}

@inproceedings{benton2024nearly,
  title={Nearly d-linear convergence bounds for diffusion models via stochastic localization},
  author={Benton, Joe and De Bortoli, Valentin and Doucet, Arnaud and Deligiannidis, George},
  booktitle={The Twelfth International Conference on Learning Representations},
  year={2024}
}

@article{de2022convergence,
  title={Convergence of denoising diffusion models under the manifold hypothesis},
  author={De Bortoli, Valentin},
  journal={arXiv preprint arXiv:2208.05314},
  year={2022}
}

@article{liu2022let,
  title={Let us build bridges: Understanding and extending diffusion generative models},
  author={Liu, Xingchao and Wu, Lemeng and Ye, Mao and Liu, Qiang},
  journal={arXiv preprint arXiv:2208.14699},
  year={2022}
}

@article{pidstrigach2022score,
  title={Score-based generative models detect manifolds},
  author={Pidstrigach, Jakiw},
  journal={Advances in Neural Information Processing Systems},
  volume={35},
  pages={35852--35865},
  year={2022}
}

@article{yang2022convergence,
  title={Convergence of the Inexact Langevin Algorithm and Score-based Generative Models in KL Divergence},
  author={Yang, Kaylee Yingxi and Wibisono, Andre},
  journal={arXiv preprint arXiv:2211.01512},
  year={2022}
}

@article{wibisono2024optimal,
  title={Optimal score estimation via empirical Bayes smoothing},
  author={Wibisono, Andre and Wu, Yihong and Yang, Kaylee Yingxi},
  journal={arXiv preprint arXiv:2402.07747},
  year={2024}
}

@article{lee2022convergence,
  title={Convergence for score-based generative modeling with polynomial complexity},
  author={Lee, Holden and Lu, Jianfeng and Tan, Yixin},
  journal={Advances in Neural Information Processing Systems},
  volume={35},
  pages={22870--22882},
  year={2022}
}

@inproceedings{chen2023improved,
  title={Improved analysis of score-based generative modeling: User-friendly bounds under minimal smoothness assumptions},
  author={Chen, Hongrui and Lee, Holden and Lu, Jianfeng},
  booktitle={International Conference on Machine Learning},
  pages={4735--4763},
  year={2023},
  organization={PMLR}
}

@inproceedings{chen2023score,
  title={Score approximation, estimation and distribution recovery of diffusion models on low-dimensional data},
  author={Chen, Minshuo and Huang, Kaixuan and Zhao, Tuo and Wang, Mengdi},
  booktitle={International Conference on Machine Learning},
  pages={4672--4712},
  year={2023},
  organization={PMLR}
}

@inproceedings{lee2023convergence,
  title={Convergence of score-based generative modeling for general data distributions},
  author={Lee, Holden and Lu, Jianfeng and Tan, Yixin},
  booktitle={International Conference on Algorithmic Learning Theory},
  pages={946--985},
  year={2023},
  organization={PMLR}
}

@article{mei2023deep,
  title={Deep networks as denoising algorithms: Sample-efficient learning of diffusion models in high-dimensional graphical models},
  author={Mei, Song and Wu, Yuchen},
  journal={arXiv preprint arXiv:2309.11420},
  year={2023}
}

@inproceedings{he2016deep,
  title={Deep residual learning for image recognition},
  author={He, Kaiming and Zhang, Xiangyu and Ren, Shaoqing and Sun, Jian},
  booktitle={Proceedings of the IEEE conference on computer vision and pattern recognition},
  pages={770--778},
  year={2016}
}

@article{block2020generative,
  title={Generative modeling with denoising auto-encoders and Langevin sampling},
  author={Block, Adam and Mroueh, Youssef and Rakhlin, Alexander},
  journal={arXiv preprint arXiv:2002.00107},
  year={2020}
}

@article{song2021train,
  title={How to train your energy-based models},
  author={Song, Yang and Kingma, Diederik P},
  journal={arXiv preprint arXiv:2101.03288},
  year={2021}
}

@article{song2019generative,
  title={Generative modeling by estimating gradients of the data distribution},
  author={Song, Yang and Ermon, Stefano},
  journal={Advances in neural information processing systems},
  volume={32},
  year={2019}
}

@article{saharia2022photorealistic,
  title={Photorealistic text-to-image diffusion models with deep language understanding},
  author={Saharia, Chitwan and Chan, William and Saxena, Saurabh and Li, Lala and Whang, Jay and Denton, Emily L and Ghasemipour, Kamyar and Gontijo Lopes, Raphael and Karagol Ayan, Burcu and Salimans, Tim and others},
  journal={Advances in neural information processing systems},
  volume={35},
  pages={36479--36494},
  year={2022}
}

@article{ramesh2022hierarchical,
  title={Hierarchical text-conditional image generation with clip latents},
  author={Ramesh, Aditya and Dhariwal, Prafulla and Nichol, Alex and Chu, Casey and Chen, Mark},
  journal={arXiv preprint arXiv:2204.06125},
  year={2022}
}

@inproceedings{rombach2022high,
  title={High-resolution image synthesis with latent diffusion models},
  author={Rombach, Robin and Blattmann, Andreas and Lorenz, Dominik and Esser, Patrick and Ommer, Bj{\"o}rn},
  booktitle={Proceedings of the IEEE/CVF conference on computer vision and pattern recognition},
  pages={10684--10695},
  year={2022}
}

@article{de2021diffusion,
  title={Diffusion schr{\"o}dinger bridge with applications to score-based generative modeling},
  author={De Bortoli, Valentin and Thornton, James and Heng, Jeremy and Doucet, Arnaud},
  journal={Advances in Neural Information Processing Systems},
  volume={34},
  pages={17695--17709},
  year={2021}
}

@article{roman1984umbral,
title = {The Umbral Calculus},
author = {Steven Roman},
journal = {Pure and Applied Mathematics},
publisher = {Academic Press},
year = {1984}
}

@article{feldman2017planted-clique,
author = {Feldman, Vitaly and Grigorescu, Elena and Reyzin, Lev and Vempala, Santosh S. and Xiao, Ying},
title = {Statistical algorithms and a lower bound for detecting planted cliques},
year = {2017},
issue_date = {June 2017},
publisher = {Association for Computing Machinery},
address = {New York, NY, USA},
volume = {64},
number = {2},
journal = {J. ACM},
articleno = {Article 8},
numpages = {37},
keywords = {Learning theory, statistical dimension, lower bounds, statistical algorithms, planted clique}
}

@article{berthet2013computational,
  title={Computational lower bounds for sparse PCA},
  author={Berthet, Quentin and Rigollet, Philippe},
  journal={arXiv preprint arXiv:1304.0828},
  year={2013}
}

@inproceedings{wainwright2014constrained,
  title={Constrained forms of statistical minimax: Computation, communication and privacy},
  author={Wainwright, Martin J},
  booktitle={Proceedings of the International Congress of Mathematicians},
  pages={13--21},
  year={2014}
}

@inproceedings{zhang2014lower,
  title={Lower bounds on the performance of polynomial-time algorithms for sparse linear regression},
  author={Zhang, Yuchen and Wainwright, Martin J and Jordan, Michael I},
  booktitle={Conference on Learning Theory},
  pages={921--948},
  year={2014},
  organization={PMLR}
}

@phdthesis{kunisky2022spectral,
  title={Spectral Barriers in Certification Problems},
  author={Kunisky, Dmitriy},
  year={2022},
  school={New York University}
}

@article{ma2015computational,
year = 2015,
publisher = {Institute of Mathematical Statistics},
volume = {43},
number = {3},
author = {Zongming Ma and Yihong Wu},
title = {Computational barriers in minimax submatrix detection},
journal = {The Annals of Statistics}
}

@article{zdeborova2016statistical,
  title={Statistical physics of inference: Thresholds and algorithms},
  author={Zdeborov{\'a}, Lenka and Krzakala, Florent},
  journal={Advances in Physics},
  volume={65},
  number={5},
  pages={453--552},
  year={2016},
  publisher={Taylor \& Francis}
}

@inproceedings{brennan2020reducibility,
  title={Reducibility and statistical-computational gaps from secret leakage},
  author={Brennan, Matthew and Bresler, Guy},
  booktitle={Conference on Learning Theory},
  pages={648--847},
  year={2020},
  organization={PMLR}
}

@article{wu2021statistical,
  title={Statistical problems with planted structures: Information theoretical and computational limits},
  author={Wu, Yihong and Xu, Jiaming},
  journal={Information-Theoretic Methods in Data Science},
  volume={383},
  pages={13},
  year={2021},
  publisher={Cambridge University Press}
}

@article{gamarnik2021overlap,
  title={The overlap gap property: A topological barrier to optimizing over random structures},
  author={Gamarnik, David},
  journal={Proceedings of the National Academy of Sciences},
  volume={118},
  number={41},
  pages={e2108492118},
  year={2021},
  publisher={National Acad Sciences}
}

@article{barak2014sum,
  title={Sum-of-squares proofs and the quest toward optimal algorithms},
  author={Barak, Boaz and Steurer, David},
  journal={arXiv preprint arXiv:1404.5236},
  year={2014}
}

@article{lasserre2001global,
  title={Global optimization with polynomials and the problem of moments},
  author={Lasserre, Jean B},
  journal={SIAM Journal on optimization},
  volume={11},
  number={3},
  pages={796--817},
  year={2001},
  publisher={SIAM}
}

@book{parrilo2000structured,
  title={Structured semidefinite programs and semialgebraic geometry methods in robustness and optimization},
  author={Parrilo, Pablo A},
  year={2000},
  publisher={California Institute of Technology}
}

@incollection{kunisky2022notes,
  title={Notes on computational hardness of hypothesis testing: Predictions using the low-degree likelihood ratio},
  author={Kunisky, Dmitriy and Wein, Alexander S and Bandeira, Afonso S},
  booktitle={Mathematical Analysis, its Applications and Computation: ISAAC 2019, Aveiro, Portugal, July 29--August 2},
  pages={1--50},
  year={2022},
  publisher={Springer}
}

@article{bandeira2018notes,
  title={Notes on computational-to-statistical gaps: predictions using statistical physics},
  author={Bandeira, Afonso S and Perry, Amelia and Wein, Alexander S},
  journal={Portugaliae Mathematica},
  volume={75},
  number={2},
  pages={159--186},
  year={2018}
}

@article{canonne2022short,
  title={A short note on an inequality between KL and TV},
  author={Canonne, Cl{\'e}ment L},
  journal={arXiv preprint arXiv:2202.07198},
  year={2022}
}

@book{diakonikolas2023algorithmic,
  title={Algorithmic High-Dimensional Robust Statistics},
  author={Diakonikolas, Ilias and Kane, Daniel M.},
  year={2023},
  publisher={Cambridge university press Cambridge}
}

@inproceedings{diakonikolas2017statistical,
  title={Statistical query lower bounds for robust estimation of high-dimensional gaussians and gaussian mixtures},
  author={Diakonikolas, Ilias and Kane, Daniel M and Stewart, Alistair},
  booktitle={2017 IEEE 58th Annual Symposium on Foundations of Computer Science (FOCS)},
  pages={73--84},
  year={2017},
  organization={IEEE}
}

@article{diakonikolas2024sq,
  title={SQ Lower Bounds for Non-Gaussian Component Analysis with Weaker Assumptions},
  author={Diakonikolas, Ilias and Kane, Daniel and Ren, Lisheng and Sun, Yuxin},
  journal={Advances in Neural Information Processing Systems},
  volume={36},
  year={2024}
}

@article{diakonikolas2020near,
  title={Near-optimal sq lower bounds for agnostically learning halfspaces and relus under gaussian marginals},
  author={Diakonikolas, Ilias and Kane, Daniel and Zarifis, Nikos},
  journal={Advances in Neural Information Processing Systems},
  volume={33},
  pages={13586--13596},
  year={2020}
}

@inproceedings{diakonikolas2023near,
  title={Near-optimal cryptographic hardness of agnostically learning halfspaces and relu regression under gaussian marginals},
  author={Diakonikolas, Ilias and Kane, Daniel and Ren, Lisheng},
  booktitle={International Conference on Machine Learning},
  pages={7922--7938},
  year={2023},
  organization={PMLR}
}

@inproceedings{tiegel2023hardness,
  title={Hardness of agnostically learning halfspaces from worst-case lattice problems},
  author={Tiegel, Stefan},
  booktitle={The Thirty Sixth Annual Conference on Learning Theory},
  pages={3029--3064},
  year={2023},
  organization={PMLR}
}

@inproceedings{goel2020superpolynomial,
  title={Superpolynomial lower bounds for learning one-layer neural networks using gradient descent},
  author={Goel, Surbhi and Gollakota, Aravind and Jin, Zhihan and Karmalkar, Sushrut and Klivans, Adam},
  booktitle={International Conference on Machine Learning},
  pages={3587--3596},
  year={2020},
  organization={PMLR}
}

@inproceedings{diakonikolas2020algorithms,
  title={Algorithms and sq lower bounds for pac learning one-hidden-layer relu networks},
  author={Diakonikolas, Ilias and Kane, Daniel M and Kontonis, Vasilis and Zarifis, Nikos},
  booktitle={Conference on Learning Theory},
  pages={1514--1539},
  year={2020},
  organization={PMLR}
}

@inproceedings{zadik2022lattice,
  title={Lattice-based methods surpass sum-of-squares in clustering},
  author={Zadik, Ilias and Song, Min Jae and Wein, Alexander S and Bruna, Joan},
  booktitle={Conference on Learning Theory},
  pages={1247--1248},
  year={2022},
  organization={PMLR}
}

@inproceedings{diakonikolas2022non,
  title={Non-gaussian component analysis via lattice basis reduction},
  author={Diakonikolas, Ilias and Kane, Daniel},
  booktitle={Conference on Learning Theory},
  pages={4535--4547},
  year={2022},
  organization={PMLR}
}

@article{blanchard2006search,
  title={In Search of Non-Gaussian Components of a High-Dimensional Distribution.},
  author={Blanchard, Gilles and Kawanabe, Motoaki and Sugiyama, Masashi and Spokoiny, Vladimir and M{\"u}ller, Klaus-Robert and Roweis, Sam},
  journal={Journal of Machine Learning Research},
  volume={7},
  number={2},
  year={2006}
}

@article{regev2009lattices,
  title={On lattices, learning with errors, random linear codes, and cryptography},
  author={Regev, Oded},
  journal={Journal of the ACM (JACM)},
  volume={56},
  number={6},
  pages={1--40},
  year={2009},
  publisher={ACM New York, NY, USA}
}

@phdthesis{stephens2017gaussian,
  title={On the Gaussian Measure Over Lattices.},
  author={Stephens-Davidowitz, Noah},
  year={2017},
  school={New York University, USA}
}

@article{aggarwal2019improved,
  title={An improved constant in Banaszczyk's transference theorem},
  author={Aggarwal, Divesh and Stephens-Davidowitz, Noah},
  journal={arXiv preprint arXiv:1907.09020},
  year={2019}
}

@article{banaszczyk1993new,
  title={New bounds in some transference theorems in the geometry of numbers},
  author={Banaszczyk, Wojciech},
  journal={Mathematische Annalen},
  volume={296},
  pages={625--635},
  year={1993},
  publisher={Springer-Verlag}
}

@article{peikert2016decade,
  title={A decade of lattice cryptography},
  author={Peikert, Chris},
  journal={Foundations and trends{\textregistered} in theoretical computer science},
  volume={10},
  number={4},
  pages={283--424},
  year={2016},
  publisher={Now Publishers, Inc.}
}

@inproceedings{lindner2011better,
  title={Better key sizes (and attacks) for LWE-based encryption},
  author={Lindner, Richard and Peikert, Chris},
  booktitle={Topics in Cryptology--CT-RSA 2011: The Cryptographers’ Track at the RSA Conference 2011, San Francisco, CA, USA, February 14-18, 2011. Proceedings},
  pages={319--339},
  year={2011},
  organization={Springer}
}

@inproceedings{shalev2012using,
  title={Using more data to speed-up training time},
  author={Shalev-Shwartz, Shai and Shamir, Ohad and Tromer, Eran},
  booktitle={Artificial Intelligence and Statistics},
  pages={1019--1027},
  year={2012},
  organization={PMLR}
}

@inproceedings{decatur1997computational,
  title={Computational sample complexity},
  author={Decatur, Scott and Goldreich, Oded and Ron, Dana},
  booktitle={Proceedings of the tenth annual conference on Computational learning theory},
  pages={130--142},
  year={1997}
}

@article{kearns1998efficient,
  title={Efficient noise-tolerant learning from statistical queries},
  author={Kearns, Michael},
  journal={Journal of the ACM (JACM)},
  volume={45},
  number={6},
  pages={983--1006},
  year={1998},
  publisher={ACM New York, NY, USA}
}

@inproceedings{daniely2014average,
  title={From average case complexity to improper learning complexity},
  author={Daniely, Amit and Linial, Nati and Shalev-Shwartz, Shai},
  booktitle={Proceedings of the forty-sixth annual ACM symposium on Theory of computing},
  pages={441--448},
  year={2014}
}
\end{document}